\pdfoutput=1
\documentclass{article}

 \usepackage[preprint]{neurips_2026}

\usepackage[utf8]{inputenc} 
\usepackage[T1]{fontenc}    
\usepackage{url}            
\usepackage{booktabs}       
\usepackage{amsfonts, amsmath, amssymb, amsthm, mathtools}       
\usepackage{enumitem}
\usepackage{graphicx} 
\usepackage{subcaption}
\usepackage{nicefrac}       
\usepackage{microtype}      
\usepackage{xcolor}         
\usepackage{multirow}
\usepackage{float}

\usepackage[disable,textsize=tiny]{todonotes} 
\usepackage[normalem]{ulem}

\definecolor{LinkBlue}{HTML}{1A4E8A}   
\definecolor{CiteGreen}{HTML}{2A6F55}  
\definecolor{UrlTeal}{HTML}{0B6F7A}    

\usepackage[
    colorlinks=true,
    linkcolor=LinkBlue,
    citecolor=CiteGreen,
    urlcolor=UrlTeal,
    filecolor=LinkBlue,
    breaklinks=true
]{hyperref}
\usepackage[capitalize,noabbrev]{cleveref}

\usepackage[page,header]{appendix}
\usepackage{titletoc}

\usepackage{tcolorbox}
\tcbuselibrary{listings,breakable}
\newtcblisting{promptbox}{
  breakable,
  colback=gray!5,
  colframe=gray!80,
  boxrule=0.5pt,
  arc=4pt,
  outer arc=4pt,
  left=6pt,right=6pt,top=6pt,bottom=6pt,
  listing only,
  listing options={style=promptstyle},
}
\usepackage{listings}
\lstdefinestyle{promptstyle}{
  basicstyle=\ttfamily\scriptsize,
  columns=fullflexible,
  breaklines=true,
  breakatwhitespace=true,   
  breakautoindent=false,    
  breakindent=0pt,         
  showstringspaces=false,
  keepspaces=true,
  upquote=true,
  mathescape=false,
  literate={_}{{\_}}1
}

\theoremstyle{plain}
\newtheorem{theorem}{Theorem}
\newtheorem{proposition}{Proposition}[section]
\newtheorem{lemma}[proposition]{Lemma}
\newtheorem{corollary}[proposition]{Corollary}
\newtheorem{remark}[proposition]{Remark}
\theoremstyle{definition}
\newtheorem{definition}[proposition]{Definition}
\newtheorem{assumption}[proposition]{Assumption}

\newcommand{\cX}{\mathcal{X}}
\newcommand{\cY}{\mathcal{Y}}
\newcommand{\cT}{\mathcal{T}}
\newcommand{\cE}{\mathcal{E}}
\newcommand{\cP}{\mathcal{P}}
\newcommand{\PP}{\mathbb{P}}
\newcommand{\EE}{\mathbb{E}}
\newcommand{\Ind}{\mathbf{1}}
\newcommand{\bT}{\mathbf{T}}
\newcommand{\bN}{\mathbf{N}}
\newcommand{\hatl}{\hat{\lambda}}
\newcommand{\abstain}{\bot}
\newcommand{\RR}{\mathbb{R}}
\newcommand{\eps}{\epsilon}

\title{Pause and Reflect: Conformal Aggregation for Chain-of-Thought Reasoning}

\author{%
  \textbf{Yu Gu}\textsuperscript{*}\\
  \normalfont Department of Mathematics and Statistics\\
  \normalfont McGill University\\
  \normalfont Montr\'eal, QC, Canada\\
  \texttt{yu.gu4@mail.mcgill.ca}
  \And
  \textbf{Zijun Yu}\textsuperscript{*}\\
  \normalfont Department of Mathematics and Statistics\\
  \normalfont McGill University\\
  \normalfont Montr\'eal, QC, Canada\\
  \texttt{zijun.yu@mail.mcgill.ca}
  \AND
  \textbf{Vahid Partovi Nia}\\
  \normalfont Department of Mathematics and\\
  \normalfont Industrial Engineering\\
  \normalfont Polytechnique de Montr\'eal\\
  \normalfont Montr\'eal, QC, Canada\\
  \texttt{vahid.partovinia@polymtl.ca}
  \And
  \textbf{Masoud Asgharian}\\
  \normalfont Department of Mathematics and Statistics\\
  \normalfont McGill University\\
  \normalfont Montr\'eal, QC, Canada\\
  \texttt{masoud.asgharian2@mcgill.ca}
}

\begin{document}

\maketitle
\begingroup
\renewcommand{\thefootnote}{\fnsymbol{footnote}}
\footnotetext[1]{Equal contribution.}
\endgroup

\begin{abstract}

Chain-of-thought (CoT) reasoning with self-consistency improves performance by aggregating multiple sampled reasoning paths.  
In this setting, correctness is no longer tied to a single reasoning trace but to the aggregation rule over a pool of candidate paths, making aggregation uncertainty the central challenge. 
This issue is critical where confidently incorrect answers are far more costly than abstentions.
We introduce a conformal procedure for CoT reasoning that directly addresses aggregation uncertainty. Our approach replaces majority voting with weighted score aggregation over reasoning paths and calibrates an abstention rule using conformal risk control. This approach leads to finite-sample guarantees on the confident-error rate--the  probability that the system answers and is wrong. We further identify score separability as the key condition under which abstention provably improves selective accuracy, and derive closed-form expressions that predict accuracy gains from calibration data alone. The method is fully inference-time, and requires no retraining.
Across four benchmarks, four open-source models, and three score classes, realized confident-error rates are consistent with the prescribed targets up to calibration-split and test-set variability. Our method achieves $90.1\%$ selective accuracy on GSM8K by abstaining on less than $5\%$ of problems, compared with $82\%$ accuracy under majority-voting baseline.

\end{abstract}



\section{Introduction}

Chain-of-thought (CoT) prompting \citep{wei2022chain} has become a standard way to improve reasoning in large language models. In practice, CoT-based reasoning takes several forms at inference time. Some methods improve a single reasoning trajectory by branching or revising intermediate thoughts, as in Tree-of-Thoughts \citep{yao2023tree}; others, such as self-consistency CoT, sample multiple complete reasoning traces and aggregate their final answers \citep{wang2023selfconsistency}. Recent developments include confidence-weighted self-consistency \citep{taubenfeld2025confidence} which shows that scoring paths further improves aggregation. All of these methods show that generating multiple reasoning paths improves final performance on difficult reasoning problems, especially when the task benefits from exploration, search, or diverse valid solution paths. 
It is therefore desirable to know, given a pool of sampled paths, whether the aggregated answer is reliable. Conformal prediction (CP) is an uncertainty quantification framework that provides finite-sample reliability guarantees. CP methods developed for large language models do not, however, address aggregation-level uncertainty. Prior work focuses on either final output tokens or single reasoning traces. \citet{rubin-toles2025conformal} advance reasoning-aware conformal methods by studying \emph{intra}-trace uncertainty: calibrating the internal claim-dependency structure of a single generated reasoning path to ensure factuality. \citet{yadkori2024mitigating} develop conformal abstention for LLMs, but applies it at the \emph{final-answer level}, without considering reasoning paths. Neither approach targets the aggregation uncertainty that arises when pooling multiple complete CoT paths with quality scores.

We study precisely this setting: the uncertainty in self-consistency reasoning  where the system must decide from a pool of complete CoT paths. The  self-consistency paradigm transforms reasoning into an aggregation problem. Once multiple complete paths are sampled, ``correctness'' is no longer a property of any single path alone, but rather of the rule used to combine them. The main failure mode is therefore not merely an incoherent individual path or an unreliable final response, but an unreliable consensus. Several wrong paths may dominate the path pool even when correct paths are present (Figure~\ref{fig:framework_comparison}). As Mark Twain observed, ``whenever you find yourself on the side 
of the majority, it is time to {\bf pause and reflect}''\footnote{Mark Twain (Notebook, 1904).}, highlighting the importance of questioning consensus --- and this is 
precisely the failure mode we target: the majority vote can be wrong even when correct alternatives exist.

The above observation raises two natural questions. First, \textbf{can we do better than a simple majority vote?} If each reasoning path can be assigned a quality score, from the sequence perplexity to classifier models, then majority voting should be generalized to \emph{score-weighted voting}, where high-quality paths exert greater influence on the final answer. The resulting weighted vote share of the winning answer then provides an inherent \emph{confidence measure} for the aggregated prediction, quantifying how strongly the paths support a single answer after accounting for path quality. This then leads to the second question: \textbf{given this confidence measure, when is it safe to answer and when should the system abstain?} We address this second question using the conformal risk control (CRC) \citep{angelopoulos2024conformal} by tuning a threshold on the aggregation confidence that controls the \emph{confident-error rate} with finite-sample theoretical guarantees. CRC is central to our framework, which adapts conformal prediction to the chain-of-thought setting. 
\begin{figure*}[t]
    \centering
    \includegraphics[width=0.95\textwidth]{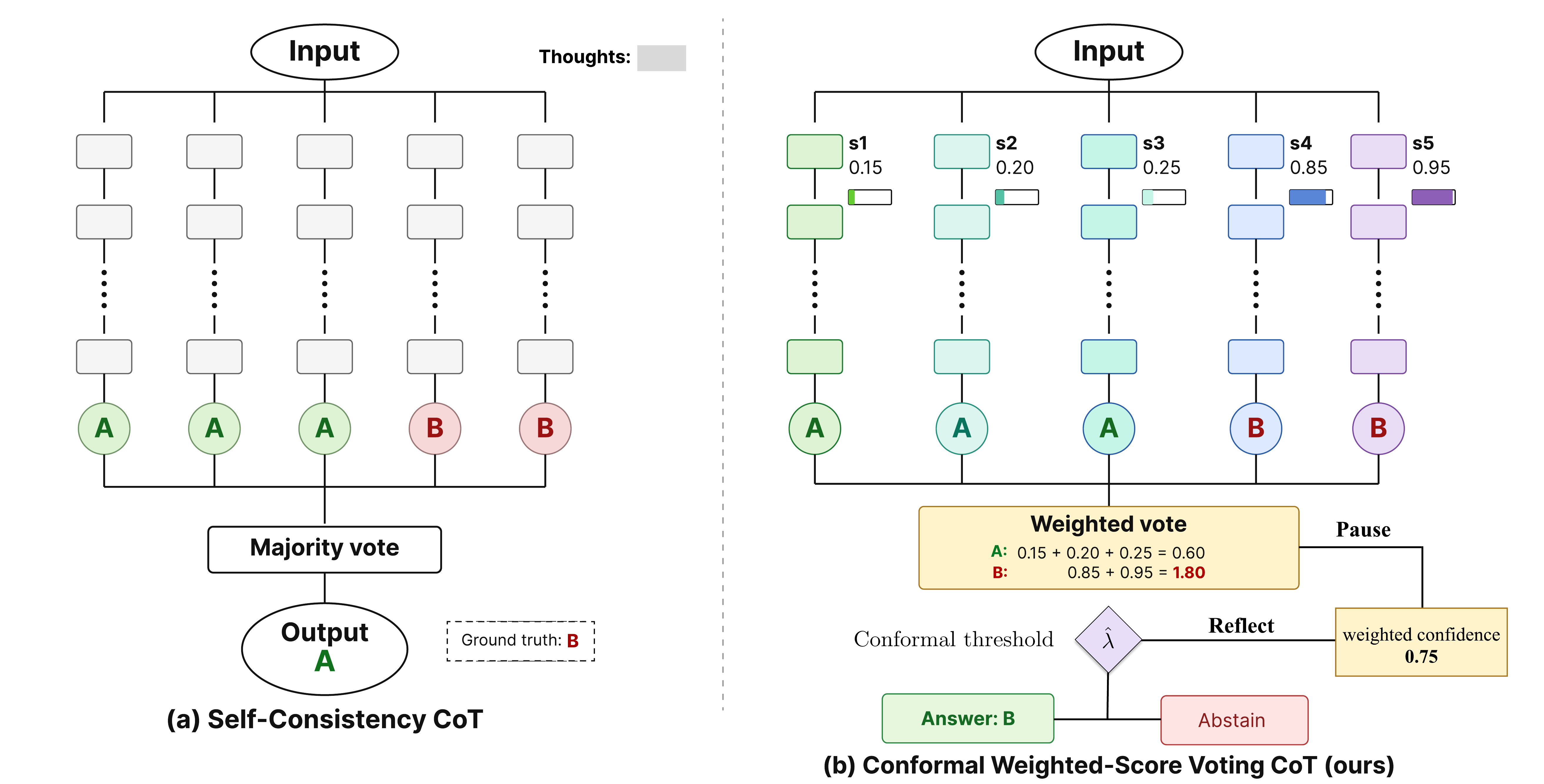}
    \caption{\textbf{Pause and Reflect.} \emph{Left:} The system commits to the majority answer even when it is wrong and right answer presents. \emph{Right (Ours):} Rather than committing unconditionally, the system \emph{pauses}: it aggregates paths by weighted voting to form a confidence measure, then \emph{reflects} on whether that confidence clears a calibrated threshold~$\hat\lambda$. If it does, the system answers; if not, it abstains.}
    \vspace{-1.5em}
    \label{fig:framework_comparison}
\end{figure*}
Our contributions are as follows: 
\begin{itemize}
    \item \textbf{A conformal framework for multi-path reasoning}  (\Cref{sec:method}): We propose a purely inference-time framework that generalizes majority voting to score-weighted aggregation over sampled CoT paths and calibrates an abstention threshold, providing finite-sample control of the confident-error rate (Theorem 2). The framework is agnostic to the choice of path-quality score, requires no parameter updates, and applies to any pretrained model capable of generating multiple reasoning paths.
    \item \textbf{Score separability and deployment-time guarantees} (\Cref{sec:separability}--\ref{sec:calibration}): We introduce score separability as a necessary and sufficient condition for abstention to improve selective accuracy, and derive a closed-form expression for accuracy gain (\Cref{thm:ca-gain}), with a converse establishing that non-separable scores cannot benefit from abstention at any threshold (\Cref{prop:nonsep}). 
    \item \textbf{Empirical validation on mathematical and question-answering benchmarks} (\Cref{sec:experiments}): Across four benchmarks and four open-source models, we confirm that the abstention threshold delivers the guaranteed confident-error rate control, that the closed-form predictor closely tracks observed accuracy, and that score choice is the dominant design decision, consistent with the theory.
\end{itemize}

\section{Related Work}

\paragraph{Conformal methods for language model outputs.}
Conformal prediction has been applied to language models primarily at the level of \emph{final outputs}. In closed-form or short-answer settings, conformal prediction constructs prediction sets over final answers~\citep{quach2024conformal}. \citet{vishwakarma2025prune} apply conformal prediction to multiple-choice settings by pruning candidate answer choices. In open-ended generation, related methods quantify uncertainty over the final response itself: filtering unsupported subclaims~\citep{mohri2024language, cherian2024large}. More recent work moves beyond final outputs to treat reasoning as the object of conformal control. \citet{rubin-toles2025conformal} introduce \emph{coherent factuality} and apply split conformal prediction to the internal dependency structure of a single generated reasoning trace, ensuring that retained claims form a logically substantiated chain; their target is intra-trace coherence, not multi-path consensus under aggregation. \citet{yadkori2024mitigating} apply CRC at the level of the final response token distribution, without access to intermediate reasoning paths.

Our setting is categorically different from all of the above. Under self-consistency, the system samples a \emph{pool} of complete reasoning paths and aggregates their answers. The central uncertainty is not the correctness of a single response or the coherence of a single trace, but whether the \emph{consensus induced by the sampled path pool is reliable}. Neither response-level nor intra-trace conformal methods address this failure mode. We instead calibrate an abstention threshold on the aggregated vote confidence to control the confident-error rate with finite-sample guarantees, operating on the pool as a whole rather than on any individual output or trace. To our knowledge, this is the first conformal framework targeting \emph{aggregation uncertainty} in multi-path reasoning.

\vspace{-1em}
\paragraph{Self-consistency, repeated sampling, and inference-time selection.}
Chain-of-thought prompting and zero-shot reasoning established reasoning traces as a useful inference-time scaffold \citep{wei2022chain, kojima2022large}. Self-consistency showed that sampling multiple reasoning paths and aggregating their answers substantially improves accuracy \citep{wang2023selfconsistency}, and confidence-weighted self-consistency further demonstrated that scoring paths improves aggregation quality~\citep{taubenfeld2025confidence, du2025crew}. This idea connects to a broader literature on repeated sampling, verifier-based selection, and inference-time scaling, including best-of-$N$  selection, process supervision, and test-time compute scaling \citep{cobbe2021training, lightman2023lets, naik2024diversity, snell2025scaling, brown2025large, kim2026scaling}. 
Other methods improve reasoning by exploring or selecting among multiple candidate paths through search or speculative decoding \citep{yao2023tree, leviathan2023fast, shi2025speccot, liao2025reward, wang2025efficient}. 
The idea of abstaining when confidence is low has a long history in selective classification \citep{geifman2017selective}, where abstention improves accuracy by rejecting low-confidence predictions; our contribution is to formalize this condition under the reasoning paths aggregation setting and derive closed-form accuracy gains. A key question left open by this line of work is when the aggregated consensus warrants a committed answer, and when the system should instead abstain.
\section{Methodology}
\label{sec:method}

Our CoT aggregation framework is designed around two principles. It should be \emph{safe}: the marginal probability of emitting a confidently wrong answer should be controlled at a user-specified level. It should be \emph{useful}: the system should answer a nontrivial fraction of inputs, and abstention should be informative so that the answered subset has higher accuracy than the unfiltered baseline. We capture these desiderata through two metrics: \emph{selective accuracy}, the accuracy on the answered subset, and the \emph{confident-error rate}, the marginal probability of committing to an incorrect answer, the safety target a reliability layer must control.

The remainder of this section develops the framework as follows. \Cref{sec:setup} sets up the score-weighted aggregation that produces the system's prediction and an associated confidence. \Cref{sec:separability} identifies \emph{score separability} as the condition under which abstention provably improves selective accuracy, and \Cref{sec:frontier} characterizes the resulting accuracy–yield trade-off. \Cref{sec:calibration} calibrates the abstention threshold to deliver the safety guarantee on the confident-error rate. Proofs are deferred to Appendix~\ref{app:proofs}.
 
\subsection{Setup: Multi-Path Reasoning as Score-Weighted Aggregation}
\label{sec:setup}

We consider a reasoning task with prompt space $\cX$ and answer space $\cY$. At inference time, a pretrained LLM samples $m \geq 1$ independent CoT paths $\bT = (T^{(1)}, \ldots, T^{(m)})$ from a prompt $X \in \cX$, with no modification to the underlying model. An extraction function $a: \cT \to \cY$ matches each free-form path to a candidate answer.
 
\paragraph{Path-quality score.} We equip the framework with a path-quality score $q: \cX \times \cT \to \mathbb{R}$ that assigns higher values to paths the system considers more reliable. The score is computed from the prompt and the path alone; it does not access the true answer. The \emph{safety} guarantee of our framework is agnostic to this choice. Whether a particular $q$ is also \emph{useful} is a diagnostic property we characterize in \Cref{sec:separability}. Concrete instantiations range from cheap signals (perplexity, self-consistency agreement) to learned signals such as process or outcome reward models and judge LLMs.

\paragraph{Score-weighted vote.} The weight function $w: \mathbb{R} \to \mathbb{R}^+$ converts a path-quality score $q$ into a vote weight. Setting $w \equiv 1$ recovers majority voting; a non-trivial monotone $w$ allows higher-quality paths to exert greater influence. For candidate answer $y \in \cY$, define the \emph{weighted vote}
\[
    V_y(X,\bT) \;\coloneqq  \; \sum_{j=1}^m w[q(X, T^{(j)})]\,\Ind\!\bigl[a(T^{(j)}) = y\bigr],
\]
from which we obtain the aggregated prediction and its corresponding \emph{vote confidence}, with ties broken by an arbitrary fixed rule:
\[
    \hat{y}(X,\bT) \;\coloneqq\; \arg\max_{y \in \cY} V_y(X,\bT),
    \qquad
    \nu(X,\bT) \;\coloneqq\; \frac{V_{\hat{y}}(X,\bT)}{\sum_{y' \in \cY} V_{y'}(X,\bT)} \in [0,1].
\]
This normalized weighted vote share of the winning answer is the central statistic of our framework: every subsequent abstention and calibration decision is made on the basis of $\nu$.
We write $p_v \coloneqq \PP(\hat{y}(X,\bT) = Y)$ for the \emph{vote accuracy}, i.e. the accuracy of unfiltered score-weighted aggregation, and adopt $p_v$ as the natural baseline against which any abstention policy is evaluated.
 
\paragraph{Abstention policy.} Given a threshold $\lambda \in [0,1]$, the framework returns $\hat{y}$ when confident and abstains otherwise:
\[
    \pi_\lambda(X,\bT) \;=\;
    \begin{cases}
        \hat{y}(X,\bT) & \text{if } \nu(X,\bT) > \lambda, \\
        \abstain & \text{otherwise.}
    \end{cases}
\]
A \emph{confident error} event occurs when $\pi_\lambda \notin \{\abstain, Y\}$, where $Y$ is the ground truth and $\abstain$ is abstention. In other words, a confident error appears when the system commits to an incorrect answer.
Confident errors carry asymmetrically higher cost than abstentions in deployment settings where an abstention can be routed to a human reviewer or a fallback policy. Thus \emph{confident error rate} is the appropriate target for a reliability layer to control.
 
\subsection{Score Separability and the Selective Accuracy Gain}
\label{sec:separability}
 
Raising $\lambda$ filters out low-confidence predictions, but whether this filtering actually improves accuracy on the retained subset depends on whether $\nu$ assigns \emph{systematically} higher values to correct answers. This is precisely the binary-discrimination setting for which classical tools apply: the Youden $J$-statistic~\citep{youden1950index} measures the vertical distance between a binary classifier's ROC curve and the chance diagonal, with $J(\lambda) > 0$ indicating that the classifier retains true positives at a strictly higher rate than false positives.

Although the discrimination is performed on $\nu$, the discriminative power of $\nu$ is inherited from the path-quality score $q$ and the way paths are aggregated. Three factors govern how the discriminative power of $q$ carries through to $\nu$. \emph{Answer concentration} is the signal majority voting already exploits: when one answer dominates the pool and is correct, $\nu$ is naturally high on correct events; when the dominant answer is wrong, $\nu$ will be misleadingly high. \emph{Score discrimination} is the path-level signal: a $q$ that assigns distinguishably higher values to correct paths improves $\nu$'s discriminativity regardless of how the answer pool happens to be distributed. \emph{Weight translation} carries this path-level signal through to the aggregation. This translation is consequential when an incorrect answer holds the plurality and only a sufficiently sensitive $w$ lets a minority of high-quality paths overturn the wrong majority. A quantitative decomposition of all three factors is given in Appendix~\ref{app:sep-factors}; for the calibration analysis that follows, their joint effect is summarized by a single statistic.
 
\begin{definition}[Score separability]
\label{def:separability}
Assume $p_v \in (0,1)$, so that both correct and incorrect predictions occur with positive probability. Define the correct and error conditional survival functions
\[
    S_{\text{cor}}(\lambda) \;\coloneqq\; \PP\bigl(\nu > \lambda \mid \hat{y} = Y\bigr),
    \qquad
    S_{\text{err}}(\lambda) \;\coloneqq\; \PP\bigl(\nu > \lambda \mid \hat{y} \neq Y\bigr),
\]
and the \emph{separability gap} $\Delta(\lambda) \coloneqq S_{\text{cor}}(\lambda) - S_{\text{err}}(\lambda)$.
We call the score \emph{separable at $\lambda$} if $\Delta(\lambda) > 0$.
\end{definition}
 
The next theorem converts this condition into an exact, closed-form accuracy statement.

\begin{theorem}[Accuracy Gain]
\label{thm:ca-gain}
Let $A_c(\lambda) \coloneqq \PP(\hat{y} = Y \mid \nu > \lambda)$ denote the \emph{selective accuracy} of the non-abstained predictions. If $\nu$ is separable at $\lambda$, then
\begin{equation}
\label{eq:ca-gain}
    A_c(\lambda) - p_v \;=\; \frac{p_v(1 - p_v)\,\Delta(\lambda)}{S_{\text{cor}}(\lambda) - (1 - p_v)\,\Delta(\lambda)} \;>\; 0.
\end{equation}
\end{theorem}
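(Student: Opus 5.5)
The plan is a direct Bayes-rule computation on the events $C \coloneqq \{\hat{y} = Y\}$ and $R_\lambda \coloneqq \{\nu > \lambda\}$. First, since $p_v \in (0,1)$, both conditional survival functions in \Cref{def:separability} are well defined. By the law of total probability,
\[
\PP(R_\lambda) = p_v S_{\text{cor}}(\lambda) + (1-p_v) S_{\text{err}}(\lambda).
\]
Separability at $\lambda$ gives $S_{\text{cor}}(\lambda) > S_{\text{err}}(\lambda) \ge 0$, so $S_{\text{cor}}(\lambda) > 0$ and hence $\PP(R_\lambda) \ge p_v S_{\text{cor}}(\lambda) > 0$. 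Thus $A_c(\lambda)$ is well defined. Bayes' rule then gives
\[
A_c(\lambda) = \frac{p_v S_{\text{cor}}(\lambda)}{p_v S_{\text{cor}}(\lambda) + (1-p_v) S_{\text{err}}(\lambda)}.
\]

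Next, I substitute $S_{\text{err}} = S_{\text{cor}} - \Delta$ in the denominator. This gives
\[
p_v S_{\text{cor}} + (1-p_v)(S_{\text{cor}} - \Delta) = S_{\text{cor}} - (1-p_v)\Delta,
\]
so that
\[
A_c(\lambda) = \frac{p_v S_{\text{cor}}(\lambda)}{S_{\text{cor}}(\lambda) - (1-p_v)\Delta(\lambda)}.
\]
Subtracting $p_v$ over the common denominator leaves the numerator
\[
p_v S_{\text{cor}} - p_v\bigl(S_{\text{cor}} - (1-p_v)\Delta\bigr) = p_v(1-p_v)\Delta,
\]
which is exactly \eqref{eq:ca-gain}.

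Finally, for strict positivity I use three facts. The denominator equals $\PP(R_\lambda) > 0$ by the first step. The factor $p_v(1-p_v)$ is positive because $p_v \in (0,1)$. The gap $\Delta(\lambda)$ is positive by separability. Hence the difference is strictly positive.

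The only point needing care is confirming that the denominator is nonzero, that is, that $\PP(\nu > \lambda) > 0$. As shown above, this follows from separability itself, which forces $S_{\text{cor}}(\lambda) > 0$. The rest is routine algebra.
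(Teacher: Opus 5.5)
Your proposal is correct and follows essentially the same route as the paper: Bayes' rule for $A_c(\lambda)$, the substitution $S_{\text{err}} = S_{\text{cor}} - \Delta$ to rewrite the denominator as $S_{\text{cor}}(\lambda) - (1-p_v)\Delta(\lambda)$, and subtraction of $p_v$ to leave the numerator $p_v(1-p_v)\Delta(\lambda)$. Your positivity step is slightly cleaner than the paper's: you identify the denominator with $\PP(\nu > \lambda) \ge p_v S_{\text{cor}}(\lambda) > 0$, which also shows $A_c(\lambda)$ is well defined, whereas the paper bounds the denominator below by $S_{\text{err}}(\lambda) \ge 0$ and then excludes the degenerate case separately.
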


Equation~\eqref{eq:ca-gain} is informative in three ways. First, the gain is \emph{multiplicatively} driven by the separability gap $\Delta(\lambda)$: a score with $\Delta(\lambda) = 0$ produces zero accuracy gain regardless of how many paths are sampled or how aggressively $\lambda$ is set. Second, the $p_v(1-p_v)$ prefactor means the potential gain peaks at intermediate task difficulty and vanishes at both saturation ($p_v \to 1$) and at chance ($p_v \to 0$). Third, every quantity on the right-hand side of~\eqref{eq:ca-gain} is estimable from calibration data without access to test labels; we formalize this as an operational tool.

\begin{corollary}[Closed-form accuracy predictor]
\label{cor:predictor}
Let $\hat S_{\text{cor}}(\lambda)$, $\hat S_{\text{err}}(\lambda)$, $\hat p_v$ denote the empirical analogues of $S_{\text{cor}}, S_{\text{err}}, p_v$ computed on a labeled calibration sample of size $n$, and define the plug-in predictor
\begin{equation}
\label{eq:ac-hat}
    \widehat A_c(\lambda) \;\coloneqq\; \hat p_v + \frac{\hat p_v(1 - \hat p_v)\,\hat\Delta(\lambda)}{\hat S_{\text{cor}}(\lambda) - (1 - \hat p_v)\,\hat\Delta(\lambda)},
    \qquad \hat\Delta(\lambda) \coloneqq \hat S_{\text{cor}}(\lambda) - \hat S_{\text{err}}(\lambda).
\end{equation}
Then $\widehat A_c(\lambda) \xrightarrow{\;\mathrm{a.s.}\;} A_c(\lambda)$ as $n \to \infty$, uniformly over any compact interval of $\lambda$ on which $\Delta(\lambda) \geq \Delta_0 > 0$.
\end{corollary}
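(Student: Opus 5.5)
The plan is to treat $\widehat A_c$ as a continuous function of three empirical quantities, each converging uniformly almost surely, and to show that the function is uniformly continuous on the relevant range. The first step is an algebraic identity behind \Cref{thm:ca-gain}. The denominator in \eqref{eq:ca-gain} equals
\[
S_{\text{cor}}(\lambda) - (1-p_v)\Delta(\lambda) = p_v S_{\text{cor}}(\lambda) + (1-p_v)S_{\text{err}}(\lambda) = \PP(\nu>\lambda),
\]
so $p_v$ plus the gain term is exactly $p_v S_{\text{cor}}(\lambda)/\PP(\nu>\lambda)$, which is $A_c(\lambda)$ by Bayes. The same algebra holds verbatim for the empirical quantities whenever $\hat p_v\in(0,1)$. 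Thus
\[
\widehat A_c(\lambda) = \frac{\hat p_v \hat S_{\text{cor}}(\lambda)}{\hat p_v \hat S_{\text{cor}}(\lambda) + (1-\hat p_v)\hat S_{\text{err}}(\lambda)} = \frac{G_n(\lambda)}{F_n(\lambda)},
\]
where $G_n(\lambda)=n^{-1}\sum_i \Ind[\hat y_i=Y_i,\nu_i>\lambda]$ and $F_n(\lambda)=n^{-1}\sum_i\Ind[\nu_i>\lambda]$ are empirical survival-type functions of the i.i.d.\ calibration pairs $(\nu_i,\Ind[\hat y_i=Y_i])$. Working with $G_n/F_n$ avoids dividing by $\hat p_v$ or $1-\hat p_v$ separately.

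The second step applies the Glivenko--Cantelli theorem. It gives $\sup_\lambda|F_n(\lambda)-\PP(\nu>\lambda)|\to0$ a.s., and the same for $G_n$, since $G_n$ is the empirical survival function of $\nu$ restricted to the sub-probability of correct events; equivalently, apply Glivenko--Cantelli to the class $\{\Ind[c=1,v>\lambda]\}$, which is VC. The strong law also gives $\hat p_v\to p_v$ a.s., so eventually $\hat p_v\in(0,1)$ and the identity above applies.

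The third step, which I expect to be the main obstacle, is bounding the denominator away from zero uniformly on the compact interval $I$. Separability alone gives $\Delta\ge\Delta_0$ on $I$, hence $S_{\text{cor}}(\lambda)\ge\Delta_0$ and
\[
\PP(\nu>\lambda)\ge p_v S_{\text{cor}}(\lambda)\ge p_v\Delta_0=:c>0
\]
for all $\lambda\in I$. This is exactly why the hypothesis $\Delta\ge\Delta_0$ is needed, with no continuity of the survival functions required. On the a.s.\ event where the uniform convergence holds, eventually $\inf_{I}F_n\ge c/2$. Then
\[
\left|\frac{G_n}{F_n}-\frac{G}{F}\right|\le \frac{|G_n-G|}{F_n}+\frac{G\,|F-F_n|}{F_nF}\le \frac{2}{c}\sup|G_n-G|+\frac{2}{c^2}\sup|F_n-F|\to0
\]
uniformly on $I$, which proves the corollary. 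Compactness of $I$ is not strictly needed beyond ensuring the lower bound is uniform, and Glivenko--Cantelli is already uniform over all $\lambda$.
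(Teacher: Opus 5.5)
Your proposal is correct and follows essentially the same route as the paper. Both arguments rewrite $\widehat A_c$ as the ratio $G_n/F_n$ of empirical joint survival functions, apply Glivenko--Cantelli to numerator and denominator, lower-bound the yield by $p_v\Delta_0$ on the interval, and control the quotient difference; the paper additionally uses $G\le F$ to get the constant $2/c$ in place of your $2/c^2$, and your strict inequalities $\nu_i>\lambda$ and explicit check that $\hat p_v\in(0,1)$ eventually are, if anything, more careful than the paper's write-up, whose $\hat G,\hat H$ are stated with $\ge$.
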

 
Corollary~\ref{cor:predictor} elevates~\eqref{eq:ca-gain} from a descriptive identity to a \emph{planning tool}: a practitioner can sweep $\lambda$ on calibration data and read off the predicted selective accuracy at every operating point, without running additional test-time evaluation. The uniform convergence rests on Glivenko--Cantelli applied to $\hat S_{\text{cor}}, \hat S_{\text{err}}$ and continuity of the right-hand side of~\eqref{eq:ca-gain} away from $\Delta = 0$, and is therefore distribution-free. A finite-sample concentration bound requires additional assumptions, stated and justified in Appendix~\ref{app:ca-bound}.
 
A converse result confirms that separability is also \emph{necessary} for abstention to help.
 
\begin{proposition}[Non-separability lower bound]
\label{prop:nonsep}
If $\Delta(\lambda) \leq 0$, then $A_c(\lambda) \leq p_v$, with equality iff $\Delta(\lambda) = 0$. In particular, a score with $\Delta(\lambda) \leq 0$ at every $\lambda > 0$ cannot improve selective accuracy via abstention at any threshold.
\end{proposition}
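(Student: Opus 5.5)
The plan is to write the selective accuracy through Bayes' rule and compare it with $p_v$ directly. Let $C \coloneqq \{\hat y = Y\}$, so $\PP(C) = p_v \in (0,1)$. Conditioning on $C$ and its complement gives
\[
A_c(\lambda) \;=\; \frac{p_v\, S_{\text{cor}}(\lambda)}{p_v\, S_{\text{cor}}(\lambda) + (1-p_v)\, S_{\text{err}}(\lambda)},
\]
which is well defined whenever $\PP(\nu > \lambda) > 0$. I would take this as a standing assumption, since $A_c(\lambda)$ is otherwise undefined. Substituting $S_{\text{err}} = S_{\text{cor}} - \Delta$ turns the denominator into $D(\lambda) \coloneqq S_{\text{cor}}(\lambda) - (1-p_v)\Delta(\lambda)$. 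This is exactly the denominator appearing in \Cref{thm:ca-gain}. Therefore $A_c(\lambda) = p_v S_{\text{cor}}(\lambda)/D(\lambda)$, and subtracting $p_v$ gives
\[
A_c(\lambda) - p_v \;=\; \frac{p_v\bigl(S_{\text{cor}} - D\bigr)}{D} \;=\; \frac{p_v(1-p_v)\,\Delta(\lambda)}{D(\lambda)}.
\]
This identity holds for every $\lambda$, not only under separability, so I can simply reuse the identity \eqref{eq:ca-gain} from the proof of \Cref{thm:ca-gain}.

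The sign analysis then reduces to the sign of the denominator. Because $D(\lambda) = p_v S_{\text{cor}}(\lambda) + (1-p_v) S_{\text{err}}(\lambda) = \PP(\nu > \lambda)$, it is a mixture of two nonnegative survival probabilities and is strictly positive under the standing assumption. The prefactor $p_v(1-p_v)$ is strictly positive because $p_v \in (0,1)$. Hence $\operatorname{sign}(A_c(\lambda) - p_v) = \operatorname{sign}(\Delta(\lambda))$. It follows that $\Delta(\lambda) \le 0$ implies $A_c(\lambda) \le p_v$, and that equality holds exactly when the numerator vanishes, i.e.\ iff $\Delta(\lambda) = 0$.

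For the ``in particular'' clause, I would apply the pointwise statement at each $\lambda > 0$ with $\PP(\nu > \lambda) > 0$, which gives $A_c(\lambda) \le p_v$ throughout. At $\lambda = 0$ essentially nothing is filtered, since $\nu > 0$ whenever the weights are positive, so $A_c(0) = p_v$ and there is no gain there either.

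The only delicate point is the degenerate case $\PP(\nu > \lambda) = 0$. I would handle it by stating that the proposition concerns thresholds at which the system answers with positive probability, or equivalently by adopting the convention that $A_c$ is undefined there. The remaining steps are routine algebra.
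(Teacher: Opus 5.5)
Your proof is correct and is essentially the argument the paper intends. The paper gives no separate proof of this proposition, but the algebra in its proof of Theorem~\ref{thm:ca-gain} never uses $\Delta(\lambda)>0$ until the final sign step, so $A_c(\lambda)-p_v = p_v(1-p_v)\Delta(\lambda)/D(\lambda)$ holds at every $\lambda$. Your identification $D(\lambda)=\PP(\nu>\lambda)>0$ is the right way to close the sign argument, since the paper's own positivity bound $D(\lambda)\ge S_{\text{err}}(\lambda)$ relies on $\Delta(\lambda)\ge 0$ and would not carry over to the non-separable case.
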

 

Theorem~\ref{thm:ca-gain} and Proposition~\ref{prop:nonsep} together identify the separability gap as the single summary statistic governing whether abstention is beneficial. When $\Delta(\lambda) = 0$, abstention cannot improve accuracy beyond the unfiltered vote $p_v$, so withholding yields no gain. This is precisely the failure mode that a discriminative path-quality score addresses: by assigning higher weight to correct paths, $q$ breaks the symmetry between correct and incorrect high-confidence predictions and opens a nonzero separability gap.

\subsection{Accuracy--Yield Frontier}
\label{sec:frontier}

Abstention improves selective accuracy but at a cost: every abstained input is a question the system declines to answer. We call the answered fraction \emph{yield} $Y(\lambda) \coloneqq \PP(\nu > \lambda)$, and the accuracy--yield trade-off is the central operational question for deployment.
Separability guarantees that $A_c(\lambda) > p_v$ whenever $\Delta(\lambda) > 0$, but it does not say whether accuracy increases as yield decreases. That requires understanding the local behavior of the confidence distribution at each margin level $\lambda$, which we characterize through the hazard rate. Among retained predictions at margin $\lambda$, the hazard rate measures how quickly an additional increase in $\lambda$ thins them out:

$$h_{\mathrm{cor}}(\lambda) \;\coloneqq\; \lim_{d\lambda \to 0^+} \frac{1}{d\lambda} \, P\!\left(\nu \in [\lambda, \lambda + d\lambda] \,\big|\, \nu > \lambda,\ \hat y = Y\right).$$

with $h_{\mathrm{err}}(\lambda)$ defined analogously on incorrect predictions for continuous variable $\nu$; see Appendix~\ref{app:app-hazard} for discrete case and further details. This motivates a stricter local condition: are incorrect predictions filtered out faster than correct ones at margin $\lambda$?

\begin{definition}[Strict score separability]
\label{def:strict-separability}
The score is \emph{strictly separable} at $\lambda$ if the hazard gap $\delta(\lambda) \coloneqq h_{\mathrm{err}}(\lambda) - h_{\mathrm{cor}}(\lambda)$ is strictly positive.
\end{definition}

Strict separability is the local refinement: it requires that errors thin out faster than correct predictions at each margin level, not merely that the upper tail above $\lambda$ contains more correct mass overall.

\begin{proposition} [Accuracy--yield trade-off] 
\label{prop:pareto}
Suppose $\Delta(\lambda) > 0$ for all $\lambda \in (0, \lambda_{\max})$. Then $A_c(\lambda) > p_v$ and $Y(\lambda)$ is non-increasing on $(0, \lambda_{\max})$.

If additionally $\delta(\lambda) > 0$ for all $\lambda \in (0, \lambda_{\max})$, then $A_c(\lambda)$ is also non-decreasing.
\end{proposition}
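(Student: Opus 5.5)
The plan has two steps: obtain the first part directly from Theorem~\ref{thm:ca-gain} and monotonicity of a survival function, then prove the second part by differentiating $A_c$ and reading its sign off the hazard gap.

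\textbf{First part.} For each fixed $\lambda \in (0,\lambda_{\max})$ we have $\Delta(\lambda)>0$, so $\nu$ is separable at $\lambda$. Theorem~\ref{thm:ca-gain} then gives $A_c(\lambda) > p_v$ pointwise. Monotonicity of yield holds for any random variable: $Y(\lambda)=\PP(\nu>\lambda)$ is a survival function. For $\lambda_1<\lambda_2$ we have $\{\nu>\lambda_2\}\subseteq\{\nu>\lambda_1\}$, hence $Y(\lambda_2)\le Y(\lambda_1)$.

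\textbf{Second part.} By Bayes' rule,
\[
A_c(\lambda)=\frac{p_v S_{\text{cor}}(\lambda)}{p_v S_{\text{cor}}(\lambda)+(1-p_v)S_{\text{err}}(\lambda)}
=\Bigl(1+\tfrac{1-p_v}{p_v}\,R(\lambda)\Bigr)^{-1},
\qquad R(\lambda)\coloneqq \frac{S_{\text{err}}(\lambda)}{S_{\text{cor}}(\lambda)}.
\]
Both survival functions are positive on the interval, since $\Delta(\lambda)>0$ forces $S_{\text{cor}}(\lambda)>0$. The key step is therefore to show that $R$ is non-increasing. In the continuous case, $h_{\cdot}=-S_{\cdot}'/S_{\cdot}$, so
\[
\frac{d}{d\lambda}\log R(\lambda)=-h_{\text{err}}(\lambda)+h_{\text{cor}}(\lambda)=-\delta(\lambda)<0 .
\]
Integrating from $\lambda_1$ to $\lambda_2$ gives $R(\lambda_2)=R(\lambda_1)\exp\!\bigl(-\int_{\lambda_1}^{\lambda_2}\delta\bigr)\le R(\lambda_1)$. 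Since $A_c$ is a decreasing function of $R$, $A_c$ is non-decreasing on $(0,\lambda_{\max})$. Equivalently, one can write $S_{\cdot}(\lambda)=S_{\cdot}(\lambda_1)\exp(-\int_{\lambda_1}^{\lambda} h_{\cdot})$ and compare the two integrals directly.

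\textbf{Main obstacle.} The delicate point is regularity. The hazard rates are defined as limits, which presupposes absolutely continuous conditional laws of $\nu$. In practice $\nu$ takes finitely many values, because it is a ratio of weighted counts over $m$ paths. I would handle the discrete case, following Appendix~\ref{app:app-hazard}, by replacing the derivative with the product-limit representation:
\[
S_{\cdot}(\lambda_2)=S_{\cdot}(\lambda_1)\prod_{\lambda_1<t\le\lambda_2}\bigl(1-h_{\cdot}(t)\bigr),
\]
where $h_{\cdot}(t)$ is the discrete hazard at atom $t$. The condition $h_{\text{err}}(t)>h_{\text{cor}}(t)$ at each atom gives $1-h_{\text{err}}(t)<1-h_{\text{cor}}(t)$ factor by factor, so the ratio $R$ is again non-increasing. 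Between atoms $R$ is constant, which is why the conclusion is stated as ``non-decreasing'' rather than strictly increasing. If $S_{\text{err}}$ hits zero, then $R=0$ from that point on and $A_c\equiv 1$, which is consistent with the claim.
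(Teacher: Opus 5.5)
Your proposal is correct and follows the paper's proof. Part one uses Theorem~\ref{thm:ca-gain} together with monotonicity of the survival function $Y(\lambda)$. Part two inverts the Bayes form so that $A_c$ is non-decreasing exactly when $S_{\text{err}}/S_{\text{cor}}$ is non-increasing, and then uses the hazard-recovery identity $\log(S_{\text{err}}/S_{\text{cor}}) = -\int \delta$ under absolute continuity.

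Your product-limit argument for the discrete case goes slightly beyond the paper's proof. That proof assumes densities only and leaves the discrete analogue to a remark in Appendix~\ref{app:app-hazard}.
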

Proposition \ref{prop:pareto} implies that under strict separability, the map $\lambda \mapsto (Y(\lambda), A_c(\lambda))$ has an interesting property which is aligned with the notion of Pareto efficiency. In other words, no operating point on the frontier dominates another: raising $\lambda$ increases accuracy and decreases yield.

The frontier geometry restates the central role of separability: without it, raising $\lambda$ trades yield for no gain and the frontier collapses to the horizontal line $A_c = p_v$; with it, the frontier curves upward as yield decreases. This geometry has a useful consequence: there is no universally optimal threshold, but every point on the frontier is achievable by choosing $\lambda$ appropriately. The next section shows how to select a specific operating point from finite calibration data, with a guarantee on the confident-error rate.

\subsection{Calibrating the Confidence Threshold}
\label{sec:calibration}
 
The vote confidence $\nu$ supplies the right statistic for abstention, and any $\lambda$ with $\Delta(\lambda) > 0$ yields a gain given by~\eqref{eq:ca-gain}. It remains to pick a specific operating point from finite calibration data. We use conformal risk control (CRC)~\citep{angelopoulos2024conformal}: a principled calibration tool that bounds the confident-error rate at a user-specified level and fits naturally with exchangeable calibration data. In our setup, CRC goes beyond the usual error-rate bound: it selects a threshold that lands on the frontier of \Cref{prop:pareto}, navigating the reliability guarantee and the accuracy--yield trade-off through a single knob.
 
\begin{assumption}[Calibration exchangeability]
\label{assum:exch}
The pairs $(X_i, Y_i)_{i=1}^{n+1}$ are exchangeable, and the generation of path pools $\bT_i$ and scores $q(X_i, T_i^{(j)})$ depends only on $X_i$, never on $Y_i$.
\end{assumption}
 
The second clause holds automatically for any autoregressive language model. Exchangeability is the minimal distributional condition for conformal guarantees; it does not require the calibration prompts to be i.i.d.
 
\paragraph{CRC-calibrated threshold.} Given a calibration set $\{(X_i, Y_i, \bT_i)\}_{i=1}^n$ and a target confident-error rate $\alpha \in (0,1)$, define the per-example loss $L_i(\lambda) \coloneqq \Ind[\pi_\lambda(X_i, \bT_i) \notin \{\abstain, Y_i\}]$ and its empirical risk $\hat{R}(\lambda) \coloneqq \tfrac{1}{n} \sum_{i=1}^n L_i(\lambda)$. Because raising $\lambda$ only converts predictions into abstentions, $L_i(\lambda)$ is non-increasing in $\lambda$. The CRC threshold is
\begin{equation}
\label{eq:crc-threshold}
    \hatl \;=\; \inf\Bigl\{\lambda \in [0,1] \;|\; \hat R(\lambda) \leq \alpha - \frac{1-\alpha}{n} \Bigr\}.
\end{equation}
 
\begin{theorem}[Confident-error rate control]
\label{thm:crc}
Suppose Assumption~\ref{assum:exch} holds. Then the policy $\pi_{\hatl}$ with the threshold $\hatl$ defined by ~\eqref{eq:crc-threshold} satisfies
\[
\PP\left(\pi_{\hatl}(X_{n+1}, \bT_{n+1}) \notin \{\abstain, Y_{n+1}\} \right) \leq \alpha.
\]
\end{theorem}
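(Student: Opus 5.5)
The plan is to reduce Theorem~\ref{thm:crc} to the generic conformal risk control guarantee of \citet{angelopoulos2024conformal}, which we reprove in this setting using an oracle threshold built from all $n+1$ losses. For each $i \in \{1,\dots,n+1\}$ let $L_i(\lambda) \coloneqq \Ind[\pi_\lambda(X_i,\bT_i)\notin\{\abstain,Y_i\}]$, so that $L_i(\lambda) = \Ind[\nu(X_i,\bT_i)>\lambda]\,\Ind[\hat y(X_i,\bT_i)\neq Y_i]$.

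We first record three structural facts.
\begin{enumerate}[label=(\roman*)]
\item Each $L_i$ takes values in $[0,1]$ and is non-increasing in $\lambda$.
\item Each $L_i$ is right-continuous in $\lambda$, because $\{\nu>\lambda\}$ is an open-upper-set indicator.
\item $L_i(1)=0$, since $\nu\le 1$.
\end{enumerate}
By Assumption~\ref{assum:exch}, the tuples $(X_i,Y_i,\bT_i)$ are exchangeable: the path pools and scores are generated from $X_i$ alone, conditionally independently of $Y_i$. Hence the loss functions $(L_1,\dots,L_{n+1})$ form an exchangeable sequence of random functions.

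Next define the oracle threshold
\[
\hat\lambda' \;\coloneqq\; \inf\Bigl\{\lambda\in[0,1] : \tfrac{1}{n+1}\textstyle\sum_{i=1}^{n+1} L_i(\lambda)\le \alpha\Bigr\}.
\]
This is a symmetric function of the $n+1$ losses, and by fact (iii) the set is nonempty. We compare it with $\hatl$. Since $L_{n+1}\le 1$,
\[
\tfrac{1}{n+1}\textstyle\sum_{i=1}^{n+1}L_i(\lambda)\;\le\;\tfrac{n}{n+1}\hat R(\lambda)+\tfrac{1}{n+1}.
\]
Whenever $\hat R(\lambda)\le \alpha-\frac{1-\alpha}{n}$, the right side is at most $\alpha$. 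So every $\lambda$ feasible for~\eqref{eq:crc-threshold} is feasible for $\hat\lambda'$, which gives $\hat\lambda'\le\hatl$. Monotonicity (fact (i)) then yields $L_{n+1}(\hatl)\le L_{n+1}(\hat\lambda')$.

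It remains to bound $\EE[L_{n+1}(\hat\lambda')]$. By right-continuity (fact (ii)) together with monotonicity, the infimum is attained, so $\frac{1}{n+1}\sum_i L_i(\hat\lambda')\le\alpha$. Since $\hat\lambda'$ is invariant under permutations of the indices, exchangeability implies that $\EE[L_{n+1}(\hat\lambda')]=\EE[L_i(\hat\lambda')]$ for every $i$. Therefore
\[
\EE[L_{n+1}(\hat\lambda')]=\EE\Bigl[\tfrac{1}{n+1}\textstyle\sum_i L_i(\hat\lambda')\Bigr]\le\alpha.
\]
Chaining this with the previous step gives $\PP(\pi_{\hatl}\notin\{\abstain,Y_{n+1}\})=\EE[L_{n+1}(\hatl)]\le\alpha$.

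We expect the main obstacle to be the edge cases rather than the core argument.
\begin{itemize}
\item \emph{Attainment of the infimum.} Because the losses are step functions in $\lambda$, we must verify right-continuity from the strict inequality $\nu>\lambda$ to ensure the infimum is achieved.
\item \emph{Empty feasible set.} When $\alpha-\frac{1-\alpha}{n}<0$, no $\lambda$ may satisfy~\eqref{eq:crc-threshold}, and we adopt the convention $\hatl=1$. Since $L_{n+1}(1)=0$, the bound holds trivially in that case.
\item \emph{Tie-breaking.} Ties in $\hat y$ must be broken by a fixed deterministic rule, so that each $L_i$ is a measurable function of $(X_i,Y_i,\bT_i)$ alone. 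This is what preserves exchangeability of the losses.
\end{itemize}
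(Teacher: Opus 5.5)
Your proposal is correct and follows essentially the same argument as the paper: the oracle threshold on all $n+1$ losses, attainment of the infimum by right-continuity, the symmetry/exchangeability step, and the feasibility comparison $\frac{n}{n+1}\bigl(\alpha-\frac{1-\alpha}{n}\bigr)+\frac{1}{n+1}=\alpha$ giving $\hat\lambda'\le\hatl$ and hence $L_{n+1}(\hatl)\le L_{n+1}(\hat\lambda')$, with the same $\hatl=1$ convention for the empty case. One small point of precision: exchangeability of the augmented tuples $(X_i,Y_i,\bT_i)$ requires the path pools to be drawn from a common generation procedure independently across examples given the prompts, not merely independently of $Y_i$; the paper states this explicitly as the condition it assumes.
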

Theorem~\ref{thm:crc} guarantees that the fraction of inputs on which the system answers incorrectly is controlled at $\alpha$, regardless of task difficulty or score choice. This is a validity guarantee, rather than utility: the degenerate policy that always abstains trivially satisfies the bound. What makes $\hatl$ operationally meaningful is separability. When $\Delta(\hatl) > 0$, the threshold preferentially filters incorrect predictions, so the answered subset is strictly more accurate than the unfiltered vote, quantified by Theorem~\ref{thm:ca-gain}. Together, the two theorems deliver the framework's core promise: a certified upper bound on confident errors, achieved at an operating point where abstentions are informative rather than conservative.

\section{Experiments}
\label{sec:experiments}

Our experiments are designed to answer three questions. \textbf{(Q1)} Does the CRC-calibrated threshold $\hatl$ deliver the claimed confident-error rate control (Theorem~\ref{thm:crc})? \textbf{(Q2)} Does the closed-form accuracy gain of Theorem~\ref{thm:ca-gain} match observed selective accuracy? \textbf{(Q3)} How does the path-score choice affect separability and, consequently, the achievable accuracy--yield frontier?

\subsection{Experimental Setup}
\label{sec:exp-setup}

We evaluate four open-source reasoning models spanning two size regimes: DeepSeek-R1-Distill-Qwen-1.5B~\citep{guo2025deepseek}, Qwen3-1.7B~\citep{qwen3technicalreport}, Gemma-4-E4B~\citep{Gemma2024}, and Qwen3-8B, each generating $m=16$ independent CoT paths per prompt. We study four benchmarks: GSM8K~\citep{cobbe2021training} (grade-school math), MATH~\citep{hendrycksmath2021} (competition math), MATH-Hard (the difficulty-5 subset), and HotpotQA~\citep{yang2018hotpotqa} (multi-hop QA, distractor setting), yielding ten dataset--model configurations (Table~\ref{tab:exp-grid} in Appendix~\ref{app:exp-grid}). For each configuration, $n_\mathrm{cal}=200$ labeled prompts are held out for calibration and the remainder serve as test data. We evaluate three path-quality score families: \emph{Reward} (a reward model score), \emph{Judge} (an LLM judge), and \emph{Perplexity} (average per-token log-likelihood) (Appendix~\ref{app:score-details}).
We compare against three baselines: majority voting (MV), score-weighted voting (i.e., our aggregation without abstention), and best-of-$m$. Score-weighted voting corresponds to the $p_v$ baseline in all tables.

\subsection{Confident-Error Rate Control (Q1)}
\label{sec:exp-q1}

\begin{figure}[t]
    \centering
    \includegraphics[width=\textwidth]{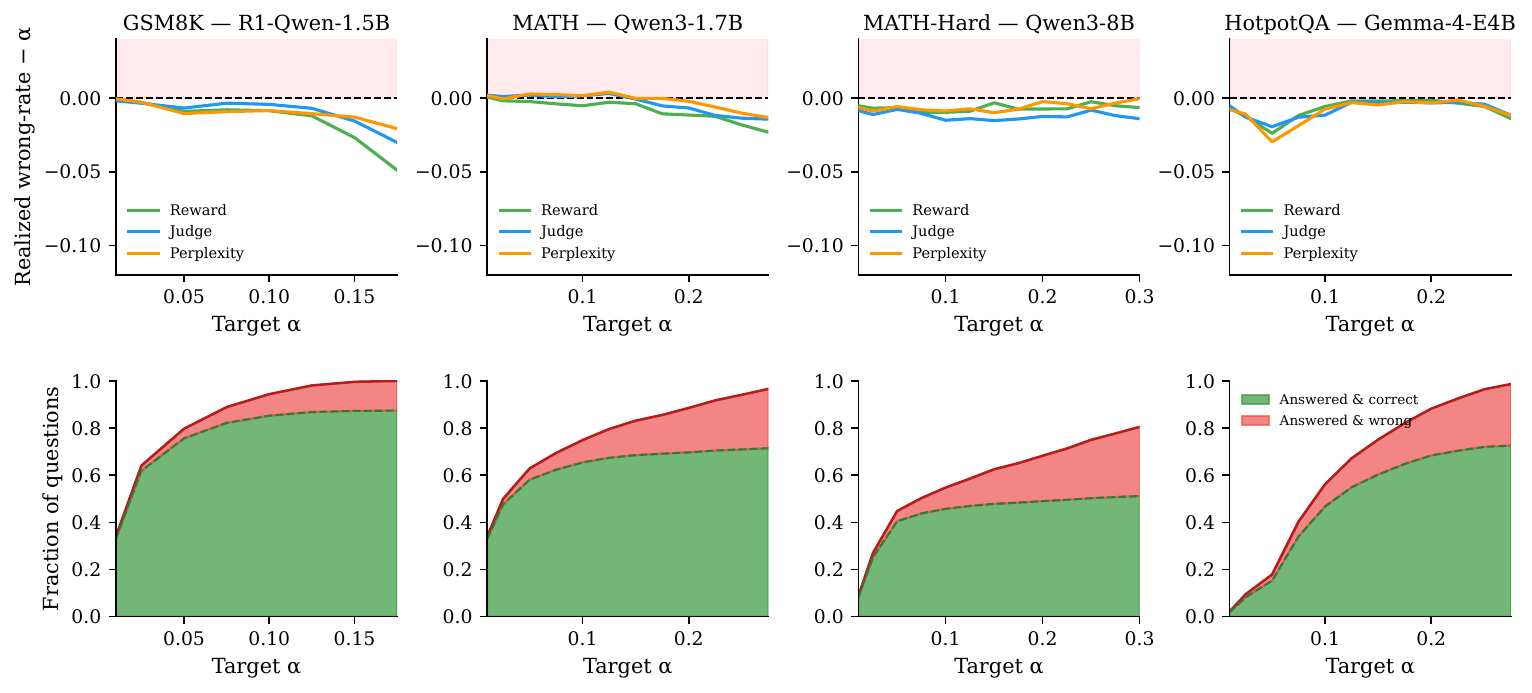}
    \caption{\textbf{Confident-error rate control (\Cref{thm:crc}).} \emph{Top:} Gap between the realized confident-error rate and the target~$\alpha$ across four representative configurations and all three score families. \emph{Bottom:} Fraction of questions answered correctly (green) or incorrectly (red) as~$\alpha$ varies. On easier tasks yield remains high even at strict~$\alpha$; on harder tasks the system abstains more aggressively.}
    \label{fig:alpha-control}
    \vspace{-1.5em}
\end{figure}

Figure~\ref{fig:alpha-control} validates Theorem~2 across all configurations and score families. The top row indicates that the realized confident-error rate aligns with the nominal target~$\alpha$ within calibration and test variability, consistent with the guarantees of CRC under exchangeability. 
The bottom row shows the operational consequence: as~$\alpha$ tightens, the green region shrinks while the red region is suppressed first. The system preferentially eliminates wrong answers before sacrificing correct ones. This asymmetric filtering is the signature of positive separability: the threshold disproportionately removes low-confidence predictions that are incorrect, so the answered subset becomes progressively more accurate as~$\alpha$ decreases.

\subsection{Validating the Closed-Form Accuracy Predictor (Q2)}
\label{sec:exp-q2}

\begin{figure}[t]
    \centering
    \includegraphics[width=\textwidth]{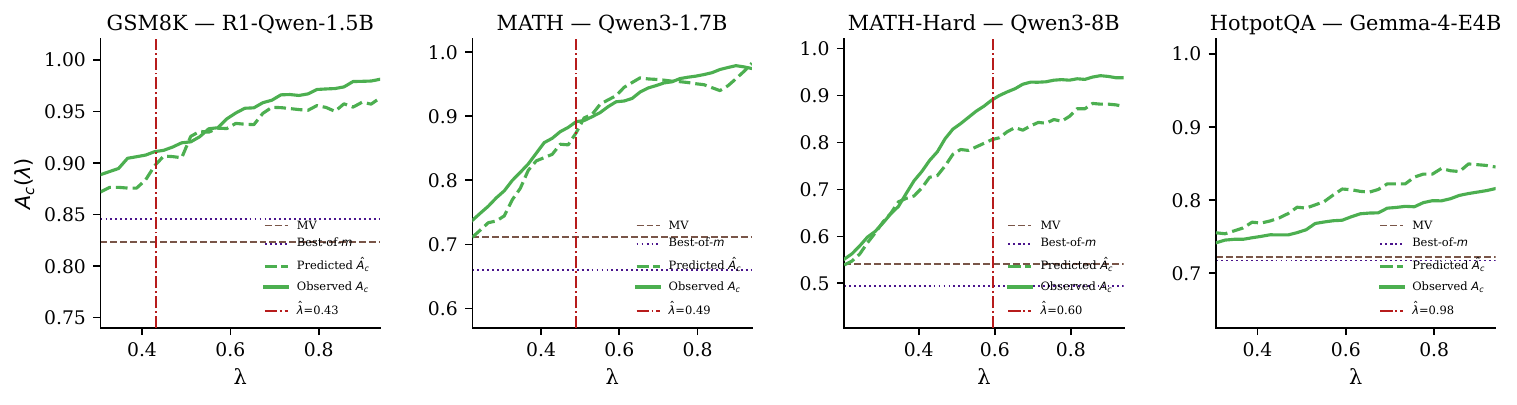}
    \caption{\textbf{Predicted versus observed selective accuracy.} Solid curves show the observed selective accuracy~$A_c$ on held-out test data; dashed curves show the calibration-set plug-in predictor~$\widehat A_c$ (Corollary~\ref{cor:predictor}). Vertical lines mark the CRC-calibrated~$\hat\lambda$ at $\alpha=0.10$. Horizontal baselines indicate majority-vote accuracy~$p_v$ (dashed) and best-of-$m$ accuracy (dotted).}
    \label{fig:pred-vs-obs}
    \vspace{-1.5em}
\end{figure}
A distinctive feature of our framework is that selective accuracy at any operating point can be predicted from calibration data alone. Figure~\ref{fig:pred-vs-obs} overlays the plug-in predictor~$\widehat A_c$ against observed selective accuracy on held-out test data. Across all configurations, the two curves track closely, confirming that the closed-form expression of \Cref{thm:ca-gain} is an operational planning tool. The prediction is tightest on configurations with strong separability and high yield, and noisiest on the hardest settings where $\Delta$ is small, exactly as the theory prescribes. Consistent with the $p_v(1-p_v)$ interpretation, the largest absolute accuracy gains appear at intermediate task difficulty and diminish at both extremes.

\subsection{Score Comparison and the Accuracy--Yield Frontier (Q3)}
\label{sec:exp-frontier}

\begin{figure}[t]
    \centering
    \includegraphics[width=\textwidth]{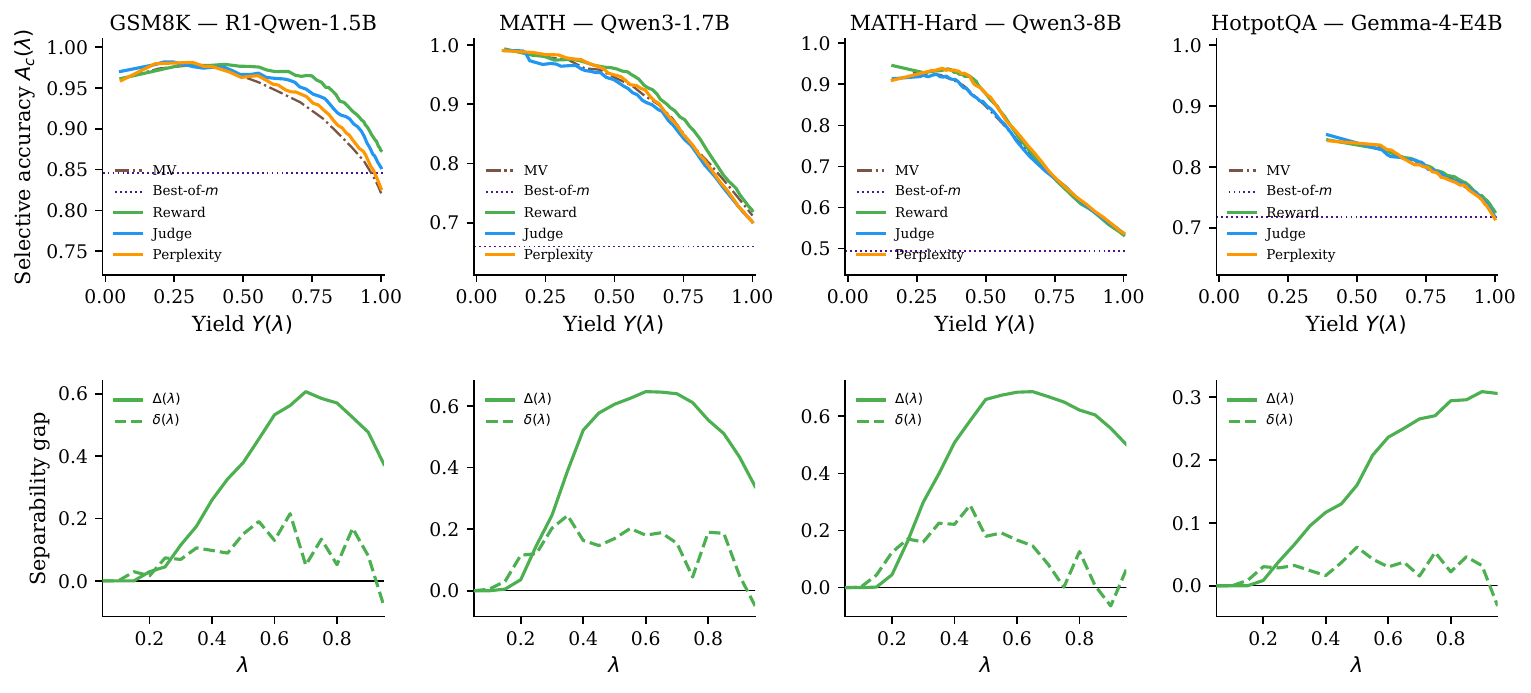}
    \caption{\textbf{Accuracy--yield frontiers and separability profiles.} \emph{Top:} Selective accuracy~$A_c$ versus yield for each score family. Higher and further right is better. \emph{Bottom:} Separability gap $\Delta(\lambda)$ (solid) and strict separability $\delta(\lambda)$ (dashed) for the Reward score.}
    \label{fig:frontiers}
    \vspace{-1.5em}
\end{figure}
Figure~\ref{fig:frontiers} compares all three score families on the yield--$A_c$ plane. Reward scores produce the steepest frontiers, curving furthest from the majority-vote baseline.
The separability profiles in the bottom row explain the shape of the frontiers. On HotpotQA, $\Delta(\lambda)$ is markedly smaller for all score families, and the frontiers correspondingly compress---offering little accuracy improvement over the unfiltered majority-vote baseline. This is consistent with Proposition~\ref{prop:nonsep}: when separability is low, abstention cannot substantially improve selective accuracy regardless of the threshold chosen, and the frontier reflects that limitation directly.

The accuracy--yield frontier offers a concrete planning tool for practitioners. As established in Proposition~\ref{prop:pareto}, $\alpha$ serves as a single interpretable knob that traverses the frontier: tightening $\alpha$ increases selective accuracy at the cost of yield, while relaxing it recovers more answers at lower accuracy. 
The separability gap $\Delta(\lambda)$ is the operative quantity: when $\Delta(\lambda) >0$ the frontier curves upward. The closed-form predictor~$\widehat A_c$ traces it from calibration data, so a practitioner can read off the achievable accuracy at any target yield without running additional evaluation. The frontier thus serves as a diagnostic: a flat curve signals insufficient separability and suggests that the path-quality score should be strengthened before tightening $\alpha$.

\subsection{Ablations}
\label{sec:ablations}

We ablate three design choices on a representative setting (GSM8K, Qwen3-1.7B, Reward score, $\alpha{=}0.10$); full tables appear in Appendix~\ref{app:ablation-results}. Figure~\ref{fig:ablation-summary} summarizes the findings. \textbf{Number of paths:} Increasing $m$ from 4 to 20 steadily improves yield while $A_c$ saturates early (panel~a); shaded bands show standard deviation across splits, confirming the trend is robust. Additional paths sharpen $\nu$ without altering separability, expanding the frontier rather than shifting it. \textbf{Calibration size:} $A_c$ is stable across $n_\mathrm{cal}\in\{50,\ldots,500\}$; larger calibration sets close the gap between realized and target $\alpha$ (panel~b). \textbf{Weight function:} Among $w(q)=\exp(\beta\, q)$, $\beta=1$ attains the best accuracy–yield trade-off (panel~c). $\beta=5$ collapses aggregation toward best-of-$m$, forfeiting pooling benefits.
\textbf{Score type:} Reward and Judge scores consistently achieve the highest frontier AUC across all ten configurations, while SC underperforms across the board (panel d), directly reflecting its low separability gap.

\begin{figure}[t]
	\centering
	\includegraphics[width=\textwidth]{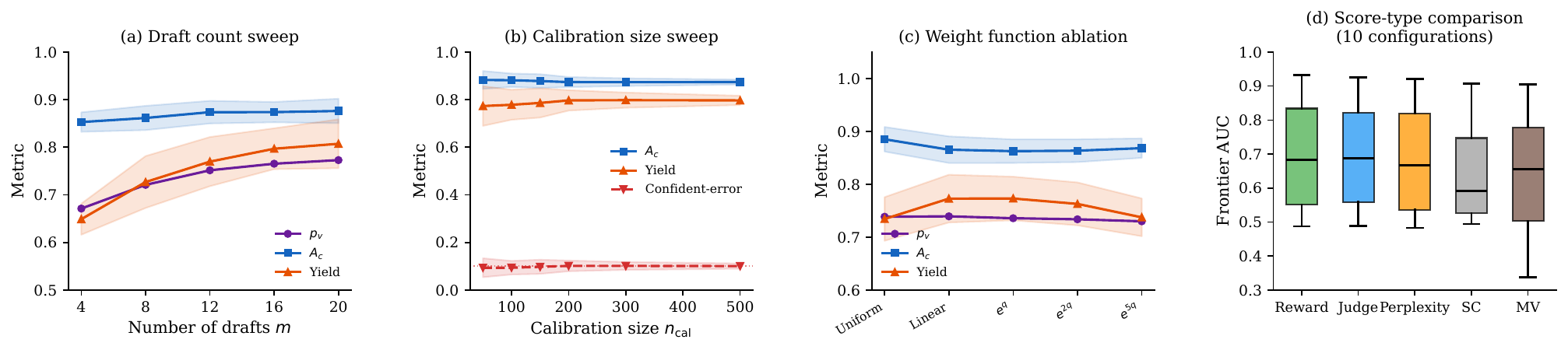}
    \caption{\textbf{Ablation summary}. \textbf{(a)}~Sweep on the number of sampled paths~$m$. \textbf{(b)}~Sweep on calibration sizes $n_\mathrm{cal}\in\{50,\ldots,500\}$. \textbf{(c)}~Sweep on the weight choice. \textbf{(d)}~Frontier AUC across all ten configurations.}
	\label{fig:ablation-summary}
    \vspace{-1em}
\end{figure}
\section{Conclusion}
\label{sec:conclusion}

We presented a conformal framework for chain-of-thought reasoning that directly addresses aggregation uncertainty in self-consistency. By replacing majority voting with score-weighted aggregation and calibrating an abstention threshold via conformal risk control, the framework provides finite-sample guarantees on the confident-error rate while remaining fully inference-time. 
Reliability is therefore score-agnostic: the guarantee holds regardless of which path-quality score is used. Utility, however, is diagnostic: we identify score separability as the condition for abstention to improve selective accuracy, and derive a closed-form predictor that quantifies expected gains from calibration data alone. Empirically, the framework consistently delivers guaranteed error-rate control across four benchmarks and four models, and the closed-form predictor closely tracks observed accuracy. These results highlight that reliability in multi-path reasoning is fundamentally governed by aggregation quality rather than individual reasoning traces.

\subsection*{Limitations} 
The confident-error rate guarantee is marginal, controlling the average rate over the input distribution rather than providing per-input certificates. The exchangeability assumption requires calibration and test data to come from the same distribution, so the guarantee does not transfer under distribution shift. Finally, our evaluation is restricted to tasks with extractable closed-form answers; extending to open-ended generation requires a different notion of correctness and a corresponding reformulation of the confident-error event. This restriction also precludes a direct empirical comparison with methods that target intra-trace correctness, as the two settings operate under fundamentally different notions of correctness and are not commensurable on a shared evaluation protocol.

\clearpage

\bibliographystyle{apalike}
\bibliography{references}

\clearpage
\appendix

\appendixpage

{\small
\startcontents[sections]
\printcontents[sections]{l}{1}{\setcounter{tocdepth}{2}}
}



\section{Notations}
\label{app:notation}

Table~\ref{tab:notation} summarizes the principal symbols used in the paper.

\begin{table}[h]
\centering
\caption{Summary of notation.}
\label{tab:notation}
\small
\renewcommand{\arraystretch}{1.3}
\begin{tabular}{p{3cm} p{10cm}}
\toprule
\textbf{Symbol} & \textbf{Description} \\
\midrule
$\cX,\; \cY,\; \cT$ & Prompt, answer, and reasoning-path spaces \\
$X,\; Y$ & Input prompt and ground-truth answer \\
$m$ & Number of sampled CoT paths per prompt \\
$\bT = (T^{(1)}, \ldots, T^{(m)})$ & Pool of sampled reasoning paths \\
$a : \cT \to \cY$ & Extraction function mapping a path to a candidate answer \\
$q : \cX \times \cT \to \mathbb R$ & Path-quality score \\
$w : \mathbb R \to \mathbb R^+$ & Non-decreasing weight function\\
\midrule
$V_y(X, \bT)$ & Score-weighted vote tally for candidate answer $y$ \\
$\hat{y}$ & Aggregated prediction: $\arg\max_y V_y$ \\
$\nu \in [0,1]$ & Vote confidence: normalized vote share of $\hat{y}$ \\
$p_v$ & Vote accuracy: $\PP(\hat{y} = Y)$ \\
\midrule
$\lambda$ & Abstention threshold on $\nu$ \\
$\pi_\lambda$ & Abstention policy: returns $\hat{y}$ if $\nu > \lambda$, else $\bot$ \\
$\alpha$ & Target confident-error rate control \\
$\hat{R}(\lambda)$ & Empirical confident-error rate on the calibration set \\
$\hatl$ & CRC-calibrated threshold (Eq.~\eqref{eq:crc-threshold}) \\
$n$ & Calibration set size \\
\midrule
$S_{\text{cor}}(\lambda),\; S_{\text{err}}(\lambda)$ & Conditional survival functions: $\PP(\nu > \lambda \mid \hat{y} = Y)$ and $\PP(\nu > \lambda \mid \hat{y} \neq Y)$ \\
$\Delta(\lambda)$ & Separability gap: $S_{\text{cor}}(\lambda) - S_{\text{err}}(\lambda)$ \\
$A_c(\lambda)$ & Conditional accuracy: $\PP(\hat{y} = Y \mid \nu > \lambda)$ \\
$Y(\lambda)$ & Yield: $\PP(\nu > \lambda)$ \\
$\widehat{A}_c(\lambda)$ & Plug-in accuracy predictor (Eq.~\eqref{eq:ac-hat}) \\
\bottomrule
\end{tabular}
\end{table}


\section{Proofs}
\label{app:proofs}

\subsection{Proof of Accuracy Gain}
\label{app:proof-acgain}
\begin{proof}[Proof of \Cref{thm:ca-gain}]
The joint probabilities of $\{\nu > \lambda\}$ with each correctness event are, by definition of conditional probability,
\[
	\PP(\nu > \lambda,\, \hat y = Y) = S_{\text{cor}}(\lambda)\, p_v,
	\qquad
	\PP(\nu > \lambda,\, \hat y \neq Y) = S_{\text{err}}(\lambda)\,(1-p_v).
\]
Bayes' rule on the event $\{\nu > \lambda\}$ gives
\begin{equation}
	\label{eq:ac-bayes}
	A_c(\lambda) \;=\; \frac{S_{\text{cor}}(\lambda)\,p_v}{S_{\text{cor}}(\lambda)\,p_v + S_{\text{err}}(\lambda)\,(1-p_v)}.
\end{equation}
Let $D \coloneqq S_{\text{cor}}(\lambda)\,p_v + S_{\text{err}}(\lambda)\,(1-p_v)$ denote the denominator of~\eqref{eq:ac-bayes}. Subtracting $p_v$,
\[
	A_c(\lambda) - p_v
	\;=\; \frac{S_{\text{cor}}(\lambda)\,p_v - p_v\, D}{D}
	\;=\; \frac{p_v\bigl[S_{\text{cor}}(\lambda) - S_{\text{cor}}(\lambda)\,p_v - S_{\text{err}}(\lambda)\,(1-p_v)\bigr]}{D}.
\]
Using $1 = p_v + (1-p_v)$, the bracketed numerator collapses:
\[
	S_{\text{cor}}(\lambda)\bigl[1 - p_v\bigr] - S_{\text{err}}(\lambda)\,(1-p_v)
	\;=\; (1-p_v)\bigl[S_{\text{cor}}(\lambda) - S_{\text{err}}(\lambda)\bigr]
	\;=\; (1-p_v)\,\Delta(\lambda).
\]
For the denominator, substituting $S_{\text{err}}(\lambda) = S_{\text{cor}}(\lambda) - \Delta(\lambda)$ yields
\[
	D \;=\; S_{\text{cor}}(\lambda)\bigl[p_v + (1-p_v)\bigr] - (1-p_v)\,\Delta(\lambda)
	\;=\; S_{\text{cor}}(\lambda) - (1-p_v)\,\Delta(\lambda).
\]
Combining numerator and denominator gives~\eqref{eq:ca-gain}. For positivity: $p_v, (1-p_v) \in (0,1)$ by assumption; $\Delta(\lambda) > 0$ by separability; and $D = S_{\text{cor}}(\lambda) - (1-p_v)\Delta(\lambda) \geq S_{\text{cor}}(\lambda) - \Delta(\lambda) = S_{\text{err}}(\lambda) \geq 0$, with strict positivity unless $S_{\text{err}}(\lambda) = 0$ and $p_v = 0$ simultaneously, a degenerate case excluded by $p_v \in (0,1)$.
\end{proof}

\subsection{Proof of the Closed-form Accuracy Predictor}
\label{app:proof-corcf}

The proof rests on a small algebraic identity that recasts $A_c$ as a ratio of two non-negative survival functions, after which Glivenko-Cantelli and continuous mapping handle the rest.

\paragraph{An equivalent ratio form.}
For $\lambda \in [0,1]$, define
\begin{align}
	G(\lambda) &\coloneqq \PP\bigl(\nu > \lambda,\ \hat y = Y\bigr) \;=\; p_v\,S_{\mathrm{cor}}(\lambda), \\
	H(\lambda) &\coloneqq \PP(\nu > \lambda) \;=\; p_v\,S_{\mathrm{cor}}(\lambda) + (1-p_v)\,S_{\mathrm{err}}(\lambda) \;=\; Y(\lambda),
\end{align}
with empirical counterparts
\[
	\hat G(\lambda) = \frac{1}{n}\sum_{i=1}^n \Ind[\nu_i \ge \lambda,\ \hat y_i = Y_i],
	\qquad
	\hat H(\lambda) = \frac{1}{n}\sum_{i=1}^n \Ind[\nu_i \ge \lambda].
\]
Note that $G$ and $H$ are non-increasing in $\lambda$, $H \geq G$ pointwise, and
\begin{equation}
	\label{eq:ac-as-ratio}
	A_c(\lambda) \;=\; \frac{G(\lambda)}{H(\lambda)} \quad \text{whenever } H(\lambda) > 0.
\end{equation}
The same identity holds for the plug-in predictor:
\begin{equation}
	\label{eq:hatac-as-ratio}
	\hat A_c(\lambda) \;=\; \hat p_v + \frac{\hat p_v(1-\hat p_v)\,\hat\Delta(\lambda)}{\hat S_{\mathrm{cor}}(\lambda) - (1-\hat p_v)\,\hat\Delta(\lambda)} \;=\; \frac{\hat G(\lambda)}{\hat H(\lambda)},
\end{equation}
which follows from the same algebra as in the proof of \Cref{thm:ca-gain}, applied to the empirical analogues:
\[
	\hat S_{\mathrm{cor}}(\lambda) - (1-\hat p_v)\,\hat\Delta(\lambda)
	\;=\; \hat p_v\,\hat S_{\mathrm{cor}}(\lambda) + (1-\hat p_v)\,\hat S_{\mathrm{err}}(\lambda)
	\;=\; \hat H(\lambda),
\]
together with $\hat p_v\,\hat S_{\mathrm{cor}}(\lambda) = \hat G(\lambda)$. Working with $(\hat G, \hat H)$ rather than $(\hat S_{\mathrm{cor}}, \hat S_{\mathrm{err}}, \hat p_v)$ folds the stratum sizes $|\{i : \hat y_i = Y_i\}|$ and $|\{i : \hat y_i \neq Y_i\}|$ into the joint empirical, removing the need for separate concentration on those binomial counts.

\begin{proof}[Proof of Corollary~\ref{cor:predictor}]
We work under the iid restriction of Assumption~\ref{assum:exch}, i.e.\ the calibration tuples $(X_i, Y_i, \bT_i)_{i=1}^n$ are iid. The extension to infinite exchangeable calibration sequences is discussed in \Cref{app:ca-bound}.

\emph{Step 1 (uniform convergence of $\hat G$ and $\hat H$).}
Set $Z_i \coloneqq \bigl(\nu_i, \Ind[\hat y_i = Y_i]\bigr) \in [0,1]\times\{0,1\}$. Under iid calibration data, $(Z_i)_{i=1}^n$ is iid. Both $\hat G$ and $\hat H$ are empirical means of indicators of half-line events parametrised by $\lambda$, a totally ordered class. The Glivenko-Cantelli theorem~\cite{berti1997glivenko} therefore gives
\begin{equation}
	\label{eq:gc-uniform}
	\sup_{\lambda \in [0,1]} \bigl|\hat G(\lambda) - G(\lambda)\bigr| \xrightarrow{\mathrm{a.s.}} 0,
	\qquad
	\sup_{\lambda \in [0,1]} \bigl|\hat H(\lambda) - H(\lambda)\bigr| \xrightarrow{\mathrm{a.s.}} 0.
\end{equation}

\emph{Step 2 (the denominator is bounded below on $\Lambda_0$).}
Fix $\Delta_0 > 0$ and let $\Lambda_0 \subset [0,1]$ be a compact set on which $\Delta(\lambda) \geq \Delta_0$. From $H(\lambda) = p_v\,S_{\mathrm{cor}}(\lambda) + (1-p_v)\,S_{\mathrm{err}}(\lambda) \geq p_v\,S_{\mathrm{cor}}(\lambda) \geq p_v\,\Delta(\lambda)$,
\begin{equation}
	\label{eq:H-lower-bound}
	s_0 \;\coloneqq\; \inf_{\lambda \in \Lambda_0} H(\lambda) \;\geq\; p_v\,\Delta_0 \;>\; 0.
\end{equation}
We refer to $s_0$ as the \emph{answering margin}: it is the smallest yield attained on the operating range $\Lambda_0$.

\emph{Step 3 (continuous mapping).}
On the full-probability event in~\eqref{eq:gc-uniform}, set $\eta_n \coloneqq \sup_\lambda |\hat G - G| + \sup_\lambda |\hat H - H| \to 0$ a.s. For $\lambda \in \Lambda_0$ and $n$ large enough that $\eta_n < s_0/2$, $\hat H(\lambda) \geq H(\lambda) - \eta_n \geq s_0/2 > 0$. The quotient identity
\begin{equation}
	\label{eq:quotient-identity}
	\frac{\hat G}{\hat H} - \frac{G}{H}
	\;=\; \frac{\hat G - G}{\hat H} \;-\; A_c \cdot \frac{\hat H - H}{\hat H},
\end{equation}
combined with $|A_c| \leq 1$, gives, for every such $\lambda$,
\[
	\bigl|\hat A_c(\lambda) - A_c(\lambda)\bigr|
	\;\leq\; \frac{|\hat G(\lambda) - G(\lambda)| + |\hat H(\lambda) - H(\lambda)|}{\hat H(\lambda)}
	\;\leq\; \frac{\,\eta_n}{s_0/2}
	\;=\; \frac{2\,\eta_n}{s_0}.
\]
Since $\eta_n \to 0$ a.s., the right-hand side tends to zero, uniformly in $\lambda \in \Lambda_0$:
\[
	\sup_{\lambda \in \Lambda_0} \bigl|\hat A_c(\lambda) - A_c(\lambda)\bigr| \xrightarrow{\mathrm{a.s.}} 0,
\]
proving Corollary~\ref{cor:predictor}. The same argument with the DKW-Massart inequality in place of Glivenko-Cantelli yields a finite-sample rate, given as \Cref{thm:ac-concentration} in \Cref{app:ca-bound}.
\end{proof}

\subsection{Proof of the Accuracy-Yield Trade-off}
\label{app:proof-ay}

\begin{proof}[Proof of \Cref{prop:pareto}]
\emph{Part 1 (separability $\Rightarrow$ pointwise improvement and monotone yield).} Under $\Delta(\lambda) > 0$ for all $\lambda \in (0, \lambda_{\max})$, \Cref{thm:ca-gain} gives $A_c(\lambda) > p_v$ at every such $\lambda$. The yield $Y(\lambda) = \PP(\nu > \lambda)$ is a survival function of $\nu$ and is therefore non-increasing in $\lambda$ by definition; this requires no separability assumption.

\emph{Part 2 (strict separability $\Rightarrow$ $A_c$ non-decreasing).} Assume additionally that the score is strictly separable $\delta(\lambda) > 0$ at every $\lambda \in (0, \lambda_{\max})$. Inverting the Bayes form~\eqref{eq:ac-bayes},
\begin{equation}
	\label{eq:ac-recip}
	\frac{1}{A_c(\lambda)} \;=\; 1 + \frac{1-p_v}{p_v} \cdot \frac{S_{\mathrm{err}}(\lambda)}{S_{\mathrm{cor}}(\lambda)},
\end{equation}
so $A_c(\lambda)$ is non-decreasing in $\lambda$ if and only if the survival ratio
\(
	\rho(\lambda) \coloneqq S_{\mathrm{err}}(\lambda)/S_{\mathrm{cor}}(\lambda)
\)
is non-increasing in $\lambda$. Under the absolute-continuity condition stated in \Cref{app:app-hazard} (the conditional laws of $\nu$ given $\{\hat y = Y\}$ and $\{\hat y \neq Y\}$ admit densities on $(0, \lambda_{\max})$), $S_{\mathrm{cor}}, S_{\mathrm{err}}$ are absolutely continuous and strictly positive on $(0, \lambda_{\max})$ with hazard recovery
\[
	S_{\mathrm{cor}}(\lambda) = \exp\Bigl(-\!\int_0^\lambda h_{\mathrm{cor}}(u)\,du\Bigr),
	\qquad
	S_{\mathrm{err}}(\lambda) = \exp\Bigl(-\!\int_0^\lambda h_{\mathrm{err}}(u)\,du\Bigr).
\]
Hence
\begin{equation}
	\label{eq:rho-as-integral}
	\log \rho(\lambda) \;=\; -\!\int_0^\lambda \bigl[h_{\mathrm{err}}(u) - h_{\mathrm{cor}}(u)\bigr]\,du \;=\; -\!\int_0^\lambda \delta(u)\,du.
\end{equation}
Since $\delta > 0$ on $(0, \lambda_{\max})$ by assumption, $\log \rho$ is non-increasing, $\rho$ is non-increasing, and $A_c$ is non-decreasing on $(0, \lambda_{\max})$.

\emph{Pareto-frontier interpretation.} Combining the two parts: under strict separability, $\lambda \mapsto Y(\lambda)$ is non-increasing and $\lambda \mapsto A_c(\lambda)$ is non-decreasing. Equivalently, there are no $\lambda_1 < \lambda_2$ in $(0, \lambda_{\max})$ with $Y(\lambda_2) \geq Y(\lambda_1)$ and $A_c(\lambda_2) \leq A_c(\lambda_1)$ both holding and at least one strict. The image of $(0, \lambda_{\max})$ under $\lambda \mapsto (Y(\lambda), A_c(\lambda))$ is therefore a Pareto frontier in the yield-accuracy plane.
\end{proof}

\subsection{Proof of Confident-error Rate Control}
This proof follows from the proof of Conformal Risk Control~\citep{angelopoulos2024theoretical}.

\begin{proof}[Proof of \Cref{thm:crc}]
We use the following right-continuous threshold convention throughout this proof:
\[
\pi_\lambda(X,T)=
\begin{cases}
\hat y(X,T), & \nu(X,T)>\lambda,\\
\perp, & \nu(X,T)\le \lambda .
\end{cases}
\]
Equivalently, the system answers only when its vote confidence strictly exceeds the threshold.
Let
\[
Z_i=(X_i,Y_i,T_i),\qquad i=1,\ldots,n+1 .
\]
We assume that the augmented examples \(Z_1,\ldots,Z_{n+1}\) are exchangeable. This follows, for
example, if \((X_i,Y_i)_{i=1}^{n+1}\) are exchangeable and, conditionally on \(X_i\), the path pool
\(T_i\) is generated from the same randomized inference procedure independently across examples.
The aggregation rule, tie-breaking rule, score computation, and weight function are fixed in advance
and do not access \(Y_i\) except through the final loss evaluation.

For each threshold \(\lambda\in[0,1]\), define the confident-error loss
\[
L_i(\lambda)
=
\mathbf 1\!\left\{
\pi_\lambda(X_i,T_i)\notin\{\perp,Y_i\}
\right\}
=
\mathbf 1\!\left\{
\nu(X_i,T_i)>\lambda,\ \hat y(X_i,T_i)\neq Y_i
\right\}.
\]
For a dataset \(D_k=(Z_1,\ldots,Z_k)\), write
\[
\widehat R_k(\lambda;D_k)
=
\frac1k\sum_{i=1}^k L_i(\lambda).
\]
The calibration threshold is
\[
\widehat\lambda
=
\inf\left\{
\lambda\in[0,1]:
\widehat R_n(\lambda;D_n)
\le
\alpha-\frac{1-\alpha}{n}
\right\},
\]
whenever the set is nonempty. If
\(\alpha-(1-\alpha)/n<0\), we set \(\widehat\lambda=1\); then the policy always abstains and the
claim is trivial. Hence, in the rest of the proof, assume
\[
\beta_n:=\alpha-\frac{1-\alpha}{n}\ge 0.
\]

First note that, for every fixed example \(i\), the map
\[
\lambda\mapsto L_i(\lambda)
\]
takes values in \([0,1]\), is monotone non-increasing, and is right-continuous. Indeed,
\(L_i(\lambda)=\mathbf 1\{\nu_i>\lambda,\hat y_i\neq Y_i\}\), where
\(\nu_i=\nu(X_i,T_i)\) and \(\hat y_i=\hat y(X_i,T_i)\). Moreover,
\[
L_i(1)=0
\]
because \(\nu_i\le 1\). Therefore \(\widehat R_k(\lambda;D_k)\) is also monotone non-increasing and
right-continuous in \(\lambda\). In particular,
\[
\widehat R_n(\widehat\lambda;D_n)\le \beta_n .
\]

We now introduce the oracle threshold that would be chosen if the test point were also available:
\[
\lambda^\star
=
\inf\left\{
\lambda\in[0,1]:
\widehat R_{n+1}(\lambda;D_{n+1})\le \alpha
\right\}.
\]
By right-continuity,
\[
\widehat R_{n+1}(\lambda^\star;D_{n+1})\le \alpha .
\]
Since \(\lambda^\star\) is a symmetric function of the exchangeable sample
\(Z_1,\ldots,Z_{n+1}\), the random variables
\[
L_1(\lambda^\star),\ldots,L_{n+1}(\lambda^\star)
\]
have the same expectation. Hence
\[
\mathbb E\!\left[L_{n+1}(\lambda^\star)\right]
=
\frac{1}{n+1}\sum_{i=1}^{n+1}
\mathbb E\!\left[L_i(\lambda^\star)\right]
=
\mathbb E\!\left[
\widehat R_{n+1}(\lambda^\star;D_{n+1})
\right]
\le \alpha .
\]

It remains to compare \(\widehat\lambda\) with \(\lambda^\star\). For any fixed \(\lambda\in[0,1]\),
because \(0\le L_{n+1}(\lambda)\le 1\),
\[
\widehat R_{n+1}(\lambda;D_{n+1})
=
\frac{n}{n+1}\widehat R_n(\lambda;D_n)
+
\frac{1}{n+1}L_{n+1}(\lambda)
\le
\frac{n}{n+1}\widehat R_n(\lambda;D_n)
+
\frac{1}{n+1}.
\]
Therefore, if \(\widehat R_n(\lambda;D_n)\le \beta_n\), then
\[
\widehat R_{n+1}(\lambda;D_{n+1})
\le
\frac{n}{n+1}
\left(
\alpha-\frac{1-\alpha}{n}
\right)
+
\frac{1}{n+1}
=
\alpha .
\]
Thus every threshold feasible for the calibration problem is feasible for the oracle problem. Consequently,
\[
\lambda^\star\le \widehat\lambda .
\]
Since \(L_{n+1}(\lambda)\) is monotone non-increasing in \(\lambda\), we have
\[
L_{n+1}(\widehat\lambda)
\le
L_{n+1}(\lambda^\star).
\]
Taking expectations gives
\[
\mathbb E\!\left[L_{n+1}(\widehat\lambda)\right]
\le
\mathbb E\!\left[L_{n+1}(\lambda^\star)\right]
\le
\alpha .
\]
Finally, because \(L_{n+1}(\widehat\lambda)\) is exactly the indicator of a confident error,
\[
\mathbb P\!\left(
\pi_{\widehat\lambda}(X_{n+1},T_{n+1})
\notin
\{\perp,Y_{n+1}\}
\right)
=
\mathbb E\!\left[L_{n+1}(\widehat\lambda)\right]
\le
\alpha .
\]
\end{proof}
This proves the confident-error rate control guarantee.
\section{Additional Theoretical Details}
\label{app:additional}

\subsection{Conditional decomposition of the separability gap}
\label{app:sep-factors}

This appendix gives the path-level structure behind the separability gap. We work under Assumption~\ref{assum:exch}, which implies that conditional on the prompt $X$, the sampled paths $T^{(1)}, \ldots, T^{(m)}$ are iid from the LM's path distribution. The aim here is interpretive: to identify which sources of signal the framework can exploit, and to explain why $\Delta$ does not admit a clean additive decomposition into per-factor contributions.

\paragraph{A two-step factorisation of the vote.} For each path $j$, let
\[
	A_j \coloneqq a(T^{(j)}) \in \cY,
	\qquad
	W_j \coloneqq w\!\bigl(q(X, T^{(j)})\bigr) \in \RR^+
\]
denote the path's predicted answer and its vote weight. The pairs $(A_j, W_j)_{j=1}^m$ are conditionally iid given $X$, and their joint conditional law is captured by two ingredients:
\begin{itemize}
	\item the \emph{answer distribution} $\pi_y(X) \coloneqq \PP(A_j = y \mid X)$ on $\cY$;
	\item the \emph{conditional weight law} $G_y(\cdot \mid X)$, i.e.\ the law of $W_j$ given $\{A_j = y, X\}$.
\end{itemize}
Letting $\bN \coloneqq (N_y)_{y \in \cY}$ with $N_y \coloneqq \sum_j \Ind[A_j = y]$, the vote is generated in two steps:
\begin{enumerate}
	\item[\textbf{(i)}] $\bN \mid X \sim \mathrm{Multinomial}\bigl(m;\,\pi(X)\bigr)$;
	\item[\textbf{(ii)}] given $(\bN, X)$, the cluster weights $V_y \coloneqq \sum_{j: A_j = y} W_j$ are independent across $y$, with $V_y$ a sum of $N_y$ iid draws from $G_y(\cdot \mid X)$.
\end{enumerate}
The vote confidence is $\nu = V_{\hat y}/\sum_{y'} V_{y'}$ with $\hat y = \arg\max_{y'} V_{y'}$. This factorisation cleanly separates the design surface for separability into three sources:
\begin{itemize}
	\item \textbf{Factor A (answer concentration).} The shape of $\pi(X)$, entering only through step (i). When the LM is sharply peaked on the correct answer on most prompts but diffuse on most wrong-answer prompts, step (i) alone produces separability.
	\item \textbf{Factor B (score-weight discriminativity).} The relationship between $G_Y(\cdot \mid X)$ and $\{G_y(\cdot \mid X)\}_{y \neq Y}$, entering only through step (ii). When weights on correct paths tend to be larger than weights on wrong paths, step (ii) tilts $\nu$ in favour of the correct cluster.
	\item \textbf{Factor C (weight translation).} The mapping from the underlying score $q$ to vote weights $W = w(q)$. The choice of $w$ controls how much path-level discriminativity in $q$ converts to vote-level mass shift.
\end{itemize}

\paragraph{Under majority voting, only Factor A is active.} The cleanest case is $w \equiv 1$: weights carry no information beyond the answer multiset, and the entire framework reduces to the multinomial structure of step (i).

\begin{lemma}[MV recovers Factor A]
\label{lem:mv-factor-a}
Under majority voting ($w \equiv 1$, so $V_y = N_y$ and $\nu = N_{\hat y}/m$), the conditional laws of $\nu$ given $\{\hat y = Y\}$ and $\{\hat y \neq Y\}$ are determined by $\pi(\cdot)$ alone. In particular, $S^{\mathrm{MV}}_{\mathrm{cor}}, S^{\mathrm{MV}}_{\mathrm{err}}$, and the resulting $\Delta^{\mathrm{MV}}, \delta^{\mathrm{MV}}$ depend on the LM only through the answer distribution $\pi$.
\end{lemma}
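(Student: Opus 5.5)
Under $w \equiv 1$ every weight equals one, so $V_y = N_y$ for every $y$, and therefore $\hat y = \arg\max_y N_y$ (with the fixed tie-breaking rule) and $\nu = N_{\hat y}/\sum_{y'} N_{y'} = N_{\hat y}/m$. The plan is to exhibit both $\nu$ and the correctness indicator $\Ind[\hat y = Y]$ as deterministic functions of the pair $(\bN, Y)$, and then to show that the joint law of $(\bN, Y)$ involves the LM only through $\pi$.

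\emph{Step 1 (measurability).} The map $\bN \mapsto (\hat y, \nu)$ is a fixed deterministic function. It is built from the argmax over counts, the tie-break rule, and division by $m$, and it does not depend on the scores $q$ or the weight laws $G_y$. Consequently, for every $\lambda$, both events $\{\nu > \lambda, \hat y = Y\}$ and $\{\nu > \lambda, \hat y \neq Y\}$ are of the form $\{(\bN, Y) \in B\}$ for a fixed set $B$.

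\emph{Step 2 (joint law).} Write $\PP(\nu > \lambda,\, \hat y = Y) = \EE\bigl[\PP((\bN, Y) \in B \mid X, Y)\bigr]$. By Assumption~\ref{assum:exch}, the path pool is generated from $X$ alone. Hence, conditional on $(X, Y)$, the count vector is $\bN \sim \mathrm{Multinomial}(m; \pi(X))$ by step (i) of the factorisation. The inner conditional probability is therefore an explicit function of $(\pi(X), Y)$, namely a sum of multinomial probabilities over the count vectors $\mathbf n$ satisfying $(\mathbf n, Y) \in B$. Averaging over the law of $(X, Y)$ shows that the probabilities $G(\lambda)$, $p_v$, and $H(\lambda)$ all depend on the LM only through the map $X \mapsto \pi(X)$. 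The data distribution of $(X, Y)$ is fixed and is not part of the LM.

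\emph{Step 3 (conclusion).} Since $S_{\mathrm{cor}} = G/p_v$ and $S_{\mathrm{err}} = (H - G)/(1 - p_v)$, these survival functions, and hence $\Delta^{\mathrm{MV}}$, depend only on $\pi$. The hazards $h_{\mathrm{cor}}, h_{\mathrm{err}}$ are functionals of these conditional laws, so $\delta^{\mathrm{MV}}$ depends only on $\pi$ as well. Under MV, $\nu$ is supported on the lattice $\{1/m, \dots, 1\}$, so these hazards must be taken in the discrete sense of Appendix~\ref{app:app-hazard}; they are still functionals of the conditional laws.

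The main subtlety is Step 2. There we must make sure that conditioning on $Y$ does not leak information into $\bN$ beyond $X$. This is exactly what the second clause of Assumption~\ref{assum:exch} supplies, namely $\bN \perp Y \mid X$. Without it, the conditional law of $\bN$ given $(X, Y)$ need not be $\mathrm{Multinomial}(m; \pi(X))$.
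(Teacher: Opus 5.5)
Your proposal is correct and follows the same route as the paper: with $w \equiv 1$, $(\hat y, \nu)$ is a deterministic function of $\bN$, whose law given $X$ is $\mathrm{Multinomial}(m;\pi(X))$, and marginalising gives the claim. Your version is more careful than the paper's short proof, which only states the law of $\nu$ given $X$ and then marginalises over prompts; you carry the correctness event explicitly through the joint law of $(\bN, Y)$ using $\bN \perp Y \mid X$ from Assumption~\ref{assum:exch}, and you correctly note that the fixed joint law of $(X,Y)$, not just the prompt distribution, enters alongside $\pi$.
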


\begin{proof}
With $w \equiv 1$, step (ii) collapses: $V_y = N_y$ deterministically, so $\nu = N_{\hat y}/m$ is a function of $\bN$ alone. Since $\bN \mid X$ is multinomial with parameter $\pi(X)$, the conditional law of $\nu$ given $X$ is determined by $\pi(X)$. Marginalising over $X$ gives $S^{\mathrm{MV}}_{\mathrm{cor}}, S^{\mathrm{MV}}_{\mathrm{err}}$ as functionals of the prompt distribution and $\pi$ alone.
\end{proof}


\paragraph{Why no clean additive decomposition exists.}
A natural ambition would be to write $\Delta(\lambda) = \Delta^{(\mathrm A)}(\lambda) + \Delta^{(\mathrm B)}(\lambda) + \Delta^{(\mathrm C)}(\lambda)$ as an additive split, isolating the contribution of each factor. This is not possible in general, because Factors A and B interact through $\bN$: the vote-level effect of a discriminative score $w \circ q$ depends on the cluster sizes the answer distribution produces on a given prompt. On a prompt where $\pi(X)$ concentrates on one or two answers, the score has very few wrong clusters to discriminate; on a prompt with diffuse wrong-mass, the same score has many small wrong clusters whose weights it can drive down. The same score therefore contributes \emph{different} amounts to $\Delta(\lambda)$ on different prompts, depending on a multinomial draw whose distribution depends on Factor A.

What \emph{is} rigorous is the conditional reading of the two-step structure:
\begin{itemize}
	\item \emph{Conditional on $\bN$ and $X$,} step (i) is fixed and the cluster weights $V_y$ depend only on the conditional weight laws $\{G_y\}_y$ (Factor B) and the choice of $w$ (Factor C). When weights on correct paths tend to exceed weights on wrong paths, $V_Y$ tends to dominate any $V_{y'}$ at matched cluster sizes, tilting $\nu$ in favour of the correct cluster.
	\item \emph{Marginally,} no such clean separation holds: the marginal distribution of $\nu$ on $\{\hat y = Y\}$ averages this conditional dominance against the multinomial geometry of $\bN$.
\end{itemize}

The three-factor reading is therefore best understood as identifying \emph{which channels each design choice opens or closes}, not as a numerical decomposition. Choosing the score is the act of recruiting Factor B; choosing the weight function is the act of converting that recruitment into vote-level mass shift through C; the LM and prompt distribution determine the Factor-A baseline that any aggregation rule inherits.

\paragraph{Two limits of Factor C.}
For the parametric family $w_\beta(q) = e^{\beta\, q}$, the role of $\beta$ is bracketed by two limits:
\begin{itemize}
	\item \textbf{$\beta = 0$:} $w_0 \equiv 1$ recovers majority voting; Lemma~\ref{lem:mv-factor-a} applies and Factor B is shut off entirely, regardless of how discriminative $q$ is on path-correctness.
	\item \textbf{$\beta \to \infty$:} for any fixed multiset $\mathbf n$, $V_y = \sum_{j:\,A_j = y} e^{\beta q_j} = e^{\beta\,\max_{j:\,A_j = y} q_j}\bigl(1 + o(1)\bigr)$, so $\hat y \to A_{j^\star}$ where $j^\star = \arg\max_j q_j$. The aggregator collapses to single-path argmax (best-of-$m$ selection), and the variance reduction of aggregating multiple paths within each cluster is forfeited.
\end{itemize}
The empirical observation in \S\ref{sec:experiments} that $\beta = 1$ outperforms $\beta = 5$ corresponds to this saturation: $\beta = 1$ is the moderate regime where Factor~B is amplified while $V_y$ still aggregates over multiple paths within each cluster, retaining Factor~A's variance reduction.

\subsection{Finite-sample concentration of $\hat A_c$}
\label{app:ca-bound}

This subsection upgrades the almost-sure statement of Corollary~\ref{cor:predictor} to a finite-sample bound, and is careful about the scope of the bound: a clean DKW-based statement requires \emph{iid} calibration data, while infinite-exchangeable calibration sequences yield a partial extension via de Finetti's theorem.

\paragraph{Setup.}
Recall from~\eqref{eq:ac-as-ratio} that $A_c(\lambda) = G(\lambda)/H(\lambda)$ and $\hat A_c(\lambda) = \hat G(\lambda)/\hat H(\lambda)$, with
\[
	G(\lambda) = \PP(\nu > \lambda,\,\hat y = Y),
	\qquad
	H(\lambda) = \PP(\nu > \lambda).
\]
Fix $\Delta_0 > 0$ and let $\Lambda_0 \subset [0,1]$ be a measurable set on which $\Delta(\lambda) \geq \Delta_0$. The \emph{answering margin} on $\Lambda_0$ is
\begin{equation}
	\label{eq:s0-defn}
	s_0 \;\coloneqq\; \inf_{\lambda \in \Lambda_0} H(\lambda),
\end{equation}
the smallest yield attained on the operating range. By~\eqref{eq:H-lower-bound}, $s_0 \geq p_v\,\Delta_0 > 0$.

\subsubsection{Iid calibration data: a finite-sample DKW-based bound}

The cleanest finite-sample statement assumes iid calibration tuples. This is strictly stronger than Assumption~\ref{assum:exch} (which only assumes exchangeability), but is the standard setting for finite-sample empirical-process bounds and matches our experimental protocol, where calibration prompts are obtained by uniform random subsampling of a benchmark.

\begin{theorem}[Finite-sample concentration of $\hat A_c$, iid case]
\label{thm:ac-concentration}
Suppose the calibration tuples $(X_i, Y_i, \bT_i)_{i=1}^n$ are iid. Fix $\delta \in (0,1)$ and let
\[
	\eps_n(\delta) \;\coloneqq\; \sqrt{\log(4/\delta)/(2n)}.
\]
If $n$ is large enough that $\eps_n(\delta) \leq s_0/2$ -- equivalently, $n \geq 2 \log(4/\delta)/s_0^2$ -- then with probability at least $1-\delta$,
\begin{equation}
	\label{eq:ac-finite-sample}
	\sup_{\lambda \in \Lambda_0} \bigl|\hat A_c(\lambda) - A_c(\lambda)\bigr| \;\leq\; \frac{4}{s_0}\,\eps_n(\delta) \;=\; \frac{4}{s_0}\sqrt{\frac{\log(4/\delta)}{2n}}.
\end{equation}
\end{theorem}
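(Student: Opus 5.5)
The plan is to rerun the three-step argument from the proof of Corollary~\ref{cor:predictor}, replacing Glivenko--Cantelli in Step~1 with the Dvoretzky--Kiefer--Wolfowitz inequality in Massart's tight-constant form. Everything rests on the ratio identity~\eqref{eq:hatac-as-ratio}, $\hat A_c = \hat G/\hat H$, together with the quotient bound~\eqref{eq:quotient-identity}. These reduce the problem to uniform control of two empirical survival-type functions.

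\emph{Step 1 (two DKW events).} Under iid calibration tuples, $\hat H(\lambda)$ is the empirical survival function of the iid scalars $\nu_i$. One DKW--Massart application gives $\PP(\sup_\lambda|\hat H - H| > \eps) \le 2e^{-2n\eps^2}$.

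For $\hat G$, I would map $\nu_i$ to $\tilde\nu_i \coloneqq \nu_i$ when $\hat y_i = Y_i$ and $\tilde\nu_i \coloneqq -1$ otherwise. For $\lambda\in[0,1]$, $\hat G(\lambda)$ is then exactly the empirical survival function of the iid $\tilde\nu_i$ at $\lambda$, and $G$ is the matching population survival function. DKW--Massart applies verbatim and gives the same tail bound. One point must be checked here: the strict versus non-strict inequality in the indicators ($\ge$ in the definitions of $\hat G,\hat H$, $>$ in $G,H$). I would either align conventions or note that DKW controls both the left- and right-limit versions of the empirical CDF, so the sup bound is unaffected.

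Choosing $\eps = \eps_n(\delta)$ makes each tail equal to $2e^{-2n\eps^2} = \delta/2$. A union bound then puts the event $E \coloneqq \{\sup|\hat G-G|\le\eps,\ \sup|\hat H-H|\le\eps\}$ at probability at least $1-\delta$. This explains the $\log(4/\delta)$.

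\emph{Step 2 (denominator on $E$).} On $E$, for $\lambda\in\Lambda_0$, the inequality $\hat H(\lambda)\ge H(\lambda)-\eps \ge s_0 - \eps \ge s_0/2$ follows from the hypothesis $\eps\le s_0/2$. Here $s_0\ge p_v\Delta_0>0$ by~\eqref{eq:H-lower-bound}; this does not need compactness, only that $\Lambda_0$ is the set where $\Delta \ge \Delta_0$.

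\emph{Step 3 (quotient).} Apply~\eqref{eq:quotient-identity} with $|A_c|\le 1$ to get
\[
|\hat A_c - A_c| \le \frac{|\hat G - G| + |\hat H - H|}{\hat H} \le \frac{2\eps}{s_0/2} = \frac{4}{s_0}\eps_n(\delta),
\]
uniformly on $\Lambda_0$, which is~\eqref{eq:ac-finite-sample}. The equivalence $\eps_n(\delta)\le s_0/2 \iff n\ge 2\log(4/\delta)/s_0^2$ is direct algebra.

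The only step needing real care is Step~1 for $\hat G$. Recasting a sub-distribution survival function as an honest DKW object, and matching strict and non-strict thresholds, is where a sloppy argument could lose the constant $2$ or pick up an atom-related discrepancy. The rest is the deterministic argument already given for Corollary~\ref{cor:predictor}.
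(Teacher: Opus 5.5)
Your proposal is correct and matches the paper's proof step for step: DKW--Massart applied to $\hat G$ and $\hat H$, a union bound with $4e^{-2n\eps^2}=\delta$, the lower bound $\hat H\ge s_0-\eps\ge s_0/2$ on $\Lambda_0$, and the quotient identity with $|A_c|\le 1$ giving $4\eps/s_0$. Your $\tilde\nu_i$ device and your attention to the $\ge$ versus $>$ mismatch are more careful than the paper, which simply invokes DKW for both functions; note, though, that only your first fix (aligning the conventions) works in general, because when $\nu$ has atoms (e.g.\ under majority voting) $\frac1n\sum_i\Ind[\nu_i\ge\lambda]$ converges to $\PP(\nu\ge\lambda)$ rather than $\PP(\nu>\lambda)$, so the left-limit remark alone does not close the gap.
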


\begin{proof}
Set $\eps = \eps_n(\delta)$. Under iid calibration data, the DKW-Massart inequality~\cite{massart1990tight} gives
\[
	\PP\!\Bigl(\sup_{\lambda} |\hat G(\lambda) - G(\lambda)| > \eps\Bigr) \;\leq\; 2 e^{-2 n \eps^2},
	\qquad
	\PP\!\Bigl(\sup_{\lambda} |\hat H(\lambda) - H(\lambda)| > \eps\Bigr) \;\leq\; 2 e^{-2 n \eps^2}.
\]
By a union bound, the event
\[
	\cE \;\coloneqq\; \Bigl\{\sup_\lambda |\hat G - G| \leq \eps\Bigr\} \,\cap\, \Bigl\{\sup_\lambda |\hat H - H| \leq \eps\Bigr\}
\]
occurs with probability at least $1 - 4 e^{-2n\eps^2} = 1 - \delta$. On $\cE$, for $\lambda \in \Lambda_0$:
\[
	\hat H(\lambda) \;\geq\; H(\lambda) - \eps \;\geq\; s_0 - \eps \;\geq\; s_0/2,
\]
using $\eps \leq s_0/2$. The quotient identity~\eqref{eq:quotient-identity}, combined with $|A_c| \leq 1$, then yields
\[
	|\hat A_c(\lambda) - A_c(\lambda)|
	\;\leq\; \frac{|\hat G(\lambda) - G(\lambda)| + |\hat H(\lambda) - H(\lambda)|}{\hat H(\lambda)}
	\;\leq\; \frac{2 \eps}{s_0/2}
	\;=\; \frac{4 \eps}{s_0}.
\]
Taking the supremum over $\lambda \in \Lambda_0$ gives~\eqref{eq:ac-finite-sample}.
\end{proof}

\begin{remark}[Connection to the $\Delta_0$ formulation of Corollary~\ref{cor:predictor}]
Combining \Cref{thm:ac-concentration} with the lower bound $s_0 \geq p_v\,\Delta_0$ from~\eqref{eq:H-lower-bound} gives: under iid and the conditions of Corollary~\ref{cor:predictor}, with probability at least $1-\delta$,
\[
	\sup_{\lambda:\,\Delta(\lambda) \geq \Delta_0} |\hat A_c(\lambda) - A_c(\lambda)| \;\leq\; \frac{4}{p_v\,\Delta_0}\sqrt{\frac{\log(4/\delta)}{2n}} \;=\; O\bigl(n^{-1/2}\bigr),
\]
upgrading the almost-sure convergence in Corollary~\ref{cor:predictor} to an explicit $O(n^{-1/2})$ rate.
\end{remark}

\subsubsection{Extension to infinite exchangeable calibration sequences}

\Cref{thm:ac-concentration} assumes iid calibration data. The CRC validity of \Cref{thm:crc} only requires exchangeability, and it is natural to ask whether the concentration result extends to that setting. The answer is partial: under \emph{infinite} exchangeability, an analogous bound holds for a conditional target derived from de Finetti's theorem, but the marginal target picks up an additional non-vanishing term. We do not pursue finite-sample bounds for finite exchangeable sequences; the iid case above is the operative one for our experiments.

\paragraph{Conditional iid via de Finetti.}
Suppose $(X_i, Y_i, \bT_i)_{i \geq 1}$ is an infinite exchangeable sequence. By de Finetti's theorem~\cite{barber2024finetti}, there exists a random probability measure $\cP$ -- the \emph{directing measure} -- such that, conditional on $\cP$, the tuples $(X_i, Y_i, \bT_i)_{i \geq 1}$ are iid with law $\cP$. (In the iid case, $\cP$ is degenerate, equal almost surely to the data-generating distribution.) Define the conditional analogues
\[
	G_{\cP}(\lambda) \coloneqq \PP_{\cP}(\nu > \lambda,\,\hat y = Y),
	\quad
	H_{\cP}(\lambda) \coloneqq \PP_{\cP}(\nu > \lambda),
	\quad
	A_c^{\cP}(\lambda) \coloneqq \frac{G_{\cP}(\lambda)}{H_{\cP}(\lambda)}.
\]
The marginal probabilities satisfy $G(\lambda) = \EE[G_{\cP}(\lambda)]$ and $H(\lambda) = \EE[H_{\cP}(\lambda)]$, but the marginal accuracy $A_c$ is \emph{not} in general equal to $\EE[A_c^{\cP}]$, since $A_c^{\cP}$ is a ratio of two random functions.

\begin{remark}[De-Finetti-conditional certificate]
\label{rem:definetti-conditional}
Let $s_0^{\cP} \coloneqq \inf_{\lambda \in \Lambda_0} H_{\cP}(\lambda)$. Conditional on $\cP$, the tuples are iid, so the proof of \Cref{thm:ac-concentration} applies verbatim conditional on $\cP$: for any $\delta \in (0,1)$, on the event $\{s_0^{\cP} > 0,\ n \geq 2\log(4/\delta)/(s_0^{\cP})^2\}$, with conditional probability at least $1-\delta$ given $\cP$,
\[
	\sup_{\lambda \in \Lambda_0} \bigl|\hat A_c(\lambda) - A_c^{\cP}(\lambda)\bigr| \;\leq\; \frac{4}{s_0^{\cP}}\sqrt{\frac{\log(4/\delta)}{2n}}.
\]
This is a \emph{de-Finetti-conditional certificate}: the plug-in predictor $\hat A_c$ tracks the conditional accuracy $A_c^{\cP}$. Since $A_c^{\cP}$ is itself random, this is a different statement from convergence to the marginal $A_c$.
\end{remark}

The conditioning trick used here -- exchangeable calibration data become conditionally iid given the directing measure, after which standard concentration applies -- mirrors the strategy used by Vovk in the analysis of conditional validity for inductive conformal predictors~\cite{vovk2012conditional}.

\begin{remark}[The marginal target acquires a heterogeneity radius]
\label{rem:heterogeneity}
For the marginal target $A_c$, an additional term enters that does not shrink in $n$. Writing
\[
	|\hat G(\lambda) - G(\lambda)| \;\leq\; |\hat G(\lambda) - G_{\cP}(\lambda)| \;+\; |G_{\cP}(\lambda) - G(\lambda)|,
\]
the first term is the conditional-iid term controlled by DKW (Remark~\ref{rem:definetti-conditional}); the second is the heterogeneity gap of the directing measure. $G_{\cP}(\lambda)$ is a $[0,1]$-valued random variable with mean $G(\lambda)$, but boundedness alone does not make $|G_{\cP}(\lambda) - G(\lambda)|$ shrink with $n$. Defining
\[
	r_G(\lambda, \delta') \;\coloneqq\; \text{the } (1-\delta')\text{-quantile of } |G_{\cP}(\lambda) - G(\lambda)|,
\]
and analogously $r_H$, a union bound gives, with marginal probability at least $1-\delta$,
\[
	|\hat G(\lambda) - G(\lambda)| \;\leq\; \sqrt{\log(8/\delta)/(2n)} \;+\; r_G(\lambda, \delta/4),
\]
and similarly for $H$, so the resulting marginal bound on $|\hat A_c - A_c|$ contains $r_G + r_H$ as an additional, non-vanishing term. In the iid case, the directing measure is degenerate and $r_G \equiv r_H \equiv 0$, recovering \Cref{thm:ac-concentration} verbatim. Under more general infinite exchangeability the heterogeneity radii need not vanish, so the safe finite-sample claim is the de-Finetti-conditional certificate of \Cref{rem:definetti-conditional} rather than a marginal bound on $|\hat A_c - A_c|$.
\end{remark}

\subsection{Hazard rate and survival recovery}
\label{app:app-hazard}

The hazard rate admits a definition for both discrete and continuous conditional distributions of $\nu$, and the discrete case is the operative one for majority voting, where $\nu = k/m$ takes values in the finite grid $\{1/m, 2/m, . . . , 1\}$.

\paragraph{Discrete case} When $\nu$ is supported on a countable set, the conditional hazard at $\lambda$ is defined as

$$h_{\mathrm{cor}}(\lambda) \coloneqq \frac{P(\nu = \lambda \mid \nu \geq \lambda,\, \hat{y} = Y)}{1}, \qquad h_{\mathrm{err}}(\lambda) \coloneqq \frac{P(\nu = \lambda \mid \nu \geq \lambda,\, \hat{y} \neq Y)}{1},$$

that is, the conditional probability of exiting the retained set precisely at $\lambda$ given survival to $\lambda$. The separability gap $\delta(\lambda) = h_{\mathrm{err}}(\lambda) - h_{\mathrm{cor}}(\lambda)$ and the survival recovery identity

$$S_{\mathrm{cor}}(\lambda) = \prod_{\lambda' < \lambda} \bigl(1 - h_{\mathrm{cor}}(\lambda')\bigr), \qquad S_{\mathrm{err}}(\lambda) = \prod_{\lambda' < \lambda} \bigl(1 - h_{\mathrm{err}}(\lambda')\bigr),$$

hold without any continuity assumption, with products taken over the support points of $\nu$ below $\lambda$. In particular, both $\delta(\lambda)$ and the ratio $S_{\mathrm{err}}(\lambda)/S_{\mathrm{cor}}(\lambda)$ are directly estimable from calibration data in the discrete case. This makes strict separability a practically verifiable condition even under majority voting.

\paragraph{Continuous case} When the conditional distributions of $\nu$ given $\{\hat y = Y\}$ and $\{\hat y \neq Y\}$ admit densities $f_{\mathrm{cor}}$ and $f_{\mathrm{err}}$ on $(0, 1)$, the hazard rates reduce to the standard continuous form

$$h_{\mathrm{cor}}(\lambda) = \frac{f_{\mathrm{cor}}(\lambda)}{S_{\mathrm{cor}}(\lambda)} = -\frac{d}{d\lambda}\log S_{\mathrm{cor}}(\lambda), \qquad h_{\mathrm{err}}(\lambda) = \frac{f_{\mathrm{err}}(\lambda)}{S_{\mathrm{err}}(\lambda)} = -\frac{d}{d\lambda}\log S_{\mathrm{err}}(\lambda),$$

and integrating recovers the survival functions. The ratio identity driving the proof of Proposition~\ref{prop:pareto},

$$\frac{S_{\mathrm{err}}(\lambda)}{S_{\mathrm{cor}}(\lambda)} = \exp\!\left(-\int_0^\lambda \delta(u)\, du\right),$$

holds in the continuous case and motivates the integral condition used in Appendix~\ref{app:proof-ay}. The discrete analogue replaces the integral with the product form above.

\emph{Strict separability is local; separability is global.} In both cases, pointwise strict separability $\delta > 0$ at each support point or Lebesgue-a.e. — implies $S_{\mathrm{err}}/S_{\mathrm{cor}} \leq 1$, hence $\Delta(\lambda) > 0$. The converse fails: $\Delta(\lambda) > 0$ is compatible with $\delta$ dipping to zero or below on a set of measure zero (continuous case) or at isolated support points (discrete case), provided the cumulative product or integral remains favorable.


\section{Additional Experiment Details}
\label{app:exp-details}

\subsection*{Model and Dataset Licenses}

All datasets and models are publicly available under permissive licenses. Datasets: GSM8K (MIT), MATH (MIT), HotpotQA (CC-BY-SA-4.0). Base models evaluated include variants from the following families: Qwen3 family (Apache 2.0); DeepSeek-R1-Distill-Qwen family (MIT); Gemma-4 family (Apache 2.0). Reward model, including variants: Skywork-Reward-V2 (Llama 3.1 Community License). All licenses permit academic research, benchmarking, and publication of results. No proprietary models were used.

\subsection{Experiment Grid}
\label{app:exp-grid}

Table~\ref{tab:exp-grid} lists the ten dataset--model configurations evaluated in the paper.
Each cell is constructed by sampling $m=16$ Chain-of-Thought paths per prompt, temperature reported; per-cell we use $n_\text{total}=1300$ prompts, of which $n_\text{cal}=200$ are reserved for calibration and the remainder are held out as test data.
For tables and figures whose targets are marginal expectations, we average over 20 random calibration--test splits; for operating-curve plots (Figures~\ref{fig:pred-vs-obs} and~\ref{fig:frontiers}) we use a single fixed split, with fixed seed for reproducibility.
The pair (HotpotQA, DeepSeek-R1-Distill-Qwen-1.5B) is omitted because the model's tokenizer cannot correctly decode the reasoning paths generated for HotpotQA.
Temperatures are chosen based on each model’s recommended settings, while allowing minor adjustments to account for generation variability.

\begin{table}[h]
	\centering
	\caption{The ten dataset--model configurations used in the paper. MV is the unweighted majority-vote accuracy at $m=16$.}
	\label{tab:exp-grid}
	\small
	\begin{tabular}{rllcccc}
		\toprule
		\textbf{\#} & \textbf{Dataset} & \textbf{Model}                & \textbf{Temperature}  & \textbf{$n_\text{total}$} & \textbf{$m$} & \textbf{MV} \\
		\midrule
		1           & GSM8K            & Qwen3-1.7B                    & 0.8  & 1300                      & 16           & 0.730          \\
		2           & GSM8K            & DeepSeek-R1-Distill-Qwen-1.5B & 0.8  & 1300                      & 16           & 0.822          \\
		3           & GSM8K            & Gemma-4-E4B                   & 1.0  & 1300                      & 16           & 0.901          \\
		4           & MATH             & Qwen3-1.7B                    & 0.8  & 1300                      & 16           & 0.712          \\
		5           & MATH             & DeepSeek-R1-Distill-Qwen-1.5B & 0.8  & 1300                      & 16           & 0.603          \\
		6           & MATH             & Gemma-4-E4B                   & 1.0  & 1300                      & 16           & 0.583          \\
		7           & MATH-Hard        & Qwen3-8B                      & 0.8  & 1300                      & 16           & 0.537          \\
		8           & HotpotQA         & Qwen3-1.7B                    & 0.75  & 1300                      & 16           & 0.519          \\
		9           & HotpotQA         & Gemma-4-E4B                   & 1.0  & 1300                      & 16           & 0.724          \\
		10          & HotpotQA         & Qwen3-8B                      & 0.65  & 1300                      & 16           & 0.709          \\
		\bottomrule
	\end{tabular}
\end{table}

\subsection{Model Details}
\label{app:model-details}

\paragraph{Reasoning models (path generation).}
Model identifiers and parameter counts are listed in Table~\ref{tab:reasoning-models}.

\begin{table}[h]
	\centering
	\caption{Reasoning models used for path generation.}
	\label{tab:reasoning-models}
	\small
	\begin{tabular}{lll}
		\toprule
		\textbf{Short name} & \textbf{Identifier}                    & \textbf{Params}      \\
		\midrule
		Qwen3-1.7B          & \texttt{Qwen/Qwen3-1.7B}                           & 1.7B                 \\
		R1-Qwen-1.5B        & \texttt{deepseek-ai/DeepSeek-R1-Distill-Qwen-1.5B} & 1.5B                 \\
		Gemma-4-E4B         & \texttt{google/gemma-4-E4B-it}                     & $\sim$4B (effective) \\
		Qwen3-8B            & \texttt{Qwen/Qwen3-8B}                             & 8B                   \\
		\bottomrule
	\end{tabular}
\end{table}

\paragraph{Scorer models.}
Reward, Judge, and Perplexity scores are produced by a separate set of models. Reward scores use Skywork's V2 family; Judge scores use a frontier-class instruction-tuned model that classifies a candidate answer as correct or incorrect; Perplexity is computed under the same model that generated the path. We report two instantiations per family in the appendix tables to assess robustness to the specific scorer choice.

\begin{table}[h]
	\centering
	\caption{Scorer models, by family.}
	\label{tab:scorer-models}
	\small
	\begin{tabular}{lll}
		\toprule
		\textbf{Family} & \textbf{Name}          & \textbf{HuggingFace identifier}                                         \\
		\midrule
		\multirow{3}{*}{Reward}
		                & Skywork-Llama-8B~\citep{liu2025skywork}     & \texttt{Skywork/Skywork-Reward-V2-Llama-3.1-8B}                         \\
		                & Skywork-Qwen-8B (appendix)   & \texttt{Skywork/Skywork-Reward-V2-Qwen3-8B}                             \\
		                & Skywork-Llama-3B (appendix)  & \texttt{Skywork/Skywork-Reward-V2-Llama-3.2-3B}                         \\
		\midrule
		\multirow{2}{*}{Judge}
		                & Gemma-26B                    & \texttt{google/gemma-4-26B-A4B-it}                                      \\
		                & Qwen3-30B (appendix)         & \texttt{Qwen/Qwen3-30B-A3B-Instruct-2507}                               \\
		                & R1-Qwen-14B (math, appendix) & \texttt{deepseek-ai/DeepSeek-R1-Distill-Qwen-14B}                       \\
		                & Qwen3-8B (QA, appendix)      & \texttt{Qwen/Qwen3-8B}                                                  \\
		\midrule
		Perplexity      & Generating model             & (same as path-generation model in each row of Table~\ref{tab:exp-grid}) \\
		\bottomrule
	\end{tabular}
\end{table}

\subsection{Score Computation Details}
\label{app:score-details}

\paragraph{Reward.}
Each candidate path is evaluated with a Skywork reward model conditioned on the prompt and the full path text. The model returns a scalar quality score $r \in \mathbb{R}$ (higher is better).

\paragraph{Judge.}
A judge model is presented with the question and a candidate path and asked to assess correctness via a binary Yes/No response. The score is defined as $s = \log p(\texttt{Y})$, i.e., the log-probability assigned to a correct judgment. The full prompt template is provided in Appendix~\ref{app:prompts}.

\paragraph{Perplexity.}
Perplexity measures the average per-token uncertainty of the generating model over a sequence. For a path $T = (T_1, \dots, T_{|T|})$ conditioned on prompt $X$, we use its log form:
$$s = \frac{1}{|T|}\sum_{t=1}^{|T|} \log p(T_t \mid T_{<t}, X).$$
Higher values (lower perplexity) indicate greater model confidence in the generated path. A second variant, \emph{Perplexity-Std}, uses the standard deviation of token-level log-probabilities, $s = \mathrm{std}\{\log p(T_t \mid T_{<t}, X)\}$, and captures the variability of token-level confidence rather than its mean; it is reported in the ablation.

\paragraph{Self-Consistency (SC, ablation).}
Self-consistency scores a path by how similar it is to the other paths in the pool. We operationalize similarity via the Jaccard coefficient between the unigram token sets of two paths: $\mathrm{Jaccard}(A, B) = |A \cap B| / |A \cup B|$. The score of path $T^{(j)}$ is its mean pairwise similarity to the remaining $m-1$ paths,
$$s_j = \frac{1}{m-1}\sum_{k \neq j} \mathrm{Jaccard}(T^{(j)}, T^{(k)}),$$
rewarding paths that are representative of the pool.

\paragraph{Weighting.}
Path weights are $w_j = \exp(q_j)$, so higher-scoring paths receive greater influence. The exponential weighting scheme is justified by the ablation in Appendix~\ref{app:abla-weight}.

\paragraph{Per-cell scorer.}
Each row of Table~\ref{tab:exp-grid} uses one primary Reward scorer (Skywork-Llama-8B) and one primary Judge scorer (Gemma-26B) for the main-paper figures.
Additional instantiations are reported in Table~\ref{tab:scorer-models} to assess robustness to scorer choice.

\subsection{Answer Extraction and Matching}
\label{app:answer-extraction}

\paragraph{Mathematical answers.}
We extract the final answer from the last \verb|\boxed{...}| expression in the path, with fallbacks to phrases of the form ``The answer is'' or the trailing numeric expression when no boxed span is present. Correctness is determined via the evaluation harness’s equivalence check~\citep{yang2024qwen2, liao2024mario}, which handles common surface-form variations (e.g., $\tfrac{1}{2}$ vs.\ $0.5$, $\sqrt{2}$ vs.\ $2^{1/2}$).

\paragraph{Open-domain QA.}
For HotpotQA, we extract the answer from the last \verb|<answer>...</answer>| span in the path, with a fallback to the trailing sentence. Correctness is determined by exact match after standard normalization (lowercasing, removal of articles and punctuation, whitespace collapsing), yielding the binary correctness label required by our framework.

\subsection{Prompts Used}
\label{app:prompts}
All judge prompts present only the question and the candidate reasoning path, deliberately omitting any reference passages or background context. This design reflects a key distinction in what each scoring component is meant to evaluate: the judge is tasked with assessing the \emph{quality of reasoning}, whether the chain-of-thought is logically coherent and leads to a defensible answer, rather than verifying factual grounding against an external source. Including retrieved passages would shift the judge's role toward factuality and summarization assessment, which is orthogonal to our objective. Providing background context would also substantially inflate the judge's input length at inference time, introducing unnecessary computational overhead across the large number of scoring calls our framework requires. We find this separation of concerns to be both principled and practically efficient: factual grounding is implicitly encoded in the path itself during generation (where the context \emph{is} provided), while the judge focuses solely on the inferential validity of the resulting reasoning chain.

\paragraph{Path-generation prompts.}

\textit{Math (applies to GSM8K, MATH, MATH-Hard cells):}
\begin{promptbox}
[System]
Solve the problem using step-by-step reasoning.
Write the reasoning as a clean logical sequence.
Do not include self-corrections, backtracking, or phrases like
'wait', 'but', 'actually', or 'let me reconsider'.

[User]
{question}
\end{promptbox}

\textit{QA (applies to HotpotQA cells):}
\begin{promptbox}
[System]
You are a careful reading-comprehension assistant.
You will be given several reference passages followed by a question.
Reason step-by-step using only the information in the passages.
You must write down your thinking even it's short and simple.
Write your reasoning as a clean, forward-only logical chain.
After finishing your reasoning, state your final answer concisely.
Format your response exactly as:
<think>
[your step-by-step reasoning]
</think>
<answer>[your final answer]</answer>

[User]
Passages:
{context}

Question: {question}

Please reason through the passages and provide your answer.
\end{promptbox}

\paragraph{Judge prompt.}

\textit{Math judge:}
\begin{promptbox}
You will assess a candidate reasoning for a math word problem.
If the reasoning is likely correct and leads to the right final
answer, answer 'Y'. Otherwise answer 'N'.
Output only one letter.

Question: {question}

Candidate reasoning:
{draft}

Assessment (Y/N):
\end{promptbox}

\textit{QA judge:}
\begin{promptbox}
You are an expert evaluator for multi-hop question answering.
Read the question and the candidate reasoning below, then decide:
  * Answer 'Y' if the reasoning is logically sound and leads to a
    correct, well-supported answer.
  * Answer 'N' otherwise.
Output exactly one letter - nothing else.

Question: {question}

Candidate reasoning:
{draft}

Assessment (Y/N):
\end{promptbox}

\section{Additional Experiment Results}
\label{app:exp-results}

\subsection{Main Results Table}
\label{app:main-table}

Table~\ref{tab:ce-mathhard}-\ref{tab:main-hotpotqa} report the headline metrics --- confident-error rate, conditional accuracy $A_c$ and yield $Y$ --- for all ten configurations and four policies (greedy decoding, oracle accuracy --- the fraction of questions with at least one correct path, majority vote, our calibrated policy at $\alpha\in\{0.05, 0.10\}$).
Numbers are mean $\pm$ std over twenty random calibration--test splits at $n_\text{cal}=200$, $m=16$.
Mean realized confident-error rates are close to the target and any above-target means are within one standard deviation, consistent with the marginal CRC guarantee and finite test-split variability.
Entries marked $\dagger$ in Tables \ref{tab:ce-math}-\ref{tab:ce-hotpotqa} correspond to majority voting, which cannot achieve the target confident-error rate at $\alpha=0.05$ on these harder configurations, underscoring that score-based weighting is necessary for reliable threshold calibration.

\begin{table}[h]
  \centering
  \caption{MATH-Hard empirical confident-error rate ($n_\text{cal}=200$, $m=16$, 20 splits; target $\alpha$ shown in column header). Values are mean\,$\pm$\,std. 
  }
  \label{tab:ce-mathhard}
  \small
  \begin{tabular}{llcc}
    \toprule
    \textbf{Model} & \textbf{Method} &
     $\boldsymbol{\alpha=0.10}$ &
     $\boldsymbol{\alpha=0.05}$ \\
    \midrule
    \multirow{8}{*}{Qwen3-8B} & \textit{MV} & $0.079{\scriptstyle\,\pm\,}0.022$ & $0.040{\scriptstyle\,\pm\,}0.012$ \\
    \cmidrule(lr){2-4}
     & \quad R-L8B (Skywork-Reward-V2-Llama-3.1-8B) & $0.090{\scriptstyle\,\pm\,}0.024$ & $0.044{\scriptstyle\,\pm\,}0.015$ \\
     & \quad R-Q8B (Skywork-Qwen-8B) & $0.092{\scriptstyle\,\pm\,}0.024$ & $0.043{\scriptstyle\,\pm\,}0.015$ \\
     & \quad J-G26B (Gemma-26B) & $0.085{\scriptstyle\,\pm\,}0.021$ & $0.042{\scriptstyle\,\pm\,}0.015$ \\
     & \quad J-Q30B (Qwen3-30B) & $0.088{\scriptstyle\,\pm\,}0.016$ & $0.039{\scriptstyle\,\pm\,}0.011$ \\
     & \quad J-R1Q14B (R1-Qwen-14B) & $0.086{\scriptstyle\,\pm\,}0.024$ & $0.044{\scriptstyle\,\pm\,}0.017$ \\
     & \quad Ppl & $0.091{\scriptstyle\,\pm\,}0.018$ & $0.044{\scriptstyle\,\pm\,}0.017$ \\
     & \quad Ppl-Std & $0.093{\scriptstyle\,\pm\,}0.023$ & $0.042{\scriptstyle\,\pm\,}0.015$ \\
    \bottomrule
  \end{tabular}
\end{table}

\begin{table}[p]
  \centering
  \caption{GSM8K empirical confident-error rate ($n_\text{cal}=200$, $m=16$, 20 splits; target $\alpha$ shown in column header). Values are mean\,$\pm$\,std. 
  }
  \label{tab:ce-gsm8k}
  \small
  \begin{tabular}{llcc}
    \toprule
    \textbf{Model} & \textbf{Method} &
     $\boldsymbol{\alpha=0.10}$ &
     $\boldsymbol{\alpha=0.05}$ \\
    \midrule
    \multirow{8}{*}{Qwen3-1.7B} & \textit{MV} & $0.080{\scriptstyle\,\pm\,}0.017$ & $0.045{\scriptstyle\,\pm\,}0.013$ \\
    \cmidrule(lr){2-4}
     & \quad R-L8B & $0.096{\scriptstyle\,\pm\,}0.018$ & $0.046{\scriptstyle\,\pm\,}0.013$ \\
     & \quad R-Q8B & $0.097{\scriptstyle\,\pm\,}0.025$ & $0.047{\scriptstyle\,\pm\,}0.016$ \\
     & \quad J-G26B & $0.098{\scriptstyle\,\pm\,}0.016$ & $0.047{\scriptstyle\,\pm\,}0.014$ \\
     & \quad J-Q30B & $0.091{\scriptstyle\,\pm\,}0.020$ & $0.048{\scriptstyle\,\pm\,}0.012$ \\
     & \quad J-R1Q14B & $0.097{\scriptstyle\,\pm\,}0.016$ & $0.048{\scriptstyle\,\pm\,}0.013$ \\
     & \quad Ppl & $0.103{\scriptstyle\,\pm\,}0.020$ & $0.048{\scriptstyle\,\pm\,}0.011$ \\
     & \quad Ppl-Std & $0.098{\scriptstyle\,\pm\,}0.019$ & $0.050{\scriptstyle\,\pm\,}0.010$ \\
    \midrule
    \multirow{8}{*}{R1-Qwen-1.5B} & \textit{MV} & $0.088{\scriptstyle\,\pm\,}0.026$ & $0.039{\scriptstyle\,\pm\,}0.017$ \\
    \cmidrule(lr){2-4}
     & \quad R-L8B & $0.091{\scriptstyle\,\pm\,}0.018$ & $0.041{\scriptstyle\,\pm\,}0.017$ \\
     & \quad R-Q8B & $0.087{\scriptstyle\,\pm\,}0.018$ & $0.041{\scriptstyle\,\pm\,}0.015$ \\
     & \quad J-G26B & $0.096{\scriptstyle\,\pm\,}0.023$ & $0.043{\scriptstyle\,\pm\,}0.014$ \\
     & \quad J-Q30B & $0.094{\scriptstyle\,\pm\,}0.022$ & $0.039{\scriptstyle\,\pm\,}0.016$ \\
     & \quad J-R1Q14B & $0.093{\scriptstyle\,\pm\,}0.020$ & $0.042{\scriptstyle\,\pm\,}0.016$ \\
     & \quad Ppl & $0.092{\scriptstyle\,\pm\,}0.018$ & $0.040{\scriptstyle\,\pm\,}0.013$ \\
     & \quad Ppl-Std & $0.090{\scriptstyle\,\pm\,}0.020$ & $0.041{\scriptstyle\,\pm\,}0.014$ \\
    \midrule
    \multirow{8}{*}{Gemma-4-E4B} & \textit{MV} & $0.084{\scriptstyle\,\pm\,}0.018$ & $0.040{\scriptstyle\,\pm\,}0.017$ \\
    \cmidrule(lr){2-4}
     & \quad R-L8B & $0.085{\scriptstyle\,\pm\,}0.014$ & $0.042{\scriptstyle\,\pm\,}0.019$ \\
     & \quad R-Q8B & $0.076{\scriptstyle\,\pm\,}0.014$ & $0.043{\scriptstyle\,\pm\,}0.019$ \\
     & \quad J-G26B & $0.088{\scriptstyle\,\pm\,}0.020$ & $0.040{\scriptstyle\,\pm\,}0.014$ \\
     & \quad J-Q30B & $0.086{\scriptstyle\,\pm\,}0.017$ & $0.043{\scriptstyle\,\pm\,}0.017$ \\
     & \quad J-R1Q14B & $0.093{\scriptstyle\,\pm\,}0.030$ & $0.044{\scriptstyle\,\pm\,}0.020$ \\
     & \quad Ppl & $0.091{\scriptstyle\,\pm\,}0.020$ & $0.045{\scriptstyle\,\pm\,}0.015$ \\
     & \quad Ppl-Std & $0.085{\scriptstyle\,\pm\,}0.023$ & $0.044{\scriptstyle\,\pm\,}0.020$ \\
    \bottomrule
  \end{tabular}
\end{table}

\begin{table}[p]
  \centering
  \caption{MATH empirical confident-error rate ($n_\text{cal}=200$, $m=16$, 20 splits; target $\alpha$ shown in column header). Values are mean\,$\pm$\,std. Entries marked $\dagger$ have yield${}<0.01$ and are omitted.}
  \label{tab:ce-math}
  \small
  \begin{tabular}{llcc}
    \toprule
    \textbf{Model} & \textbf{Method} &
     $\boldsymbol{\alpha=0.10}$ &
     $\boldsymbol{\alpha=0.05}$ \\
    \midrule
    \multirow{8}{*}{Qwen3-1.7B} & \textit{MV} & $0.092{\scriptstyle\,\pm\,}0.027$ & $0.045{\scriptstyle\,\pm\,}0.014$ \\
    \cmidrule(lr){2-4}
     & \quad R-L8B & $0.095{\scriptstyle\,\pm\,}0.021$ & $0.048{\scriptstyle\,\pm\,}0.012$ \\
     & \quad R-Q8B & $0.097{\scriptstyle\,\pm\,}0.019$ & $0.052{\scriptstyle\,\pm\,}0.016$ \\
     & \quad J-G26B & $0.102{\scriptstyle\,\pm\,}0.020$ & $0.052{\scriptstyle\,\pm\,}0.018$ \\
     & \quad J-Q30B & $0.100{\scriptstyle\,\pm\,}0.028$ & $0.050{\scriptstyle\,\pm\,}0.016$ \\
     & \quad J-R1Q14B & $0.104{\scriptstyle\,\pm\,}0.027$ & $0.051{\scriptstyle\,\pm\,}0.015$ \\
     & \quad Ppl & $0.102{\scriptstyle\,\pm\,}0.031$ & $0.053{\scriptstyle\,\pm\,}0.019$ \\
     & \quad Ppl-Std & $0.100{\scriptstyle\,\pm\,}0.024$ & $0.047{\scriptstyle\,\pm\,}0.017$ \\
    \midrule
    \multirow{8}{*}{R1-Qwen-1.5B} & \textit{MV} & $0.090{\scriptstyle\,\pm\,}0.019$ & $0.042{\scriptstyle\,\pm\,}0.015$ \\
    \cmidrule(lr){2-4}
     & \quad R-L8B & $0.097{\scriptstyle\,\pm\,}0.017$ & $0.046{\scriptstyle\,\pm\,}0.011$ \\
     & \quad R-Q8B & $0.099{\scriptstyle\,\pm\,}0.022$ & $0.048{\scriptstyle\,\pm\,}0.013$ \\
     & \quad J-G26B & $0.101{\scriptstyle\,\pm\,}0.020$ & $0.049{\scriptstyle\,\pm\,}0.016$ \\
     & \quad J-Q30B & $0.095{\scriptstyle\,\pm\,}0.015$ & $0.050{\scriptstyle\,\pm\,}0.013$ \\
     & \quad J-R1Q14B & $0.096{\scriptstyle\,\pm\,}0.015$ & $0.050{\scriptstyle\,\pm\,}0.017$ \\
     & \quad Ppl & $0.096{\scriptstyle\,\pm\,}0.018$ & $0.047{\scriptstyle\,\pm\,}0.011$ \\
     & \quad Ppl-Std & $0.096{\scriptstyle\,\pm\,}0.016$ & $0.049{\scriptstyle\,\pm\,}0.011$ \\
    \midrule
    \multirow{8}{*}{Gemma-4-E4B} & \textit{MV} & $0.066{\scriptstyle\,\pm\,}0.039$ & $\dagger$ \\
    \cmidrule(lr){2-4}
     & \quad R-L8B & $0.098{\scriptstyle\,\pm\,}0.020$ & $0.028{\scriptstyle\,\pm\,}0.026$ \\
     & \quad R-Q8B & $0.095{\scriptstyle\,\pm\,}0.019$ & $0.027{\scriptstyle\,\pm\,}0.019$ \\
     & \quad J-G26B & $0.093{\scriptstyle\,\pm\,}0.023$ & $0.020{\scriptstyle\,\pm\,}0.014$ \\
     & \quad J-Q30B & $0.090{\scriptstyle\,\pm\,}0.023$ & $0.008{\scriptstyle\,\pm\,}0.002$ \\
     & \quad J-R1Q14B & $0.095{\scriptstyle\,\pm\,}0.018$ & $0.017{\scriptstyle\,\pm\,}0.020$ \\
     & \quad Ppl & $0.094{\scriptstyle\,\pm\,}0.017$ & $0.012{\scriptstyle\,\pm\,}0.016$ \\
     & \quad Ppl-Std & $0.094{\scriptstyle\,\pm\,}0.017$ & $0.026{\scriptstyle\,\pm\,}0.024$ \\
    \bottomrule
  \end{tabular}
\end{table}

\begin{table}[p]
  \centering
  \caption{HotpotQA empirical confident-error rate ($n_\text{cal}=200$, $m=16$, 20 splits; target $\alpha$ shown in column header). Values are mean\,$\pm$\,std. Entries marked $\dagger$ have yield${}<0.01$ and are omitted.}
  \label{tab:ce-hotpotqa}
  \small
  \begin{tabular}{llcc}
    \toprule
    \textbf{Model} & \textbf{Method} &
     $\boldsymbol{\alpha=0.10}$ &
     $\boldsymbol{\alpha=0.05}$ \\
    \midrule
    \multirow{8}{*}{Qwen3-1.7B} & \textit{MV} & $0.078{\scriptstyle\,\pm\,}0.021$ & $0.036{\scriptstyle\,\pm\,}0.014$ \\
    \cmidrule(lr){2-4}
     & \quad R-L8B & $0.091{\scriptstyle\,\pm\,}0.024$ & $0.048{\scriptstyle\,\pm\,}0.018$ \\
     & \quad R-Q8B & $0.094{\scriptstyle\,\pm\,}0.018$ & $0.046{\scriptstyle\,\pm\,}0.016$ \\
     & \quad J-G26B & $0.091{\scriptstyle\,\pm\,}0.022$ & $0.046{\scriptstyle\,\pm\,}0.016$ \\
     & \quad J-Q30B & $0.087{\scriptstyle\,\pm\,}0.020$ & $0.042{\scriptstyle\,\pm\,}0.015$ \\
     & \quad J-Q8B (Qwen3-8B) & $0.084{\scriptstyle\,\pm\,}0.020$ & $0.039{\scriptstyle\,\pm\,}0.018$ \\
     & \quad Ppl & $0.098{\scriptstyle\,\pm\,}0.029$ & $0.043{\scriptstyle\,\pm\,}0.015$ \\
     & \quad Ppl-Std & $0.098{\scriptstyle\,\pm\,}0.029$ & $0.042{\scriptstyle\,\pm\,}0.017$ \\
    \midrule
    \multirow{8}{*}{Gemma-4-E4B} & \textit{MV} & $0.058{\scriptstyle\,\pm\,}0.055$ & $\dagger$ \\
    \cmidrule(lr){2-4}
     & \quad R-L8B & $0.094{\scriptstyle\,\pm\,}0.024$ & $0.026{\scriptstyle\,\pm\,}0.022$ \\
     & \quad R-Q8B & $0.092{\scriptstyle\,\pm\,}0.027$ & $0.024{\scriptstyle\,\pm\,}0.015$ \\
     & \quad J-G26B & $0.088{\scriptstyle\,\pm\,}0.027$ & $0.031{\scriptstyle\,\pm\,}0.022$ \\
     & \quad J-Q30B & $0.095{\scriptstyle\,\pm\,}0.022$ & $0.031{\scriptstyle\,\pm\,}0.023$ \\
     & \quad J-Q8B & $0.087{\scriptstyle\,\pm\,}0.032$ & $0.006{\scriptstyle\,\pm\,}0.001$ \\
     & \quad Ppl & $0.093{\scriptstyle\,\pm\,}0.024$ & $0.020{\scriptstyle\,\pm\,}0.016$ \\
     & \quad Ppl-Std & $0.093{\scriptstyle\,\pm\,}0.025$ & $0.027{\scriptstyle\,\pm\,}0.017$ \\
    \midrule
    \multirow{8}{*}{Qwen3-8B} & \textit{MV} & $0.068{\scriptstyle\,\pm\,}0.053$ & $\dagger$ \\
    \cmidrule(lr){2-4}
     & \quad R-L8B & $0.099{\scriptstyle\,\pm\,}0.023$ & $0.023{\scriptstyle\,\pm\,}0.020$ \\
     & \quad R-Q8B & $0.102{\scriptstyle\,\pm\,}0.025$ & $0.025{\scriptstyle\,\pm\,}0.012$ \\
     & \quad J-G26B & $0.102{\scriptstyle\,\pm\,}0.021$ & $0.015{\scriptstyle\,\pm\,}0.001$ \\
     & \quad J-Q30B & $0.094{\scriptstyle\,\pm\,}0.035$ & $0.005{\scriptstyle\,\pm\,}0.001$ \\
     & \quad J-Q8B & $0.075{\scriptstyle\,\pm\,}0.051$ & $0.001{\scriptstyle\,\pm\,}0.001$ \\
     & \quad Ppl & $0.099{\scriptstyle\,\pm\,}0.022$ & $0.018{\scriptstyle\,\pm\,}0.012$ \\
     & \quad Ppl-Std & $0.099{\scriptstyle\,\pm\,}0.028$ & $0.024{\scriptstyle\,\pm\,}0.018$ \\
    \bottomrule
  \end{tabular}
\end{table}

\begin{table}[p]
  \centering
  \caption{GSM8K results ($n_\text{cal}=200$, $m=16$, 20 splits). $p_v$: score-weighted MV accuracy; best: score-based best-of-m.}
  \label{tab:main-gsm8k}
  \small
  \begin{tabular}{llccccccc}
    \toprule
    \textbf{Model} & \textbf{Method} & \textbf{$p_v$} & \textbf{best} &
    \multicolumn{2}{c}{$\boldsymbol{\alpha=0.10}$} &
    \multicolumn{2}{c}{$\boldsymbol{\alpha=0.05}$} \\
    \cmidrule(lr){5-6}\cmidrule(lr){7-8}
    & & & & $A_c$ & $Y$ & $A_c$ & $Y$ \\
    \midrule
    \multirow{10}{*}{Qwen3-1.7B} & \textit{Greedy} & --- & --- & 0.583 & 1.000 & 0.583 & 1.000 \\
     & \textit{Oracle} & --- & 0.899 & 0.899 & 1.000 & 0.899 & 1.000 \\
     & \textit{MV} & 0.730 & --- & $0.888{\scriptstyle\,\pm\,}0.020$ & $0.713{\scriptstyle\,\pm\,}0.031$ & $0.929{\scriptstyle\,\pm\,}0.016$ & $0.617{\scriptstyle\,\pm\,}0.049$ \\
    \cmidrule(lr){2-8}
     & \quad R-L8B & 0.740 & 0.655 & $0.872{\scriptstyle\,\pm\,}0.020$ & $0.747{\scriptstyle\,\pm\,}0.031$ & $0.928{\scriptstyle\,\pm\,}0.017$ & $0.629{\scriptstyle\,\pm\,}0.036$ \\
     & \quad R-Q8B & 0.757 & 0.697 & $0.877{\scriptstyle\,\pm\,}0.024$ & $0.777{\scriptstyle\,\pm\,}0.048$ & $0.929{\scriptstyle\,\pm\,}0.020$ & $0.645{\scriptstyle\,\pm\,}0.044$ \\
     & \quad J-G26B & 0.723 & 0.599 & $0.869{\scriptstyle\,\pm\,}0.017$ & $0.741{\scriptstyle\,\pm\,}0.028$ & $0.925{\scriptstyle\,\pm\,}0.016$ & $0.621{\scriptstyle\,\pm\,}0.049$ \\
     & \quad J-Q30B & 0.736 & 0.591 & $0.877{\scriptstyle\,\pm\,}0.021$ & $0.735{\scriptstyle\,\pm\,}0.037$ & $0.924{\scriptstyle\,\pm\,}0.015$ & $0.625{\scriptstyle\,\pm\,}0.043$ \\
     & \quad J-R1Q14B & 0.737 & 0.612 & $0.872{\scriptstyle\,\pm\,}0.017$ & $0.748{\scriptstyle\,\pm\,}0.031$ & $0.923{\scriptstyle\,\pm\,}0.015$ & $0.622{\scriptstyle\,\pm\,}0.040$ \\
     & \quad Ppl & 0.733 & 0.649 & $0.866{\scriptstyle\,\pm\,}0.020$ & $0.758{\scriptstyle\,\pm\,}0.041$ & $0.923{\scriptstyle\,\pm\,}0.013$ & $0.617{\scriptstyle\,\pm\,}0.037$ \\
     & \quad Ppl-Std & 0.698 & 0.574 & $0.861{\scriptstyle\,\pm\,}0.020$ & $0.702{\scriptstyle\,\pm\,}0.036$ & $0.912{\scriptstyle\,\pm\,}0.013$ & $0.561{\scriptstyle\,\pm\,}0.036$ \\
    \midrule
    \multirow{10}{*}{R1-Qwen-1.5B} & \textit{Greedy} & --- & --- & 0.624 & 1.000 & 0.624 & 1.000 \\
     & \textit{Oracle} & --- & 0.956 & 0.956 & 1.000 & 0.956 & 1.000 \\
     & \textit{MV} & 0.822 & --- & $0.896{\scriptstyle\,\pm\,}0.023$ & $0.840{\scriptstyle\,\pm\,}0.051$ & $0.942{\scriptstyle\,\pm\,}0.018$ & $0.647{\scriptstyle\,\pm\,}0.083$ \\
    \cmidrule(lr){2-8}
     & \quad R-L8B & 0.873 & 0.842 & $0.904{\scriptstyle\,\pm\,}0.016$ & $0.944{\scriptstyle\,\pm\,}0.030$ & $0.950{\scriptstyle\,\pm\,}0.017$ & $0.797{\scriptstyle\,\pm\,}0.077$ \\
     & \quad R-Q8B & 0.883 & 0.883 & $0.909{\scriptstyle\,\pm\,}0.016$ & $0.954{\scriptstyle\,\pm\,}0.026$ & $0.951{\scriptstyle\,\pm\,}0.015$ & $0.818{\scriptstyle\,\pm\,}0.066$ \\
     & \quad J-G26B & 0.852 & 0.829 & $0.897{\scriptstyle\,\pm\,}0.020$ & $0.920{\scriptstyle\,\pm\,}0.037$ & $0.943{\scriptstyle\,\pm\,}0.014$ & $0.750{\scriptstyle\,\pm\,}0.065$ \\
     & \quad J-Q30B & 0.838 & 0.703 & $0.894{\scriptstyle\,\pm\,}0.019$ & $0.882{\scriptstyle\,\pm\,}0.042$ & $0.944{\scriptstyle\,\pm\,}0.017$ & $0.687{\scriptstyle\,\pm\,}0.078$ \\
     & \quad J-R1Q14B & 0.828 & 0.782 & $0.893{\scriptstyle\,\pm\,}0.017$ & $0.864{\scriptstyle\,\pm\,}0.044$ & $0.941{\scriptstyle\,\pm\,}0.017$ & $0.695{\scriptstyle\,\pm\,}0.070$ \\
     & \quad Ppl & 0.826 & 0.779 & $0.896{\scriptstyle\,\pm\,}0.016$ & $0.873{\scriptstyle\,\pm\,}0.037$ & $0.944{\scriptstyle\,\pm\,}0.014$ & $0.688{\scriptstyle\,\pm\,}0.067$ \\
     & \quad Ppl-Std & 0.815 & 0.725 & $0.894{\scriptstyle\,\pm\,}0.018$ & $0.842{\scriptstyle\,\pm\,}0.041$ & $0.939{\scriptstyle\,\pm\,}0.015$ & $0.659{\scriptstyle\,\pm\,}0.060$ \\
    \midrule
    \multirow{10}{*}{Gemma-4-E4B} & \textit{Greedy} & --- & --- & 0.680 & 1.000 & 0.680 & 1.000 \\
     & \textit{Oracle} & --- & 0.977 & 0.977 & 1.000 & 0.977 & 1.000 \\
     & \textit{MV} & 0.901 & --- & $0.914{\scriptstyle\,\pm\,}0.016$ & $0.972{\scriptstyle\,\pm\,}0.035$ & $0.952{\scriptstyle\,\pm\,}0.016$ & $0.792{\scriptstyle\,\pm\,}0.091$ \\
    \cmidrule(lr){2-8}
     & \quad R-L8B & 0.907 & 0.838 & $0.914{\scriptstyle\,\pm\,}0.013$ & $0.987{\scriptstyle\,\pm\,}0.023$ & $0.952{\scriptstyle\,\pm\,}0.017$ & $0.852{\scriptstyle\,\pm\,}0.074$ \\
     & \quad R-Q8B & 0.914 & 0.848 & $0.924{\scriptstyle\,\pm\,}0.013$ & $0.985{\scriptstyle\,\pm\,}0.019$ & $0.952{\scriptstyle\,\pm\,}0.017$ & $0.895{\scriptstyle\,\pm\,}0.050$ \\
     & \quad J-G26B & 0.887 & 0.675 & $0.909{\scriptstyle\,\pm\,}0.017$ & $0.952{\scriptstyle\,\pm\,}0.043$ & $0.949{\scriptstyle\,\pm\,}0.014$ & $0.774{\scriptstyle\,\pm\,}0.071$ \\
     & \quad J-Q30B & 0.902 & 0.688 & $0.913{\scriptstyle\,\pm\,}0.015$ & $0.974{\scriptstyle\,\pm\,}0.038$ & $0.948{\scriptstyle\,\pm\,}0.015$ & $0.811{\scriptstyle\,\pm\,}0.091$ \\
     & \quad J-R1Q14B & 0.847 & 0.641 & $0.896{\scriptstyle\,\pm\,}0.027$ & $0.888{\scriptstyle\,\pm\,}0.060$ & $0.940{\scriptstyle\,\pm\,}0.020$ & $0.712{\scriptstyle\,\pm\,}0.090$ \\
     & \quad Ppl & 0.882 & 0.780 & $0.905{\scriptstyle\,\pm\,}0.018$ & $0.947{\scriptstyle\,\pm\,}0.037$ & $0.945{\scriptstyle\,\pm\,}0.015$ & $0.805{\scriptstyle\,\pm\,}0.065$ \\
     & \quad Ppl-Std & 0.888 & 0.760 & $0.910{\scriptstyle\,\pm\,}0.020$ & $0.941{\scriptstyle\,\pm\,}0.052$ & $0.945{\scriptstyle\,\pm\,}0.019$ & $0.774{\scriptstyle\,\pm\,}0.094$ \\
    \bottomrule
  \end{tabular}
\end{table}

\begin{table}[p]
  \centering
  \caption{MATH results ($n_\text{cal}=200$, $m=16$, 20 splits). $p_v$: score-weighted MV accuracy; best: score-based best-of-m.}
  \label{tab:main-math}
  \small
  \begin{tabular}{llccccccc}
    \toprule
    \textbf{Model} & \textbf{Method} & \textbf{$p_v$} & \textbf{best} &
    \multicolumn{2}{c}{$\boldsymbol{\alpha=0.10}$} &
    \multicolumn{2}{c}{$\boldsymbol{\alpha=0.05}$} \\
    \cmidrule(lr){5-6}\cmidrule(lr){7-8}
    & & & & $A_c$ & $Y$ & $A_c$ & $Y$ \\
    \midrule
    \multirow{10}{*}{Qwen3-1.7B} & \textit{Greedy} & --- & --- & 0.555 & 1.000 & 0.555 & 1.000 \\
     & \textit{Oracle} & --- & 0.854 & 0.854 & 1.000 & 0.854 & 1.000 \\
     & \textit{MV} & 0.712 & --- & $0.873{\scriptstyle\,\pm\,}0.029$ & $0.712{\scriptstyle\,\pm\,}0.049$ & $0.924{\scriptstyle\,\pm\,}0.017$ & $0.579{\scriptstyle\,\pm\,}0.054$ \\
    \cmidrule(lr){2-8}
     & \quad R-L8B & 0.718 & 0.662 & $0.874{\scriptstyle\,\pm\,}0.021$ & $0.749{\scriptstyle\,\pm\,}0.038$ & $0.925{\scriptstyle\,\pm\,}0.015$ & $0.629{\scriptstyle\,\pm\,}0.034$ \\
     & \quad R-Q8B & 0.728 & 0.692 & $0.876{\scriptstyle\,\pm\,}0.020$ & $0.776{\scriptstyle\,\pm\,}0.030$ & $0.925{\scriptstyle\,\pm\,}0.019$ & $0.687{\scriptstyle\,\pm\,}0.035$ \\
     & \quad J-G26B & 0.700 & 0.520 & $0.860{\scriptstyle\,\pm\,}0.021$ & $0.721{\scriptstyle\,\pm\,}0.035$ & $0.914{\scriptstyle\,\pm\,}0.022$ & $0.590{\scriptstyle\,\pm\,}0.062$ \\
     & \quad J-Q30B & 0.698 & 0.575 & $0.863{\scriptstyle\,\pm\,}0.030$ & $0.720{\scriptstyle\,\pm\,}0.050$ & $0.917{\scriptstyle\,\pm\,}0.020$ & $0.594{\scriptstyle\,\pm\,}0.052$ \\
     & \quad J-R1Q14B & 0.691 & 0.547 & $0.855{\scriptstyle\,\pm\,}0.029$ & $0.705{\scriptstyle\,\pm\,}0.050$ & $0.910{\scriptstyle\,\pm\,}0.018$ & $0.562{\scriptstyle\,\pm\,}0.060$ \\
     & \quad Ppl & 0.698 & 0.635 & $0.862{\scriptstyle\,\pm\,}0.032$ & $0.727{\scriptstyle\,\pm\,}0.049$ & $0.916{\scriptstyle\,\pm\,}0.023$ & $0.614{\scriptstyle\,\pm\,}0.052$ \\
     & \quad Ppl-Std & 0.708 & 0.632 & $0.866{\scriptstyle\,\pm\,}0.026$ & $0.736{\scriptstyle\,\pm\,}0.037$ & $0.926{\scriptstyle\,\pm\,}0.022$ & $0.621{\scriptstyle\,\pm\,}0.046$ \\
    \midrule
    \multirow{10}{*}{R1-Qwen-1.5B} & \textit{Greedy} & --- & --- & 0.476 & 1.000 & 0.476 & 1.000 \\
     & \textit{Oracle} & --- & 0.765 & 0.765 & 1.000 & 0.765 & 1.000 \\
     & \textit{MV} & 0.603 & --- & $0.809{\scriptstyle\,\pm\,}0.015$ & $0.465{\scriptstyle\,\pm\,}0.062$ & $0.828{\scriptstyle\,\pm\,}0.012$ & $0.240{\scriptstyle\,\pm\,}0.075$ \\
    \cmidrule(lr){2-8}
     & \quad R-L8B & 0.623 & 0.575 & $0.809{\scriptstyle\,\pm\,}0.014$ & $0.501{\scriptstyle\,\pm\,}0.055$ & $0.833{\scriptstyle\,\pm\,}0.014$ & $0.271{\scriptstyle\,\pm\,}0.052$ \\
     & \quad R-Q8B & 0.648 & 0.622 & $0.822{\scriptstyle\,\pm\,}0.016$ & $0.552{\scriptstyle\,\pm\,}0.075$ & $0.841{\scriptstyle\,\pm\,}0.012$ & $0.296{\scriptstyle\,\pm\,}0.072$ \\
     & \quad J-G26B & 0.618 & 0.567 & $0.809{\scriptstyle\,\pm\,}0.015$ & $0.523{\scriptstyle\,\pm\,}0.066$ & $0.838{\scriptstyle\,\pm\,}0.013$ & $0.300{\scriptstyle\,\pm\,}0.078$ \\
     & \quad J-Q30B & 0.606 & 0.523 & $0.806{\scriptstyle\,\pm\,}0.011$ & $0.488{\scriptstyle\,\pm\,}0.054$ & $0.831{\scriptstyle\,\pm\,}0.011$ & $0.290{\scriptstyle\,\pm\,}0.061$ \\
     & \quad J-R1Q14B & 0.608 & 0.583 & $0.810{\scriptstyle\,\pm\,}0.013$ & $0.503{\scriptstyle\,\pm\,}0.048$ & $0.834{\scriptstyle\,\pm\,}0.013$ & $0.295{\scriptstyle\,\pm\,}0.078$ \\
     & \quad Ppl & 0.572 & 0.498 & $0.778{\scriptstyle\,\pm\,}0.015$ & $0.429{\scriptstyle\,\pm\,}0.056$ & $0.815{\scriptstyle\,\pm\,}0.012$ & $0.251{\scriptstyle\,\pm\,}0.048$ \\
     & \quad Ppl-Std & 0.596 & 0.532 & $0.794{\scriptstyle\,\pm\,}0.015$ & $0.465{\scriptstyle\,\pm\,}0.045$ & $0.826{\scriptstyle\,\pm\,}0.008$ & $0.277{\scriptstyle\,\pm\,}0.052$ \\
    \midrule
    \multirow{10}{*}{Gemma-4-E4B} & \textit{Greedy} & --- & --- & 0.521 & 1.000 & 0.521 & 1.000 \\
     & \textit{Oracle} & --- & 0.685 & 0.685 & 1.000 & 0.685 & 1.000 \\
     & \textit{MV} & 0.583 & --- & $0.685{\scriptstyle\,\pm\,}0.008$ & $0.211{\scriptstyle\,\pm\,}0.125$ & --- & $0.000{\scriptstyle\,\pm\,}0.000$ \\
    \cmidrule(lr){2-8}
     & \quad R-L8B & 0.587 & 0.571 & $0.691{\scriptstyle\,\pm\,}0.011$ & $0.315{\scriptstyle\,\pm\,}0.058$ & $0.727{\scriptstyle\,\pm\,}0.034$ & $0.095{\scriptstyle\,\pm\,}0.078$ \\
     & \quad R-Q8B & 0.609 & 0.577 & $0.702{\scriptstyle\,\pm\,}0.011$ & $0.322{\scriptstyle\,\pm\,}0.070$ & $0.646{\scriptstyle\,\pm\,}0.026$ & $0.075{\scriptstyle\,\pm\,}0.055$ \\
     & \quad J-G26B & 0.587 & 0.542 & $0.703{\scriptstyle\,\pm\,}0.015$ & $0.311{\scriptstyle\,\pm\,}0.069$ & $0.723{\scriptstyle\,\pm\,}0.024$ & $0.071{\scriptstyle\,\pm\,}0.044$ \\
     & \quad J-Q30B & 0.568 & 0.512 & $0.674{\scriptstyle\,\pm\,}0.031$ & $0.281{\scriptstyle\,\pm\,}0.067$ & $0.529{\scriptstyle\,\pm\,}0.071$ & $0.016{\scriptstyle\,\pm\,}0.004$ \\
     & \quad J-R1Q14B & 0.586 & 0.515 & $0.692{\scriptstyle\,\pm\,}0.011$ & $0.307{\scriptstyle\,\pm\,}0.051$ & $0.777{\scriptstyle\,\pm\,}0.038$ & $0.069{\scriptstyle\,\pm\,}0.058$ \\
     & \quad Ppl & 0.511 & 0.468 & $0.679{\scriptstyle\,\pm\,}0.017$ & $0.291{\scriptstyle\,\pm\,}0.039$ & $0.841{\scriptstyle\,\pm\,}0.057$ & $0.061{\scriptstyle\,\pm\,}0.044$ \\
     & \quad Ppl-Std & 0.506 & 0.459 & $0.678{\scriptstyle\,\pm\,}0.010$ & $0.291{\scriptstyle\,\pm\,}0.046$ & $0.673{\scriptstyle\,\pm\,}0.025$ & $0.077{\scriptstyle\,\pm\,}0.065$ \\
    \bottomrule
  \end{tabular}
\end{table}

\begin{table}[p]
  \centering
  \caption{MATH-Hard results ($n_\text{cal}=200$, $m=16$, 20 splits). $p_v$: score-weighted MV accuracy; best: score-based best-of-m.}
  \label{tab:main-mathhard}
  \small
  \begin{tabular}{llccccccc}
    \toprule
    \textbf{Model} & \textbf{Method} & \textbf{$p_v$} & \textbf{best} &
    \multicolumn{2}{c}{$\boldsymbol{\alpha=0.10}$} &
    \multicolumn{2}{c}{$\boldsymbol{\alpha=0.05}$} \\
    \cmidrule(lr){5-6}\cmidrule(lr){7-8}
    & & & & $A_c$ & $Y$ & $A_c$ & $Y$ \\
    \midrule
    \multirow{10}{*}{Qwen3-8B} & \textit{Greedy} & --- & --- & 0.440 & 1.000 & 0.440 & 1.000 \\
     & \textit{Oracle} & --- & 0.699 & 0.699 & 1.000 & 0.699 & 1.000 \\
     & \textit{MV} & 0.537 & --- & $0.844{\scriptstyle\,\pm\,}0.030$ & $0.498{\scriptstyle\,\pm\,}0.041$ & $0.901{\scriptstyle\,\pm\,}0.020$ & $0.393{\scriptstyle\,\pm\,}0.041$ \\
    \cmidrule(lr){2-8}
     & \quad R-L8B & 0.536 & 0.489 & $0.837{\scriptstyle\,\pm\,}0.034$ & $0.548{\scriptstyle\,\pm\,}0.035$ & $0.905{\scriptstyle\,\pm\,}0.024$ & $0.448{\scriptstyle\,\pm\,}0.039$ \\
     & \quad R-Q8B & 0.535 & 0.487 & $0.834{\scriptstyle\,\pm\,}0.033$ & $0.550{\scriptstyle\,\pm\,}0.034$ & $0.905{\scriptstyle\,\pm\,}0.024$ & $0.443{\scriptstyle\,\pm\,}0.039$ \\
     & \quad J-G26B & 0.538 & 0.448 & $0.837{\scriptstyle\,\pm\,}0.029$ & $0.515{\scriptstyle\,\pm\,}0.038$ & $0.896{\scriptstyle\,\pm\,}0.024$ & $0.397{\scriptstyle\,\pm\,}0.050$ \\
     & \quad J-Q30B & 0.531 & 0.439 & $0.827{\scriptstyle\,\pm\,}0.024$ & $0.506{\scriptstyle\,\pm\,}0.026$ & $0.896{\scriptstyle\,\pm\,}0.017$ & $0.367{\scriptstyle\,\pm\,}0.051$ \\
     & \quad J-R1Q14B & 0.547 & 0.441 & $0.843{\scriptstyle\,\pm\,}0.033$ & $0.536{\scriptstyle\,\pm\,}0.034$ & $0.901{\scriptstyle\,\pm\,}0.027$ & $0.436{\scriptstyle\,\pm\,}0.052$ \\
     & \quad Ppl & 0.538 & 0.488 & $0.835{\scriptstyle\,\pm\,}0.025$ & $0.550{\scriptstyle\,\pm\,}0.022$ & $0.905{\scriptstyle\,\pm\,}0.028$ & $0.453{\scriptstyle\,\pm\,}0.044$ \\
     & \quad Ppl-Std & 0.541 & 0.476 & $0.831{\scriptstyle\,\pm\,}0.032$ & $0.542{\scriptstyle\,\pm\,}0.030$ & $0.906{\scriptstyle\,\pm\,}0.025$ & $0.439{\scriptstyle\,\pm\,}0.044$ \\
    \bottomrule
  \end{tabular}
\end{table}

\begin{table}[p]
  \centering
  \caption{HotpotQA results ($n_\text{cal}=200$, $m=16$, 20 splits). $p_v$: score-weighted MV accuracy; best: score-based best-of-m.}
  \label{tab:main-hotpotqa}
  \small
  \begin{tabular}{llccccccc}
    \toprule
    \textbf{Model} & \textbf{Method} & \textbf{$p_v$} & \textbf{best} &
    \multicolumn{2}{c}{$\boldsymbol{\alpha=0.10}$} &
    \multicolumn{2}{c}{$\boldsymbol{\alpha=0.05}$} \\
    \cmidrule(lr){5-6}\cmidrule(lr){7-8}
    & & & & $A_c$ & $Y$ & $A_c$ & $Y$ \\
    \midrule
    \multirow{10}{*}{Qwen3-1.7B} & \textit{Greedy} & --- & --- & 0.426 & 1.000 & 0.426 & 1.000 \\
     & \textit{Oracle} & --- & 0.739 & 0.739 & 1.000 & 0.739 & 1.000 \\
     & \textit{MV} & 0.519 & --- & $0.686{\scriptstyle\,\pm\,}0.022$ & $0.244{\scriptstyle\,\pm\,}0.057$ & $0.722{\scriptstyle\,\pm\,}0.033$ & $0.127{\scriptstyle\,\pm\,}0.039$ \\
    \cmidrule(lr){2-8}
     & \quad R-L8B & 0.489 & 0.424 & $0.668{\scriptstyle\,\pm\,}0.032$ & $0.270{\scriptstyle\,\pm\,}0.048$ & $0.682{\scriptstyle\,\pm\,}0.021$ & $0.147{\scriptstyle\,\pm\,}0.051$ \\
     & \quad R-Q8B & 0.475 & 0.402 & $0.645{\scriptstyle\,\pm\,}0.027$ & $0.264{\scriptstyle\,\pm\,}0.033$ & $0.680{\scriptstyle\,\pm\,}0.019$ & $0.142{\scriptstyle\,\pm\,}0.045$ \\
     & \quad J-G26B & 0.505 & 0.398 & $0.674{\scriptstyle\,\pm\,}0.020$ & $0.278{\scriptstyle\,\pm\,}0.052$ & $0.701{\scriptstyle\,\pm\,}0.014$ & $0.151{\scriptstyle\,\pm\,}0.048$ \\
     & \quad J-Q30B & 0.511 & 0.455 & $0.683{\scriptstyle\,\pm\,}0.021$ & $0.272{\scriptstyle\,\pm\,}0.049$ & $0.709{\scriptstyle\,\pm\,}0.026$ & $0.142{\scriptstyle\,\pm\,}0.039$ \\
     & \quad J-Q8B & 0.522 & 0.452 & $0.686{\scriptstyle\,\pm\,}0.023$ & $0.266{\scriptstyle\,\pm\,}0.052$ & $0.712{\scriptstyle\,\pm\,}0.030$ & $0.133{\scriptstyle\,\pm\,}0.051$ \\
     & \quad Ppl & 0.485 & 0.408 & $0.681{\scriptstyle\,\pm\,}0.031$ & $0.302{\scriptstyle\,\pm\,}0.063$ & $0.728{\scriptstyle\,\pm\,}0.027$ & $0.155{\scriptstyle\,\pm\,}0.039$ \\
     & \quad Ppl-Std & 0.471 & 0.388 & $0.675{\scriptstyle\,\pm\,}0.034$ & $0.297{\scriptstyle\,\pm\,}0.058$ & $0.731{\scriptstyle\,\pm\,}0.035$ & $0.150{\scriptstyle\,\pm\,}0.045$ \\
    \midrule
    \multirow{10}{*}{Gemma-4-E4B} & \textit{Greedy} & --- & --- & 0.700 & 1.000 & 0.700 & 1.000 \\
     & \textit{Oracle} & --- & 0.791 & 0.791 & 1.000 & 0.791 & 1.000 \\
     & \textit{MV} & 0.724 & --- & $0.828{\scriptstyle\,\pm\,}0.012$ & $0.333{\scriptstyle\,\pm\,}0.311$ & --- & $0.000{\scriptstyle\,\pm\,}0.000$ \\
    \cmidrule(lr){2-8}
     & \quad R-L8B & 0.728 & 0.719 & $0.834{\scriptstyle\,\pm\,}0.015$ & $0.562{\scriptstyle\,\pm\,}0.093$ & $0.862{\scriptstyle\,\pm\,}0.015$ & $0.179{\scriptstyle\,\pm\,}0.133$ \\
     & \quad R-Q8B & 0.724 & 0.714 & $0.831{\scriptstyle\,\pm\,}0.012$ & $0.537{\scriptstyle\,\pm\,}0.123$ & $0.820{\scriptstyle\,\pm\,}0.013$ & $0.133{\scriptstyle\,\pm\,}0.089$ \\
     & \quad J-G26B & 0.719 & 0.712 & $0.838{\scriptstyle\,\pm\,}0.020$ & $0.535{\scriptstyle\,\pm\,}0.104$ & $0.851{\scriptstyle\,\pm\,}0.011$ & $0.206{\scriptstyle\,\pm\,}0.145$ \\
     & \quad J-Q30B & 0.727 & 0.718 & $0.834{\scriptstyle\,\pm\,}0.015$ & $0.563{\scriptstyle\,\pm\,}0.079$ & $0.837{\scriptstyle\,\pm\,}0.014$ & $0.191{\scriptstyle\,\pm\,}0.144$ \\
     & \quad J-Q8B & 0.723 & 0.713 & $0.839{\scriptstyle\,\pm\,}0.018$ & $0.527{\scriptstyle\,\pm\,}0.174$ & $0.874{\scriptstyle\,\pm\,}0.023$ & $0.046{\scriptstyle\,\pm\,}0.003$ \\
     & \quad Ppl & 0.717 & 0.693 & $0.834{\scriptstyle\,\pm\,}0.015$ & $0.551{\scriptstyle\,\pm\,}0.099$ & $0.854{\scriptstyle\,\pm\,}0.013$ & $0.136{\scriptstyle\,\pm\,}0.094$ \\
     & \quad Ppl-Std & 0.718 & 0.688 & $0.831{\scriptstyle\,\pm\,}0.015$ & $0.547{\scriptstyle\,\pm\,}0.105$ & $0.807{\scriptstyle\,\pm\,}0.016$ & $0.149{\scriptstyle\,\pm\,}0.110$ \\
    \midrule
    \multirow{10}{*}{Qwen3-8B} & \textit{Greedy} & --- & --- & 0.676 & 1.000 & 0.676 & 1.000 \\
     & \textit{Oracle} & --- & 0.778 & 0.778 & 1.000 & 0.778 & 1.000 \\
     & \textit{MV} & 0.708 & --- & $0.816{\scriptstyle\,\pm\,}0.008$ & $0.367{\scriptstyle\,\pm\,}0.281$ & --- & $0.000{\scriptstyle\,\pm\,}0.000$ \\
    \cmidrule(lr){2-8}
     & \quad R-L8B & 0.704 & 0.690 & $0.816{\scriptstyle\,\pm\,}0.011$ & $0.533{\scriptstyle\,\pm\,}0.100$ & $0.832{\scriptstyle\,\pm\,}0.017$ & $0.131{\scriptstyle\,\pm\,}0.105$ \\
     & \quad R-Q8B & 0.705 & 0.688 & $0.814{\scriptstyle\,\pm\,}0.012$ & $0.547{\scriptstyle\,\pm\,}0.105$ & $0.778{\scriptstyle\,\pm\,}0.015$ & $0.112{\scriptstyle\,\pm\,}0.060$ \\
     & \quad J-G26B & 0.708 & 0.678 & $0.811{\scriptstyle\,\pm\,}0.009$ & $0.536{\scriptstyle\,\pm\,}0.096$ & $0.824{\scriptstyle\,\pm\,}0.014$ & $0.085{\scriptstyle\,\pm\,}0.003$ \\
     & \quad J-Q30B & 0.710 & 0.685 & $0.818{\scriptstyle\,\pm\,}0.014$ & $0.511{\scriptstyle\,\pm\,}0.174$ & $0.846{\scriptstyle\,\pm\,}0.025$ & $0.030{\scriptstyle\,\pm\,}0.002$ \\
     & \quad J-Q8B & 0.709 & 0.691 & $0.845{\scriptstyle\,\pm\,}0.056$ & $0.400{\scriptstyle\,\pm\,}0.264$ & $0.896{\scriptstyle\,\pm\,}0.038$ & $0.013{\scriptstyle\,\pm\,}0.001$ \\
     & \quad Ppl & 0.696 & 0.669 & $0.816{\scriptstyle\,\pm\,}0.010$ & $0.536{\scriptstyle\,\pm\,}0.093$ & $0.829{\scriptstyle\,\pm\,}0.016$ & $0.105{\scriptstyle\,\pm\,}0.059$ \\
     & \quad Ppl-Std & 0.695 & 0.666 & $0.819{\scriptstyle\,\pm\,}0.015$ & $0.536{\scriptstyle\,\pm\,}0.109$ & $0.831{\scriptstyle\,\pm\,}0.013$ & $0.142{\scriptstyle\,\pm\,}0.100$ \\
    \bottomrule
  \end{tabular}
\end{table}

\clearpage

\subsection{Separability and Strict-Separability Profiles}
\label{app:separability}

Figure~\ref{fig:delta-profiles} extends the bottom row of Figure~\ref{fig:frontiers} to all ten configurations and to all three score families.
Each panel shows the separability gap $\Delta(\lambda) = S_\text{cor}(\lambda) - S_\text{err}(\lambda)$ (solid) and the strict separability gap $\delta(\lambda) = h_\text{err}(\lambda) - h_\text{cor}(\lambda)$ (dashed) computed on the calibration set.
Two qualitative patterns emerge.
First, on the math cells $\Delta$ and $\delta$ remain visibly positive across the operationally relevant interior region $\lambda\in(0.4, 0.7)$ for the Reward and Judge scores, while Perplexity is positive but flatter --- consistent with the frontier ordering in Figure~\ref{fig:frontiers}.
Second, on HotpotQA all three score families produce profiles that hover near zero for much of the range, which is exactly the empirical signature of the failure mode characterised by Proposition~\ref{prop:nonsep}.

\begin{figure}[h]
	\centering
	\includegraphics[width=\textwidth]{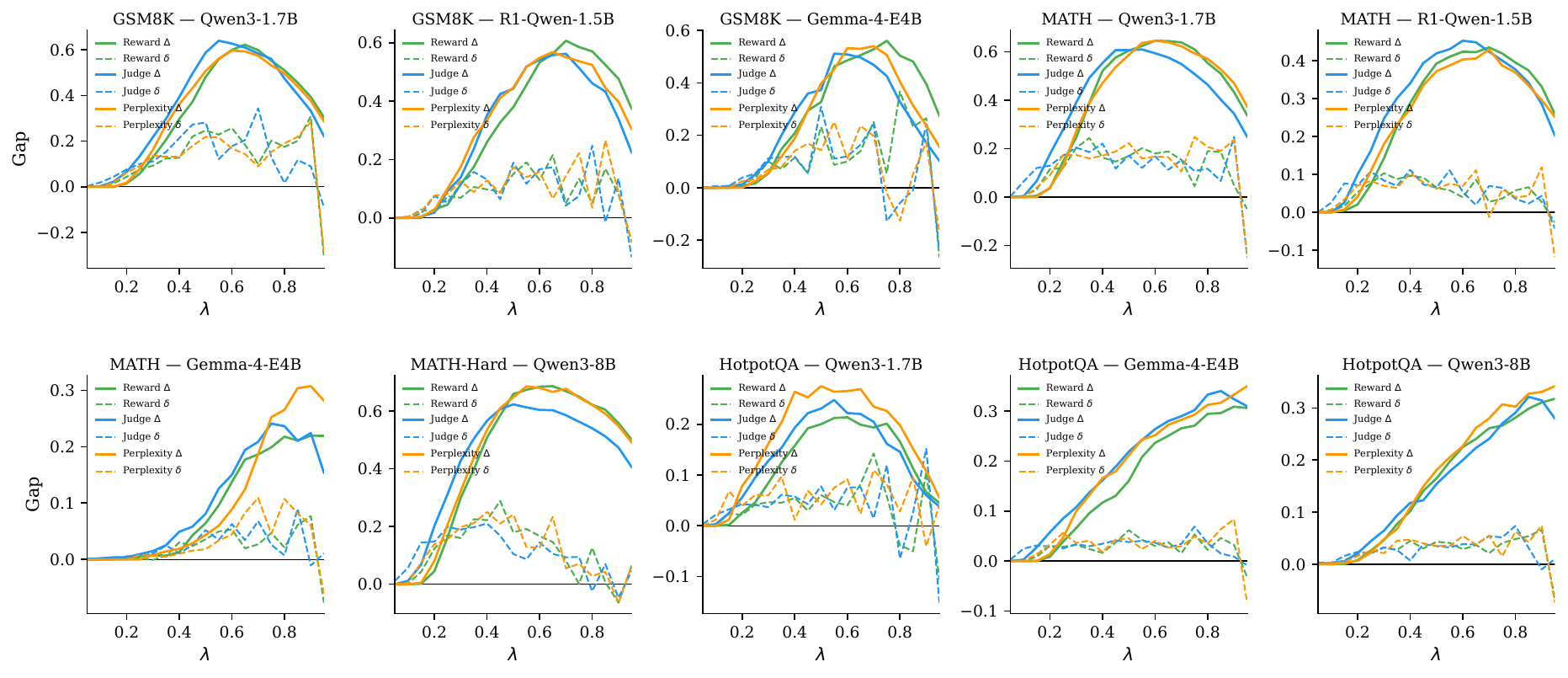}
	\caption{Separability gap $\Delta(\lambda)$ (solid) and strict separability gap $\delta(\lambda)$ (dashed) across the threshold range, per configuration. Three score families per panel: Reward, Judge, Perplexity. Calibration set, single split.}
	\label{fig:delta-profiles}
\end{figure}

\subsection{Predicted vs.\ Observed Accuracy: Extended Panels}
\label{app:pred-obs-extended}

Figure~\ref{fig:pred-vs-obs-extended} extends Figure~\ref{fig:pred-vs-obs} to all ten configurations.
The plug-in predictor of Corollary~\ref{cor:predictor} continues to track the empirical $A_c$ closely on cells not shown in the main text, including the harder MATH $\times$ Gemma-4-E4B and HotpotQA $\times$ Qwen3-8B cells where the answering margin is smaller and the predictor is correspondingly noisier.

\begin{figure}[h]
	\centering
	\includegraphics[width=\textwidth]{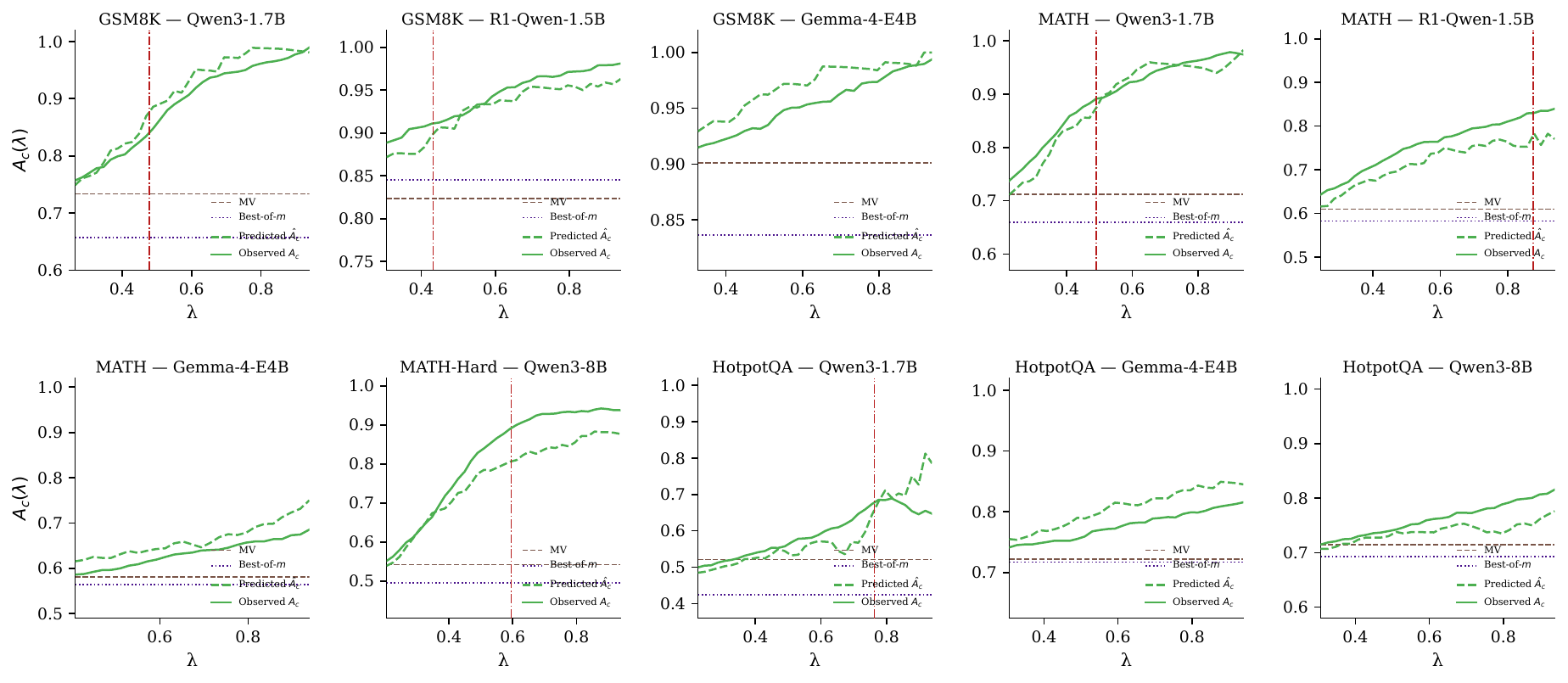}
	\caption{Predicted (dashed) vs.\ observed (solid) selective accuracy $A_c(\lambda)$ across all ten configurations. Calibration set, single split.}
	\label{fig:pred-vs-obs-extended}
\end{figure}

\subsection{Frontier Comparison: Extended Panels}
\label{app:frontier-extended}

Figure~\ref{fig:frontiers-extended} extends Figure~\ref{fig:frontiers} to all ten configurations and overlays the secondary scorer instantiations per family (dashed) and the appendix-only ablation scores (Perplexity-Std, SC; dotted).
The full picture confirms that score-family rankings are stable across secondary instantiations and that SC sits visibly below all other curves on every cell, in line with Proposition~\ref{prop:nonsep}.

\begin{figure}[h]
	\centering
	\includegraphics[width=\textwidth]{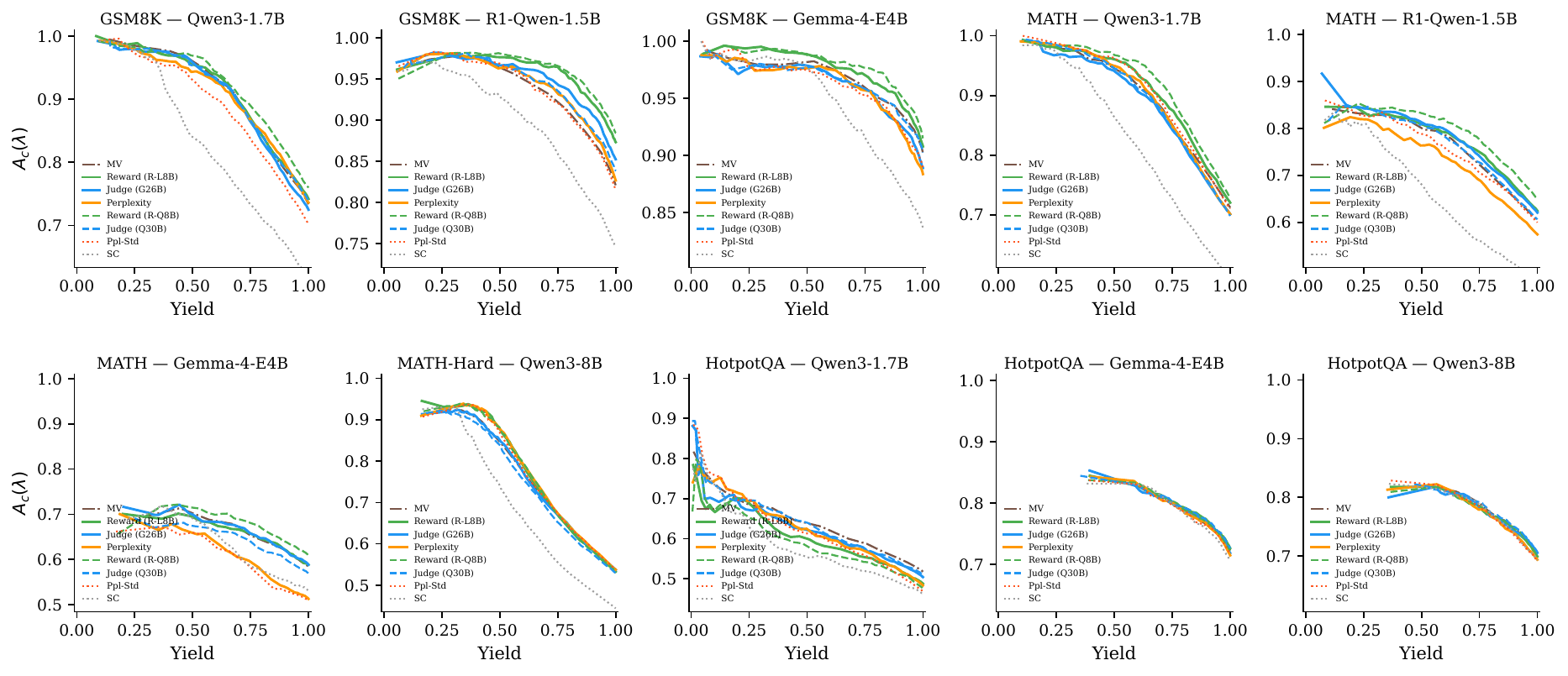}
	\caption{Yield--$A_c$ frontiers across all ten configurations, including secondary scorer instantiations (dashed) and ablation scores (Perplexity-Std, SC; dotted).}
	\label{fig:frontiers-extended}
\end{figure}

\paragraph{Frontier AUC computation.}
Frontier AUC is the area under the (yield, $A_c$) curve, computed by the trapezoid rule over the yield range $[y_{\min},\,1]$ observed when sweeping $\lambda \in [0,1]$.
For score-weighted methods the continuous-valued confidence $\nu$ can reach small values whenever a few high-scoring drafts dominate.
For MV, $\nu$ is a vote fraction $k/m$ ($m=16$), so once $\lambda$ exceeds the second-most-common fraction no further abstention is possible.
Because the attainable yield range $[y_{\min}, 1]$ differs across score families, AUC is computed over each method’s observed domain rather than a shared one. This avoids extrapolating accuracy to unattainable operating points, which would introduce unsupported assumptions. The domain width is itself informative: methods confined to a narrow yield range are intrinsically less useful than those maintaining performance across a broader range. Enforcing a common domain (e.g., truncating at the worst-case $y_{\min}$) would instead discard valid operating regions and systematically penalize methods with stronger separability. Frontier AUC therefore measures performance over the set of achievable operating points for each method, providing a conservative and faithful summary of the accuracy–yield trade-off.

Frontier AUC values for every (cell, scorer, ablation-score) triple are tabulated in Table~\ref{tab:frontier-auc-full}.

\begin{table}[h]
	\centering
	\caption{Frontier AUC for each cell and score type. Higher is better; the per-cell maximum across the main-paper score families is in \textbf{bold}.}
	\label{tab:frontier-auc-full}
	\small
	\begin{tabular}{llccccccc}
		\toprule
		\textbf{Dataset} & \textbf{Model} & \textbf{R-L8B} & \textbf{R-Q8B} & \textbf{J-G26B} & \textbf{J-Q30B} & \textbf{Ppl} & \textbf{Ppl-Std} & \textbf{SC} \\
		\midrule
		GSM8K & Qwen3-1.7B & \textbf{0.8401} & 0.8387 & 0.8281 & 0.7820 & 0.8206 & 0.8052 & 0.7497 \\
		GSM8K & R1-Qwen-1.5B & \textbf{0.9062} & 0.9032 & 0.9014 & 0.8671 & 0.8918 & 0.8817 & 0.8481 \\
		GSM8K & Gemma-4-E4B & \textbf{0.9321} & 0.9329 & 0.9255 & 0.9079 & 0.9212 & 0.9188 & 0.9071 \\
		MATH & Qwen3-1.7B & \textbf{0.8157} & 0.8205 & 0.8039 & 0.7923 & 0.8157 & 0.8143 & 0.7388 \\
		MATH & R1-Qwen-1.5B & 0.7126 & 0.7332 & \textbf{0.7311} & 0.6938 & 0.6791 & 0.6973 & 0.6072 \\
		MATH & Gemma-4-E4B & 0.5347 & 0.5619 & \textbf{0.5380} & 0.4862 & 0.5072 & 0.5107 & 0.5087 \\
		MATH-Hard & Qwen3-8B & 0.6540 & 0.6465 & 0.6433 & 0.6181 & \textbf{0.6566} & 0.6473 & 0.5769 \\
		HotpotQA & Qwen3-1.7B & 0.6007 & 0.5910 & 0.6224 & 0.6366 & \textbf{0.6227} & 0.6192 & 0.5775 \\
		HotpotQA & Gemma-4-E4B & 0.4879 & 0.4928 & \textbf{0.4880} & 0.5218 & 0.4832 & 0.4880 & 0.4947 \\
		HotpotQA & Qwen3-8B & 0.4961 & 0.4982 & 0.5032 & 0.4165 & \textbf{0.5077} & 0.4966 & 0.5030 \\
		\bottomrule
	\end{tabular}
\end{table}

\subsection{Small-and-Fast Reward Scorer}
\label{app:reward-3b}

The main-paper Reward family uses an 8B-parameter scorer; for completeness we also report results with the 3B-parameter Skywork-V2-Llama-3.2-3B scorer on all 10 cells. The smaller scorer demonstrates selective accuracy and yield on par with its 8B counterparts (\Cref{tab:reward-3b}), while reducing scoring wall-clock cost by $1.9$--$2.3\times$ (\Cref{tab:runtime-score}). This supports the practical claim that score separability, not scorer size, is the dominant factor for our framework once a reasonable scorer is in hand. This also suggests that a lightweight reward model is a practical choice when inference budget is constrained.

\begin{table}[h]
	\centering
	\caption{Skywork-V2-Llama-3.2-3B vs.\ 8B counterparts across all 10 cells. $A_c$/$Y$: selective accuracy and yield at $\alpha{=}0.10$. Bold marks the best per metric per cell.}
	\label{tab:reward-3b}
	\small
	\begin{tabular}{llccc ccc}
		\toprule
		& & \multicolumn{3}{c}{$\boldsymbol{A_c}$ $(\alpha{=}0.10)$} & \multicolumn{3}{c}{$\boldsymbol{Y}$ $(\alpha{=}0.10)$} \\
        \midrule
		\textbf{Dataset} & \textbf{Model} & R-L3B & R-L8B & R-Q8B & R-L3B & R-L8B & R-Q8B \\
		\midrule
		GSM8K     & Qwen3-1.7B   & \textbf{0.883} & 0.866 & 0.863 & 0.772 & 0.758 & \textbf{0.810} \\
		GSM8K     & R1-Qwen-1.5B & 0.902 & 0.901 & \textbf{0.908} & 0.945 & 0.951 & \textbf{0.957} \\
		GSM8K     & Gemma-4-E4B  & 0.922 & 0.916 & \textbf{0.929} & 0.965 & \textbf{0.979} & 0.972 \\
		MATH      & Qwen3-1.7B   & 0.865 & 0.879 & \textbf{0.882} & 0.740 & 0.740 & \textbf{0.769} \\
		MATH      & R1-Qwen-1.5B & 0.808 & 0.812 & \textbf{0.829} & 0.441 & 0.489 & \textbf{0.532} \\
		MATH      & Gemma-4-E4B  & 0.695 & 0.693 & \textbf{0.706} & 0.298 & 0.299 & \textbf{0.314} \\
		MATH-Hard & Qwen3-8B     & 0.846 & \textbf{0.847} & 0.829 & 0.524 & 0.537 & \textbf{0.558} \\
		HotpotQA  & Qwen3-1.7B   & 0.678 & \textbf{0.679} & 0.657 & \textbf{0.252} & 0.246 & 0.246 \\
		HotpotQA  & Gemma-4-E4B  & \textbf{0.833} & 0.832 & 0.825 & 0.533 & \textbf{0.580} & 0.568 \\
		HotpotQA  & Qwen3-8B     & 0.808 & \textbf{0.813} & 0.806 & 0.582 & 0.569 & \textbf{0.585} \\
		\bottomrule
	\end{tabular}
\end{table}

\subsection{Wall-Clock Runtimes}
\label{app:runtimes}

All timing experiments were run on a single NVIDIA A100 (80 GB) GPU.

\begin{table}[h]
	\centering
	\caption{Path-generation wall-clock time per cell. Format is HH:MM.}
	\label{tab:runtime-gen}
	\small
	\begin{tabular}{llcc}
		\toprule
		\textbf{Dataset} & \textbf{Model} & \textbf{Time} \\
		\midrule
		GSM8K     & Qwen3-1.7B     & 44:14 \\
		GSM8K     & R1-Qwen-1.5B   & 01:58 \\
		GSM8K     & Gemma-4-E4B    & 53:06 \\
		MATH      & Qwen3-1.7B     & 102:05 \\
		MATH      & R1-Qwen-1.5B   & 32:02 \\
		MATH      & Gemma-4-E4B    & 121:52 \\
		MATH-Hard & Qwen3-8B       & 122:32 \\
		HotpotQA  & Qwen3-1.7B     & 20:28 \\
		HotpotQA  & Gemma-4-E4B    & 49:32 \\
		HotpotQA  & Qwen3-8B       & 12:10 \\
		\bottomrule
	\end{tabular}
\end{table}

\begin{table}[h]
	\centering
	\caption{Scoring wall-clock time per cell per scorer family. Reward = Skywork-V2-Llama-8B; Reward-small = Skywork-V2-Llama-3B; Judge = Gemma-26B. Format is HH:MM.}
	\label{tab:runtime-score}
	\small
	\begin{tabular}{llcccc}
		\toprule
		\textbf{Dataset} & \textbf{Model} & \textbf{Reward} & \textbf{Reward-small} & \textbf{Judge} & \textbf{Perplexity} \\
		\midrule
		GSM8K     & Qwen3-1.7B   & 00:32 & 00:16 & 00:50 & 00:17 \\
		GSM8K     & R1-Qwen-1.5B & 00:09 & 00:04 & 00:18 & 00:14 \\
		GSM8K     & Gemma-4-E4B  & 00:19 & 00:09 & 00:38 & 00:30 \\
		MATH      & Qwen3-1.7B   & 00:55 & 00:25 & 01:22 & 00:34 \\
		MATH      & R1-Qwen-1.5B & 00:44 & 00:20 & 01:01 & 00:20 \\
		MATH      & Gemma-4-E4B  & 00:41 & 00:20 & 00:58 & 00:51 \\
		MATH-Hard & Qwen3-8B     & 01:05 & 00:28 & 01:35 & 02:03 \\
		HotpotQA  & Qwen3-1.7B   & 00:27 & 00:14 & 00:35 & 00:14 \\
		HotpotQA  & Gemma-4-E4B  & 00:21 & 00:10 & 00:29 & 00:29 \\
		HotpotQA  & Qwen3-8B     & 00:17 & 00:09 & 00:26 & 00:21 \\
		\bottomrule
	\end{tabular}
\end{table}

Path generation dominates total runtime; scoring overhead is modest (under 2 hours per scorer per cell in the reference setting).
Judge scoring is overall the costliest scorer family owing to its use of a 26--30B model per draft, but remains well under 2 hours.
Perplexity scoring is cheapest on easy datasets (GSM8K) but rises on MATH-Hard where longer chains increase per-token inference cost.

\section{Ablation Experiment Results}
\label{app:ablation-results}

\subsection{Number of Paths}
\label{app:abla-m}

Table~\ref{tab:abla-m} reports CRC results at $\alpha=0.10$ for $m \in \{4, 8, 12, 16, 20\}$ on the representative cell GSM8K $\times$ Qwen3-1.7B with the Reward scorer (mean $\pm$ std over 20 calibration--test splits).
$A_c$ saturates around $m=8$ and additional paths primarily increase yield --- the qualitative picture predicted by Theorem~\ref{thm:ca-gain}: $\Delta(\lambda)$ is governed by the intrinsic separability of the score, and additional paths sharpen $\nu$ rather than shifting the frontier.
Confident-error is consistent with the nominal targets $\alpha \in \{0.10, 0.05\}$ across all rows, up to calibration and test variability, in line with the finite-sample conformal guarantee.
\begin{table}[h]
	\centering
	\caption{Effect of the number of sampled paths $m$ at $\alpha=0.10$ on GSM8K $\times$ Qwen3-1.7B (Reward scorer; mean $\pm$ std over 20 splits).}
	\label{tab:abla-m}
	\small
	\begin{tabular}{ccccc}
		\toprule
		$m$ & $p_v$ & \textbf{$A_c$}    & \textbf{Yield}    & \textbf{Confident-error} \\
		\midrule
		4   & 0.671 & 0.853 $\pm$ 0.021 & 0.649 $\pm$ 0.033 & 0.096 $\pm$ 0.018   \\
		8   & 0.721 & 0.862 $\pm$ 0.025 & 0.727 $\pm$ 0.055 & 0.102 $\pm$ 0.026   \\
		12  & 0.752 & 0.874 $\pm$ 0.024 & 0.770 $\pm$ 0.052 & 0.098 $\pm$ 0.025   \\
		16  & 0.765 & 0.874 $\pm$ 0.021 & 0.797 $\pm$ 0.043 & 0.102 $\pm$ 0.023   \\
		20  & 0.773 & 0.876 $\pm$ 0.026 & 0.807 $\pm$ 0.051 & 0.101 $\pm$ 0.028   \\
		\bottomrule
	\end{tabular}
\end{table}

\subsection{Calibration Set Size}
\label{app:abla-ncal}

Table~\ref{tab:abla-ncal} reports the calibrated threshold's mean and standard deviation alongside CRC results at $\alpha=0.10$ as $n_\text{cal}$ varies in $\{50, 100, 150, 200, 300, 500\}$ on the same representative cell (mean $\pm$ std over 20 splits).
$A_c$ and yield are essentially flat across the range ($\pm 0.01$); the primary effect of larger $n_\text{cal}$ is to reduce the variance of $\hat\lambda$, which stabilizes beyond $n_\text{cal}\approx 200$.
Confident-error remains at or below $\alpha=0.10$ throughout, confirming conformal validity is preserved at all calibration sizes tested.

\begin{table}[h]
	\centering
	\caption{Effect of calibration set size $n_\text{cal}$ at $\alpha=0.10$ on GSM8K $\times$ Qwen3-1.7B (Reward scorer; mean $\pm$ std over 20 splits).}
	\label{tab:abla-ncal}
	\small
	\begin{tabular}{ccccc}
		\toprule
		$n_\text{cal}$ & \textbf{$A_c$}    & \textbf{Yield}    & \textbf{Confident-error} & \textbf{$\hat\lambda$} \\
		\midrule
		50             & 0.883 $\pm$ 0.038 & 0.773 $\pm$ 0.083 & 0.094 $\pm$ 0.040   & 0.509 $\pm$ 0.085      \\
		100            & 0.882 $\pm$ 0.028 & 0.778 $\pm$ 0.063 & 0.094 $\pm$ 0.029   & 0.506 $\pm$ 0.060      \\
		150            & 0.878 $\pm$ 0.029 & 0.786 $\pm$ 0.061 & 0.098 $\pm$ 0.030   & 0.498 $\pm$ 0.058      \\
		200            & 0.874 $\pm$ 0.021 & 0.797 $\pm$ 0.043 & 0.102 $\pm$ 0.023   & 0.488 $\pm$ 0.039      \\
		300            & 0.874 $\pm$ 0.016 & 0.798 $\pm$ 0.032 & 0.101 $\pm$ 0.017   & 0.488 $\pm$ 0.029      \\
		500            & 0.874 $\pm$ 0.011 & 0.797 $\pm$ 0.020 & 0.101 $\pm$ 0.011   & 0.489 $\pm$ 0.017      \\
		\bottomrule
	\end{tabular}
\end{table}

\subsection{Weight Function}
\label{app:abla-weight}

Table~\ref{tab:abla-weight} reports CRC results at $\alpha=0.10$ across five weight functions on four representative cells.
Across every cell tested, $w(q) = \exp(q)$ achieves the best overall trade-off between yield and accuracy.
Large magnifiers, ($\exp(5 q)$), collapse the weighted vote toward best-of-$m$ selection; uniform weighting is competitive on $A_c$ on the easier cells but underperforms exponential weighting on harder settings.

\begin{table}[h]
	\centering
	\caption{Weight function ablation at $\alpha=0.10$ across four cells.}
	\label{tab:abla-weight}
	\small
	\begin{tabular}{llcccc}
		\toprule
		\textbf{Cell} & \textbf{Weight} & \textbf{$p_v$} & \textbf{$A_c$} & \textbf{Yield} \\
		\midrule
		\multirow{5}{*}{GSM8K$\times$Qwen3-1.7B} & Uniform & 0.7386 & 0.8851 $\pm$ 0.0233 & 0.7344 $\pm$ 0.0413 \\
		 & Linear & 0.7394 & 0.8654 $\pm$ 0.0252 & 0.7729 $\pm$ 0.0452 \\
		 & $\exp(q)$ & 0.7358 & 0.8626 $\pm$ 0.0223 & 0.7731 $\pm$ 0.0413 \\
		 & $\exp(2 q)$ & 0.7338 & 0.8635 $\pm$ 0.0216 & 0.7629 $\pm$ 0.0406 \\
		 & $\exp(5 q)$ & 0.7300 & 0.8683 $\pm$ 0.0184 & 0.7375 $\pm$ 0.0358 \\
		\midrule
		\multirow{5}{*}{GSM8K$\times$R1-Qwen-1.5B} & Uniform & 0.8236 & 0.8979 $\pm$ 0.0206 & 0.8391 $\pm$ 0.0585 \\
		 & Linear & 0.8558 & 0.8982 $\pm$ 0.0171 & 0.9114 $\pm$ 0.0416 \\
		 & $\exp(q)$ & 0.8610 & 0.8997 $\pm$ 0.0139 & 0.9230 $\pm$ 0.0318 \\
		 & $\exp(2 q)$ & 0.8676 & 0.9003 $\pm$ 0.0138 & 0.9302 $\pm$ 0.0339 \\
		 & $\exp(5 q)$ & 0.8732 & 0.9012 $\pm$ 0.0144 & 0.9351 $\pm$ 0.0372 \\
		\midrule
		\multirow{5}{*}{MATH$\times$Qwen3-1.7B} & Uniform & 0.7007 & 0.8811 $\pm$ 0.0213 & 0.6747 $\pm$ 0.0406 \\
		 & Linear & 0.6943 & 0.8689 $\pm$ 0.0241 & 0.6823 $\pm$ 0.0481 \\
		 & $\exp(q)$ & 0.6920 & 0.8696 $\pm$ 0.0228 & 0.6712 $\pm$ 0.0457 \\
		 & $\exp(2 q)$ & 0.6807 & 0.8652 $\pm$ 0.0213 & 0.6589 $\pm$ 0.0427 \\
		 & $\exp(5 q)$ & 0.6570 & 0.8572 $\pm$ 0.0182 & 0.6293 $\pm$ 0.0323 \\
		\midrule
		\multirow{5}{*}{MATH$\times$R1-Qwen-1.5B} & Uniform & 0.5977 & 0.7993 $\pm$ 0.0202 & 0.4363 $\pm$ 0.0778 \\
		 & Linear & 0.6107 & 0.8010 $\pm$ 0.0173 & 0.4811 $\pm$ 0.0647 \\
		 & $\exp(q)$ & 0.6093 & 0.8016 $\pm$ 0.0156 & 0.4808 $\pm$ 0.0634 \\
		 & $\exp(2 q)$ & 0.6080 & 0.8007 $\pm$ 0.0144 & 0.4874 $\pm$ 0.0620 \\
		 & $\exp(5 q)$ & 0.6020 & 0.8005 $\pm$ 0.0164 & 0.4855 $\pm$ 0.0650 \\
		\bottomrule
	\end{tabular}
\end{table}

\vspace{3em}
\subsection{Score Type Ablation: Full Frontier-AUC Comparison}
\label{app:abla-score}

Table~\ref{tab:abla-score-full} confirms that Reward and Judge consistently dominate across configurations, while SC underperforms all other families on 9 of 10 cells. The MV baseline, which receives a frontier AUC by sweeping the abstention threshold over the discrete vote-fraction grid $\{k/m\}$, is frequently the lowest performer. These results collectively indicate that score separability is the primary driver of frontier quality.

\begin{table}[h]
	\centering
	\caption{Frontier AUC across all score types (main-paper + ablations) for all ten configurations.}
	\label{tab:abla-score-full}
	\small
	\begin{tabular}{llcccccc}
		\toprule
		\textbf{Dataset} & \textbf{Model} & \textbf{Reward} & \textbf{Judge} & \textbf{Perplexity} & \textbf{Ppl-Std} & \textbf{SC} & \textbf{MV} \\
		\midrule
		GSM8K & Qwen3-1.7B & \textbf{0.8401} & 0.8281 & 0.8206 & 0.8052 & 0.7497 & 0.7837 \\
		GSM8K & R1-Qwen-1.5B & \textbf{0.9062} & 0.9014 & 0.8918 & 0.8817 & 0.8481 & 0.8598 \\
		GSM8K & Gemma-4-E4B & \textbf{0.9321} & 0.9255 & 0.9212 & 0.9188 & 0.9071 & 0.9053 \\
		MATH & Qwen3-1.7B & \textbf{0.8157} & 0.8039 & 0.8157 & 0.8143 & 0.7388 & 0.7619 \\
		MATH & R1-Qwen-1.5B & 0.7126 & \textbf{0.7311} & 0.6791 & 0.6973 & 0.6072 & 0.6788 \\
		MATH & Gemma-4-E4B & 0.5347 & \textbf{0.5380} & 0.5072 & 0.5107 & 0.5087 & 0.4787 \\
		MATH-Hard & Qwen3-8B & 0.6540 & 0.6433 & \textbf{0.6566} & 0.6473 & 0.5769 & 0.5782 \\
		HotpotQA & Qwen3-1.7B & 0.6007 & 0.6224 & 0.6227 & 0.6192 & 0.5775 & \textbf{0.6342} \\
		HotpotQA & Gemma-4-E4B & 0.4879 & 0.4880 & 0.4832 & 0.4880 & \textbf{0.4947} & 0.3373 \\
		HotpotQA & Qwen3-8B & 0.4961 & 0.5032 & \textbf{0.5077} & 0.4966 & 0.5030 & 0.3679 \\
		\bottomrule
	\end{tabular}
\end{table}

\end{document}